\documentclass{article}

\usepackage[numbers]{natbib}

 \usepackage[main, final]{neurips_2026}

\usepackage[utf8]{inputenc} %
\usepackage[T1]{fontenc}    %
\usepackage{hyperref}       %
\usepackage{url}            %
\usepackage{booktabs}       %
\usepackage{amsfonts}       %
\usepackage{nicefrac}       %
\usepackage{microtype}      %
\usepackage{xcolor}         %
\usepackage{booktabs}
\usepackage[table]{xcolor}

\usepackage{mathtools, amssymb, amsthm}
\usepackage{bbm}
\usepackage{subcaption}
\usepackage{pgf}
\usepackage{wrapfig}
\usepackage{enumitem}
\usepackage{thm-restate}

\theoremstyle{plain}
\newtheorem{theorem}{Theorem}

\theoremstyle{definition}
\newtheorem{definition}[theorem]{Definition}

\newtheorem{assumption}{Assumption}

\renewcommand{\eqref}[1]{equation~(\ref{#1})}

\newcommand{\z}{\zeta}
\newcommand{\one}{\mathbbm{1}}

\newcommand{\abs}[1]{\lvert #1 \rvert}

\newcommand{\magwt}{\mathbf{w}}

\newcommand\restr[2]{{%
  \left.\kern-\nulldelimiterspace %
  #1 %
  \vphantom{\big|} %
  \right|_{#2} %
  }}

\title{Generalization in Neural Networks \\ Through the Lens of Magnitude Potential}

\author{%
Sahel Torkamani\\
School of Informatics\\
University of Edinburgh\\
\texttt{sahel.torkamani@ed.ac.uk} \\
\And 
Henry Gouk \\
School of Informatics\\
University of Edinburgh\\
\texttt{henry.gouk@ed.ac.uk} \\
\And 
Rik Sarkar \\
School of Informatics\\
University of Edinburgh\\
\texttt{rsarkar@inf.ed.ac.uk} \\
}

\begin{document}

\maketitle

\begin{abstract}

Explaining generalization and training dynamics in neural networks remains a challenge, and various approaches have been developed to study different aspects of these phenomena. In this paper, we introduce the idea of {\em magnitude potential} -- a quantity based on the theory of metric magnitude -- that reflects how well an arbitrary point is represented by a given set. We find that this basic quantity can be applied to examine various features in neural generalization. The ratio between the magnitude potential with respect to a class and with respect to the entire data, computed at the logit layer, is informative of the representation of the point. In experiments, these ratios for individual training points are found to be correlated with the Feldman memorization scores. Magnitude potential ratios aggregated across points detect structural changes in the decision boundaries and provide a geometric indicator of grokking in modular arithmetic. Although the magnitude potential ratio and neural collapse are both closely associated with intra-class and inter-class geometric structure, the magnitude potential ratio remains informative even when neural collapse is explicitly suppressed.

\end{abstract}

\section{Introduction}

Neural networks have become popular due to their excellent generalization ability, but the mechanisms behind this generalization remain a major area of study. The behavior of overparameterized neural networks is difficult to analyze using traditional statistical learning theory~\cite{belkin2019reconciling}. These networks regularly memorize certain volumes of data, and indeed, memorization has been argued to be an essential component for generalization in these networks~\cite{feldman2020does}. With continued training, large neural networks are capable of memorizing entire training datasets~\cite{yu2024generalizablity, bonnaire2025why, gu2025on}.  

More recently, it has been found that in domains where generalization to test datasets is difficult, prolonged training past memorization can cause the sudden emergence of generalization. This phenomenon, called {\em grokking}, was first shown for algorithmically generated modular arithmetic datasets~\cite {power2022grokking} and then observed in more general data~\cite{liu2023omnigrok, humayun2024deep}. Various approaches based on network weight decay~\cite{liu2023omnigrok}, the geometry of loss landscapes~\cite{han2025flatness}, statistical physics~\cite{zhang2026is}, and other approaches have been used to develop our understanding of neural network training behavior.  

Meanwhile, metric magnitude~\cite{leinster2013magnitude} has been shown to provide insights into closely related work attempting to understand different aspects of network training. Examples include generalization~\cite{andreeva2023metric,andreeva2024topological}, diversity of latent representations~\cite{limbeck2024metric}, and quantifying distances between datasets~\cite{torkamani2026magnitude}. In this paper, we show that an extension of the metric magnitude called \emph{Magnitude Potential} can be used to study and predict a range of phenomena in training dynamics. 
For a given dataset $X$ and a test data point $z$, the magnitude potential, $\rho^t(z, X)$, can be seen as the degree to which $z$ is represented within $X$ at scale $t$.
It is a continuous Lipschitz function applicable to both training and unseen points.
The ratio of the potential at $z$ relative to points in its own class, to its potential relative to the entire dataset, provides a per-sample geometric signal that is closely related to memorization, and captures training phase transitions when aggregated across 
the dataset.

\paragraph{Our contributions.} 
Metric Magnitude~\cite{leinster2008euler} is a measure of the size or effective number of points in a metric space. For an embedded finite point set $X\in \mathbb{R}^D$ such as is often used in machine learning, it assigns a weight $\magwt(x)$ to every vertex $x\in X$, roughly representing the distinctness of $x$ within $X$, and $\mathrm{Mag}(X)$ is the sum of these weights. The weights are computed based on a  similarity measure $\zeta(x_i, x_j)=\exp(-d(x_i, x_j))$, and satisfy that at any $x_i$, the sum $\sum_{x_j}\zeta(x_i, x_j)\magwt(x_j) = 1$. 

{\em Magnitude potential} is an extension of this idea to all points $z\in \mathbb{R}^D$, so that the potential at $z$ is $\rho(z, X) = \sum_{x_j}\zeta(z, x_j)\magwt(x_j)$. It can be seen as the influence of all points of $X$ that is felt at $z$. The following intuition for this potential is given by Meckes~\cite{meckes2015magnitude}. Suppose we have a type of ``electric charge'' such that a unit charge at $x$ creates a potential of $\zeta(z, x)$ at $z$. The points $X$ are connected to a ground from which charge can flow; they each have a charge of $\magwt_x$ such that the potential at $x$ is exactly $1$. Then the potential at any $z\in \mathbb{R}^D$ is given by $\rho(z, X)$.

We observe that in the machine learning context, this function can be used as a degree of coverage or representation of $z$ by $X$. Unlike simple coverage with unit balls, we prove in Section~\ref{sec:mp} that the magnitude potential describes a {\em soft coverage} score that decays gracefully with distance from $X$. At the same time, compared to kernel density estimation, the magnitude potential is less sensitive to high-density clusters and can incorporate low-density groups and isolated points in a meaningful way to compute representation. This idea is shown in Figure~\ref{fig:mp_vs_kde}. The kernel density of a large cluster can overwhelm a sparser cluster. But magnitude potential allocates larger class scores to sparser clusters and even isolated points. This behavior is similar to outputs of neural networks that can fit to outliers and isolated points. We prove that the magnitude potential is Lipschitz continuous and theoretically well-behaved in limiting regimes: its variance vanishes under within-class variance collapse, and it recovers nearest-neighbor structure when data points scatter.

\begin{figure}[t]
     \centering
     \begin{subfigure}[b]{0.24\textwidth}
         \centering
         \includegraphics[width=\textwidth]{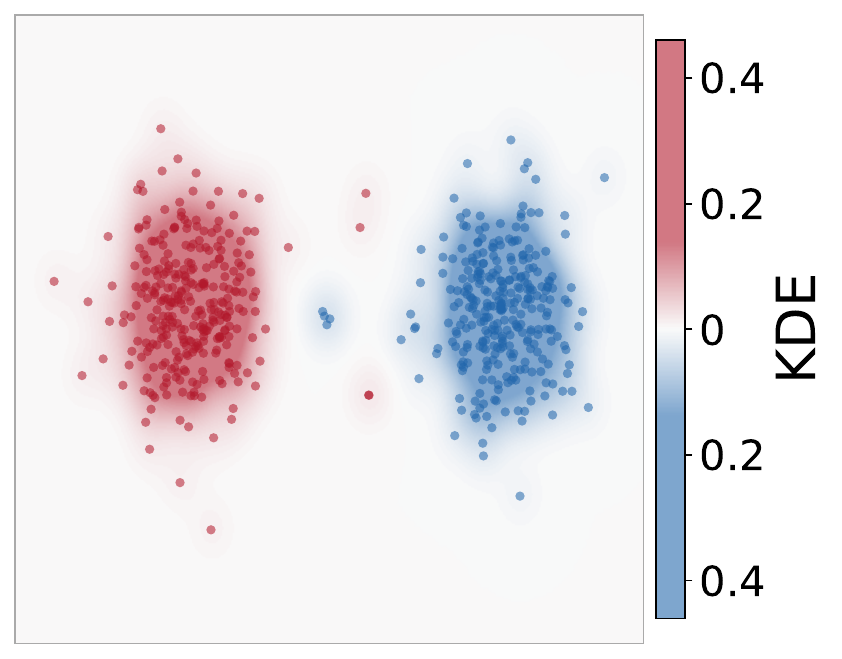}
         \caption{KDE within-class}
         \label{fig:kde_within}
     \end{subfigure}
     \hfill
     \begin{subfigure}[b]{0.24\textwidth}
         \centering
         \includegraphics[width=\textwidth]{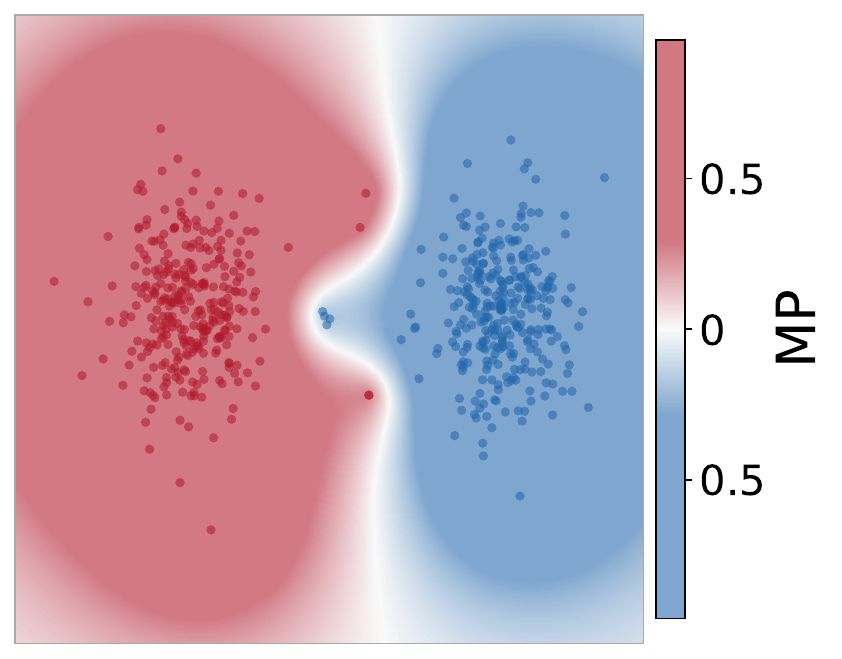}
         \caption{MP ($t = 1.0$)}
         \label{fig:mp_within}
     \end{subfigure}
     \hfill
     \begin{subfigure}[b]{0.24\textwidth}
         \centering
         \includegraphics[width=\textwidth]{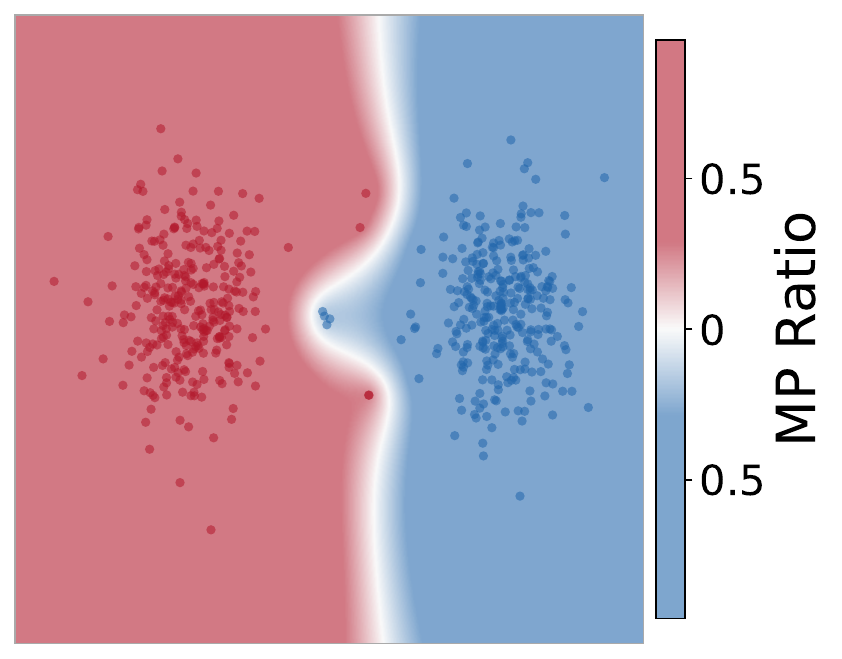}
         \caption{MP ratio ($t = 1.0$)}
         \label{fig:mpratio}
     \end{subfigure}
     \hfill
     \begin{subfigure}[b]{0.24\textwidth}
         \centering
         \includegraphics[width=\textwidth]{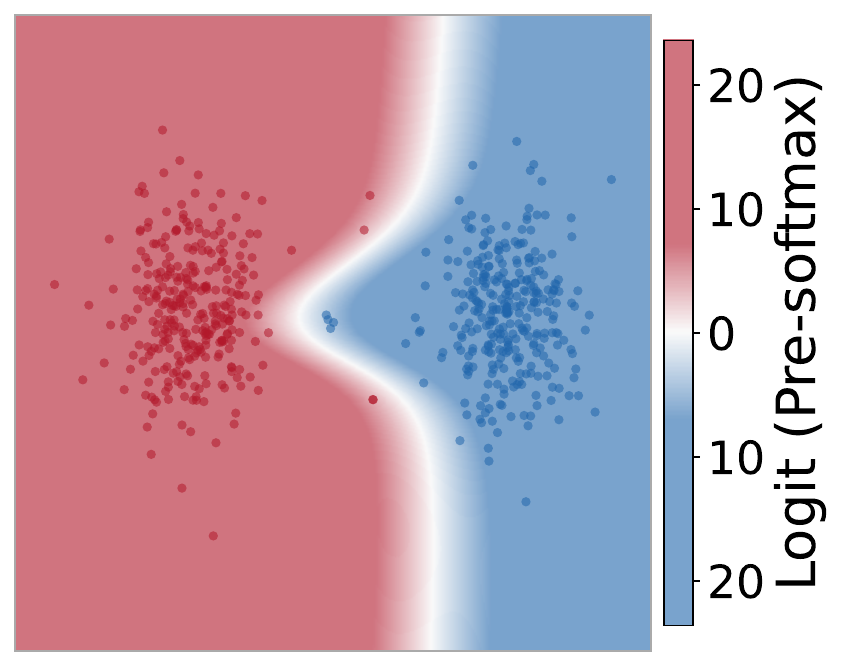}
         \caption{Logit (Pre-softmax)}
         \label{fig:logit}
     \end{subfigure}
     \caption{
     Comparison of KDE and MP. Each class has a cluster and a few sparse points at the boundary.
     {\bf (a)}: KDE is computed for each class, which fails to capture the sparse boundary points as it is dominated by the dense cluster.
     {\bf (b)}: MP is computed for each class, and the largest value is shown, which has a higher score at the outlier points, and shows a decision boundary similar to a neural network in (d). 
     {\bf (c)}: MP ratio, which inherits this sensitivity and bends around the sparse boundary samples.
     {\bf (d)}: the pre-softmax logit of a high-accuracy model, showing similar decision boundaries.
     }
     \label{fig:mp_vs_kde}
\end{figure}

\begin{figure}[t]
    \centering
    \includegraphics[width=\linewidth, trim=20pt 15pt 58pt 58pt, clip]{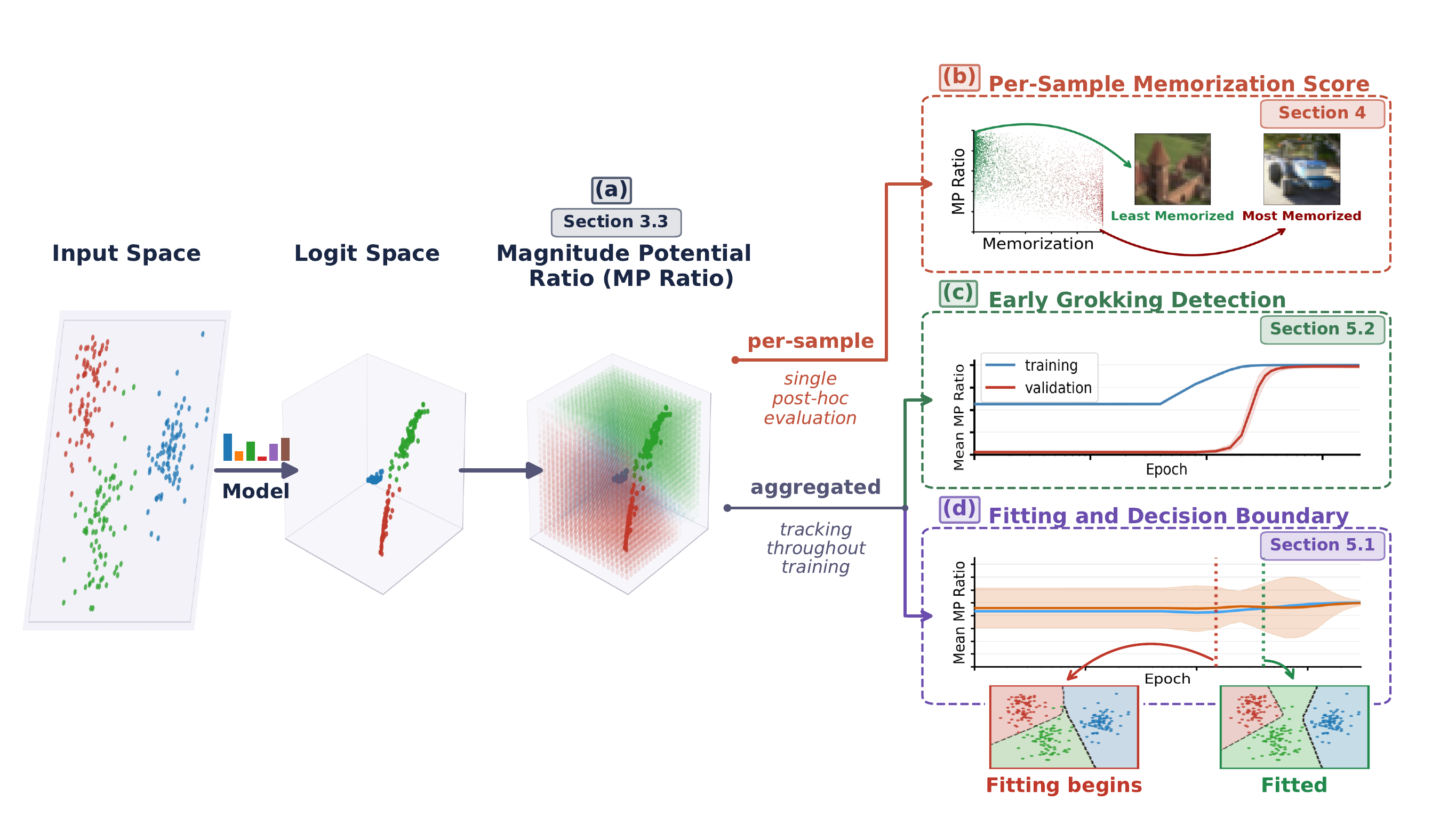}
    \caption{magnitude potential ratio in training dynamics.
  \textbf{(a)}~MP ratio is computed in the logit-layer output space.
  \textbf{(b)}~MP ratio correlates (negatively) with memorization score: lower MP ratio suggests memorization.
  \textbf{(c)}~ Training MP ratio mean indicates grokking prior to generalization being achieved.
  \textbf{(d)}~The validation MP ratio mean and standard deviation track decision boundary changes as the model fits.
}
    \label{fig:concept}
\end{figure}

Next, we describe the {\em Magnitude Potential Ratio} as a ratio between the magnitude potential of $z$ with respect to its own class, and the magnitude potential of $z$ with respect to the entire dataset. Section~\ref{sec:mp} 
shows that points that are heavily memorized tend to be ones with a low magnitude potential ratio. The memorization of points is computed via the memorization score defined in~\cite{feldman2020neural}. 

In our analysis of training dynamics, the magnitude potential ratio is computed on the space of logit values of training and validation data points (Figure~\ref{fig:concept}). In Section~\ref{sec:training_dynamics}, the magnitude potential ratio is shown to undergo changes in sync with training dynamics. In a simple 2D dataset, the standard deviation of the magnitude potential is seen to first drop at points where the decision boundaries transition from linear separators to class-cluster specific boundaries and then rise again when the model starts to overfit (Section~\ref{sec:fitting}). 

In grokking on modular arithmetic data, the increase in the mean of the training MP ratio indicates a geometric signal consistent with the other leading indicators of model generalization, preceding the validation accuracy transition.
At close to 100\% validation accuracy, the standard deviation of MP ratio for both training and validation sets decreases, and the mean value stabilizes (Section~\ref{sec:grokking}). 
These results are seen to hold independent of neural collapse (Section~\ref{sec:robustness_nc}).
Our code is publicly available at \url{https://github.com/saheltorkamani/Magnitude_Potential}.

\section{Preliminaries}\label{sec:prelim}

Magnitude was originally defined as an Euler characteristic for enriched categories~\cite{leinster2008euler}. For a finite metric $(X,d)$, it is defined~\cite{leinster2013magnitude} as follows: 

\begin{definition}[Metric Magnitude]\label{def:finite_magnitude}
A weighting of a metric $(X,d)$ is a function $\magwt: X \to \mathbb{R}$ satisfying $\sum_{j \in X} \zeta(x_i,x_j) \, \magwt(x_j) = 1$, with $\zeta(z_1,z_2):=\exp(-d(z_1,z_2))$,  for every $x_i \in X$,
where $\magwt(x_i)$ is called the \emph{magnitude weight}. 
The magnitude of $(X,d)$ is defined as $ \mathrm{Mag}(X,d) \coloneqq \sum_{x_i \in X} \magwt(x_i)$.
\end{definition}

Restricted to a finite $X\subset \mathbb{R}^D$ and the inherited metric, the similarity function $\zeta$ can be written as a positive definite (and invertible) matrix $\zeta_X$. In this case, a weighting $\magwt_X = \zeta_X^{-1}\one$ always exists, where where $\one$ is the $\abs{X} \times 1$ column vector of all ones. Magnitude is the sum of all the entries of the weighting vector, i.e., $Mag(X) \coloneqq \one^\ast \magwt_X = \one^\ast \z_X^{-1} \one$. ($d$ is omitted where understood from context.)

Magnitude can be defined with respect to a scale $t \in \mathbb{R}_{+}$, and a \emph{scaled metric space} $(tX, d_t)$, often denoted by $tX$, where $d_t(x,y) \coloneqq t \cdot d(x,y)$.
The corresponding magnitude weighting vector and similarity function are denoted by $\magwt^t_X$, and $\zeta^t$, respectively.

\section{Magnitude Potential Ratio: Definition and Theory}\label{sec:mp}

Since we are interested in finite subspaces of Euclidean spaces (and assume all points are distinct), we can assume that the similarity matrix is positive definite. See Appendix~\ref{app:assumptions} for details.

\begin{definition}[Magnitude Potential (MP)~\cite{meckes2015magnitude}]
For a finite set $X\subset \mathbb{R}^D$ and any point $z \in  \mathbb{R}^D$, the magnitude potential of $z$ with respect to $X$ at scale parameter $t \in \mathbb{R}_+$ is defined as
\begin{equation} \label{eq:def_mp}
    \rho^t(z, X) = \sum_{x \in X} \zeta^t(x,z) \magwt^t_X(x)
\end{equation}
where $\magwt^t_X(x) \in  \mathbb{R}^D$ are the magnitude weighting of data point $x$ within the dataset $X$. As all terms implicitly depend on the same $t$, we often suppress $t$ from the notation.
\end{definition}
Since $\zeta(x,y) = e^{-t \cdot d(x,y)}$ is a positive definite kernel, it defines a reproducing kernel Hilbert space (RKHS)~\cite{aronszajn1950theory}.
The RKHS $\mathcal{H}$ on $X$, is the completion of the linear span of the functions $k_x(\cdot) = \zeta(x, \cdot)$ with respect to the inner product, i.e. $\langle k_x, k_y \rangle_{\mathcal{H}} = \zeta(x, y)$.
Since $X$ is finite and hence compact, by \citet[Theorem~4.1]{meckes2015magnitude} $\rho^t(z, X)$ is the unique 
minimal-norm function in $\mathcal{H}_\zeta$ interpolating $f \equiv 1$ on $X$. That is, the MP is the smoothest function that assigns $1$ to every point in $X$, with its squared norm equal to the magnitude of $X$~\cite[Theorem 3.5]{leinster2016magnitude}.
When the magnitude potential $\rho(z,X) \approx 1$, the point $z$ is well represented by the points of $X$ at scale $t$, and when $\rho(z,X) \approx 0$, it is far from $X$. 

The following theorems state some continuity and limiting properties of the magnitude potential. 

\begin{restatable}{theorem}{Continuity}[Lipschitz Continuity of Magnitude Potential]\label{theo:continuity}
    For a fixed $X \subset \mathbb{R}^D$ and and scale parameter $t \in \mathbb{R}_{+}$, magnitude potential $\rho^t(z, X)$ is Lipschitz continuous in $z \in \mathbb{R}^D$:
    \begin{equation}
    | \rho ^t (z_1, X) - \rho ^t (z_2, X) | \leq t \cdot d(z_1,z_2)\cdot \|\magwt_X\|_1 \qquad z_1, z_2 \in \mathbb{R}^D. 
    \end{equation}
\end{restatable}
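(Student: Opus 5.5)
The plan is to work directly from the definition in \eqref{eq:def_mp}. The weights $\magwt^t_X$ depend only on $X$ and $t$, not on the evaluation point $z$. Hence the difference of potentials is a fixed linear combination of differences of kernel values:
\begin{equation*}
\rho^t(z_1,X)-\rho^t(z_2,X)=\sum_{x\in X}\magwt^t_X(x)\bigl(\zeta^t(x,z_1)-\zeta^t(x,z_2)\bigr).
\end{equation*}
Applying the triangle inequality bounds the absolute value by $\sum_{x\in X}\abs{\magwt^t_X(x)}\,\abs{e^{-t d(x,z_1)}-e^{-t d(x,z_2)}}$. This already has the shape of the claimed bound, so the whole argument reduces to a uniform Lipschitz estimate on each kernel term.

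For each fixed $x$, I will show that $\abs{e^{-t d(x,z_1)}-e^{-t d(x,z_2)}}\le t\,d(z_1,z_2)$. There are two steps.
\begin{enumerate}
\item The scalar map $s\mapsto e^{-s}$ on $[0,\infty)$ is $1$-Lipschitz, since its derivative $-e^{-s}$ has absolute value at most $1$ there; this follows from the mean value theorem. Consequently
\begin{equation*}
\abs{e^{-t d(x,z_1)}-e^{-t d(x,z_2)}}\le t\,\abs{d(x,z_1)-d(x,z_2)}.
\end{equation*}
\item By the reverse triangle inequality for the metric $d$ (here the Euclidean one), $\abs{d(x,z_1)-d(x,z_2)}\le d(z_1,z_2)$.
\end{enumerate}
Combining the two steps gives the per-term estimate, uniformly in $x$.

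Substituting this estimate back and factoring out $t\,d(z_1,z_2)$ leaves $\sum_{x}\abs{\magwt^t_X(x)}=\|\magwt_X\|_1$, which is exactly the stated inequality. Here $\magwt_X$ is understood as the scale-$t$ weighting vector $\magwt^t_X$, which exists and is unique by positive definiteness of $\zeta_X$, as noted in Section~\ref{sec:prelim}.

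I do not expect a real obstacle. The only points needing care are that the weights may be negative, which is why the $\ell^1$ norm rather than the magnitude $\sum_x \magwt(x)$ appears, and that the Lipschitz constant of $e^{-s}$ is used only on the nonnegative half-line, where it equals $1$.
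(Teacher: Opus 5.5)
Your proposal is correct and follows essentially the same route as the paper's proof: write the difference as a sum with $z$-independent weights, apply the triangle inequality, and bound each kernel difference using the Lipschitz property of the exponential, $|e^{-ta}-e^{-tb}|\le t|a-b|$, together with the reverse triangle inequality $|d(x,z_1)-d(x,z_2)|\le d(z_1,z_2)$. You also make explicit that this Lipschitz bound is applied only to nonnegative arguments $t\,d(x,z)$, a point the paper leaves implicit.
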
 

With $t$ at limits of $0$ and $\infty$, magnitude potential has a set of natural properties: 

\begin{restatable}{theorem}{ScaleLimit}[Scale Limits of Magnitude Potential]\label{theo:scale_limit}
    Let $X \subset \mathbb{R}^D$ 
    and $t \in \mathbb{R}_{+}$.
    At small scale, as $t \to 0$, we have $\rho^t(z, X) \to 1$ for all $z \in \mathbb{R}^D$, and consequently $\operatorname{Var}_z[\rho^t(z,X)] \to 0$.
    At large scale, as $t \to \infty$, we have $\frac{\rho^t(z, X)}{\zeta(x^*, z)} \to 1$ for $z \notin X$ where $x^* = \arg\min_{x \in X} d(x,z)$, as all non-nearest-neighbour kernel values decay to zero exponentially faster.
    \end{restatable}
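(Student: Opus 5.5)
The plan is to treat the two limits separately. For $t\to 0$ we use the RKHS characterisation recalled above, which avoids inverting the near-singular matrix $\zeta^t_X$. For $t\to\infty$ we use a perturbation of the identity matrix.

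\emph{Small scale.} Write $h_t=\rho^t(\cdot,X)=\sum_{x}\magwt^t_X(x)k_x$. By \citet[Theorem~4.1]{meckes2015magnitude} and \cite[Theorem 3.5]{leinster2016magnitude}, $h_t$ is the minimal-norm interpolant of $1$ on $X$, with $\|h_t\|_{\mathcal H}^2=\mathrm{Mag}(tX)$. Fix any $x_0\in X$, so that $h_t(x_0)=1$. By the reproducing property and Cauchy--Schwarz,
\begin{equation*}
|\rho^t(z,X)-1| = |\langle k_z-k_{x_0},h_t\rangle_{\mathcal H}| \le \|k_z-k_{x_0}\|_{\mathcal H}\,\sqrt{\mathrm{Mag}(tX)} = \sqrt{2-2e^{-t\,d(z,x_0)}}\,\sqrt{\mathrm{Mag}(tX)}.
\end{equation*}
The first factor is at most $\sqrt{2t\,d(z,x_0)}\to 0$. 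For the second factor I would invoke the known one-point limit $\mathrm{Mag}(tX)\to 1$ as $t\to 0$ for finite metric spaces (Leinster). Alternatively, for positive definite $X$, the bound $\mathrm{Mag}(tX)\le |X|$ suffices, since only boundedness is needed here.

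This gives $\rho^t(z,X)\to 1$ pointwise, uniformly on bounded sets of $z$. For the variance, take $z$ distributed on a bounded set, or with a finite first moment of $d(z,x_0)$. Then $(\rho^t-1)^2\le 2t\,d(z,x_0)\,\mathrm{Mag}(tX)$, so $\mathbb E[(\rho^t-1)^2]\to 0$, and hence $\operatorname{Var}_z[\rho^t]\to 0$. This is the step I expect to be delicate. The naive route of expanding $\zeta^t_X=J+O(t)$ fails because $J$ is singular and the weights $\magwt^t_X$ may blow up individually. The Cauchy--Schwarz bound sidesteps this, because only the norm $\mathrm{Mag}(tX)$ of the weighting enters.

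\emph{Large scale.} Let $\delta=\min_{x\neq y\in X}d(x,y)>0$. Then $\zeta^t_X=I+E_t$ with $\|E_t\|_\infty\le(|X|-1)e^{-t\delta}\to 0$. A Neumann series gives $\magwt^t_X=(I+E_t)^{-1}\one\to\one$, with each weight lying in $[1-\epsilon_t,1+\epsilon_t]$ for some $\epsilon_t\to 0$. For $z\notin X$ with nearest neighbour $x^*$, dividing the definition in \eqref{eq:def_mp} by $\zeta^t(x^*,z)$ gives
\begin{equation*}
\frac{\rho^t(z,X)}{\zeta^t(x^*,z)}=\magwt^t_X(x^*)+\sum_{x\neq x^*}e^{-t\,(d(x,z)-d(x^*,z))}\,\magwt^t_X(x).
\end{equation*}
The first term tends to $1$. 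When $x^*$ is the unique minimiser, every gap $d(x,z)-d(x^*,z)$ is positive, so each summand decays exponentially while the weights stay bounded; this is the ``exponentially faster'' decay in the statement. I would state explicitly that $x^*$ is assumed unique, which holds for Lebesgue-almost every $z$. If there are $m$ tied nearest neighbours, the same computation shows that the ratio tends to $m$ instead.
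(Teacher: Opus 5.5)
Your proof is correct. For $t\to\infty$ it follows the paper's argument: $\zeta^t_X\to I$, hence $\magwt^t_X\to\one$, and each non-nearest term carries a factor $e^{-t\delta_x}$. Your Neumann-series bound only makes the convergence of the weights quantitative. Your remark on ties is a real correction to the statement. The paper's proof asserts $\delta_x=d(x,z)-d^*>0$ for every $x\neq x^*$, which presupposes a unique nearest neighbour. With $m$ tied neighbours the ratio tends to $m$, not $1$.

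For $t\to0$ you take a genuinely different route. The paper only notes that every kernel value $\zeta^t(x,z)\to1$ and refers to the collapse argument, i.e.\ $\rho^t(z,X)\approx\sum_x\magwt^t_X(x)=\mathrm{Mag}(tX)\to1$. Passing from kernel values to the potential this way tacitly requires the individual weights to stay bounded while $\zeta^t_X$ degenerates to the singular matrix $J$. The paper also states the variance claim without saying what $z$ is distributed over.

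Your Cauchy--Schwarz bound avoids both problems:
\begin{itemize}
\item It needs only $\|h_t\|_{\mathcal H}^2=\magwt^\top\zeta_X\magwt=\mathrm{Mag}(tX)$ to stay bounded, and that identity follows from $\zeta_X\magwt=\one$ alone.
\item It gives an explicit $O(\sqrt t)$ rate, uniform on bounded sets.
\item It makes the variance statement precise. Your own estimate also gives $|\rho^t(z,X)-1|\le\sqrt{2\,\mathrm{Mag}(tX)}$ uniformly in $z$, because $2-2e^{-s}\le2$. Dominated convergence then yields $\operatorname{Var}_z[\rho^t]\to0$ for any distribution of $z$, so the moment assumption can be dropped.
\end{itemize}

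Two small corrections remain. First, the fact you want is the one-point property for compact subsets of Euclidean space (Barcel\'o--Carbery), which covers finite $X\subset\mathbb{R}^D$. You should cite that rather than a statement about arbitrary finite metric spaces. Second, drop the alternative bound $\mathrm{Mag}(tX)\le|X|$, because it does not follow from positive definiteness. The graph metric of $K_{3,2}$ scaled by $t>\log\sqrt2$ is positive definite. Yet its magnitude $(5-7e^{-t})/\bigl((1+e^{-t})(1-2e^{-2t})\bigr)$ tends to $+\infty$ as $t\downarrow\log\sqrt2$, so it exceeds the five points there.
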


Theorem~\ref{theo:scale_limit} indicates that in a large $t$ scheme, where the data is scattered, the dominant behavior is entirely based on the distance to the nearest neighbour $x^*$.
Moreover, as $t$ goes to zero, the variance of the dataset itself collapses. The following corollary formalizes the behavior of the magnitude potential in the case where the variance approaches $0$ for the points within $X$:

\begin{restatable}{corollary}{PotentialUnderCollapse}[Magnitude Potential under Collapse]
\label{theo:potential_collapse}
    Let $X \subset \mathbb{R}^D$ be a finite set with positive definite similarity matrix (i.e., $\zeta_X \succ 0$) with mean $ \mu = \frac{1}{|X|}\sum_{x \in X} x$, and within-set variance $V_X := \frac{1}{|X|}\sum_{x \in X} d(x,\mu)^2$. 
    For any $z \in X$, define the leave-one-out set as $X^{(z)} = X \setminus \{z\}$.
    Fix $t > 0$. As $V_X \to 0$ while $\zeta_X \succ 0$,
    \begin{equation}\label{eq:within_class_variance}
        \operatorname{Var}_{z \in X}\bigl[\rho^t(z, X^{(z)})\bigr] \to 0,  \quad  \rho^t(z, X^{(z)}) \to 1 \quad \forall z \in X.
    \end{equation}
    and, $\rho^t(z, X) \to \exp(-t \cdot d(z,\mu))$ for all  $z \notin X$.
\end{restatable}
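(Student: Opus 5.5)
The plan is to avoid inverting $\zeta_X$ directly. As $V_X \to 0$ the matrix $\zeta_X$ tends to the singular all-ones matrix, so the weights $\magwt_X$ (and $\|\magwt_X\|_1$) may blow up, and any entrywise perturbation argument on $\zeta_X^{-1}\one$ is hopeless. I would instead work in the RKHS $\mathcal{H}$ of $\zeta^t$ and use the characterization recalled above, via \citet[Theorem~4.1]{meckes2015magnitude} and \cite[Theorem 3.5]{leinster2016magnitude}. There, $h_X := \rho^t(\cdot, X)$ is the minimal-norm element of $\mathcal{H}$ equal to $1$ on $X$, with $\|h_X\|_{\mathcal{H}}^2 = \mathrm{Mag}(tX)$. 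Every evaluation is an inner product, $h_X(z) = \langle k_z, h_X\rangle$, and $\|k_z\|_{\mathcal{H}}=1$.

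\textbf{Step 1 (points collapse to $\mu$).} Set $\varepsilon := \max_{x\in X} d(x,\mu)$. Since $d(x,\mu)^2 \le |X|\, V_X$ for every $x$, we get $\varepsilon \le \sqrt{|X| V_X} \to 0$ for fixed $|X|$. Consequently $\|k_x - k_\mu\|_{\mathcal{H}}^2 = 2 - 2e^{-t d(x,\mu)} \le 2(1-e^{-t\varepsilon}) \to 0$ uniformly in $x\in X$.

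\textbf{Step 2 (magnitude tends to $1$).} I would show $\mathrm{Mag}(tX) \to 1$. The lower bound $\mathrm{Mag}(tX)\ge 1$ holds for nonempty positive definite spaces, e.g.\ from $\mathrm{Mag} = \sup_{v}(\one^\ast v)^2/(v^\ast\zeta_X v)$ with $v$ a point mass. For the upper bound, $X$ lies in the closed Euclidean ball $B(\mu,\varepsilon)$. Magnitude is monotone under inclusion for positive definite (in particular Euclidean) spaces, and the magnitude of a Euclidean ball of radius $t\varepsilon \to 0$ tends to $1$ (Barceló--Carbery / Meckes). Hence $\mathrm{Mag}(tX) \le \mathrm{Mag}(tB(\mu,\varepsilon)) \to 1$. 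I expect this to be the main obstacle, since it is the only place needing an external result about magnitude beyond finite sets. If I want to avoid citing it, a fallback is to exhibit directly an interpolant $g\in\mathcal{H}$ of $1$ on $X$ with $\|g\|^2\to 1$. One could try $g = k_\mu + $ a small correction built from a smooth bump in the Sobolev-type norm of $\mathcal{H}$, whose norm is controlled because the correction must fix the values $1-e^{-td(x,\mu)} = O(t\varepsilon)$ at the points of $X$.

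\textbf{Step 3 ($h_X \to k_\mu$ in norm).} First, $|h_X(\mu) - 1| = |\langle k_\mu - k_x, h_X\rangle| \le \|k_\mu - k_x\|\sqrt{\mathrm{Mag}(tX)} \to 0$ for any $x\in X$. Then
\[
\|h_X - k_\mu\|_{\mathcal{H}}^2 = \mathrm{Mag}(tX) - 2h_X(\mu) + 1 \to 1 - 2 + 1 = 0 .
\]
Hence for every $z\in\mathbb{R}^D$ we have $|\rho^t(z,X) - e^{-t d(z,\mu)}| = |\langle k_z, h_X - k_\mu\rangle| \le \|h_X - k_\mu\| \to 0$, which is the claim for $z\notin X$. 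The convergence is in fact uniform in $z$.

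\textbf{Step 4 (leave-one-out).} For each $z\in X$, apply Steps 1--3 to $X^{(z)}$. Its points all lie within $\varepsilon$ of $\mu$ (it is harmless that its own mean differs from $\mu$, since the argument only used $X^{(z)}\subset B(\mu,\varepsilon)$), and the positive definite hypothesis is inherited by subsets. This gives $\rho^t(z, X^{(z)}) \to e^{-t d(z,\mu)} \to 1$, because $d(z,\mu)\le\varepsilon\to 0$. Since $X$ is finite, the convergence is uniform over $z\in X$. The empirical variance over $z\in X$ of quantities all converging to $1$ then tends to $0$, completing the proof.
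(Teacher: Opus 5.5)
Your proof is correct, and it takes a different route from the paper's. The paper works directly on the weighted sum. Since every $\zeta(x,z)$ tends to the common value $e^{-t d(z,\mu)}$, it replaces them all by that value and pulls out $\sum_x \magwt_X(x) = \mathrm{Mag}$. It then asserts $\mathrm{Mag}\to 1$ because the points merge into a single location, leaning on the duplicate-point extension of magnitude cited in the appendix. That argument is shorter, but it has two unstated steps. First, it needs $\sum_x \lvert \zeta(x,z) - e^{-t d(z,\mu)}\rvert\,\lvert \magwt_X(x)\rvert \le t\varepsilon\,\|\magwt_X\|_1 \to 0$, i.e.\ control of the weights while $\zeta_X$ approaches the singular all-ones matrix. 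Second, it treats continuity of magnitude at the degenerate collapsed configuration as automatic.

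Your Hilbert-space argument avoids both issues:
\begin{itemize}
\item The identity $\|h_X - k_\mu\|_{\mathcal{H}}^2 = \mathrm{Mag}(tX) - 2h_X(\mu) + 1$, together with Cauchy--Schwarz, never looks at individual weights.
\item It reduces everything to $\limsup \mathrm{Mag}(tX)\le 1$. The lower bound $\mathrm{Mag}\ge 1$ is actually superfluous, since the left-hand side is nonnegative.
\item It gives a bound uniform in the evaluation point. This is what makes the leave-one-out case clean, even though $z$ and $\mu$ move with the configuration and the mean of $X^{(z)}$ differs from $\mu$.
\end{itemize}
The price is that the one nontrivial input, $\mathrm{Mag}(tX)\to 1$, has to be imported from two results: monotonicity of magnitude for positive definite spaces, and the Barcel\'o--Carbery small-scale limit for Euclidean balls. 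The paper simply asserts this step, so your proof also makes explicit where the real content lies.

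One small caveat: the leave-one-out claim requires $|X|\ge 2$. Otherwise $X^{(z)}=\emptyset$ and $\rho^t(z,X^{(z)})=0$, and your Step~3 implicitly uses a point $x\in X^{(z)}$.
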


\paragraph{Magnitude potentials with respect to training data $\mathcal{D}$.} We are interested in the MP with respect to the training set $\mathcal{D}$. For general points $z\in \mathbb{R}^D$, $\rho^t(z, \mathcal{D})$ suffices. When we need to understand how well a training point $x\in \mathcal{D}$ is represented within $\mathcal{D}$, $\rho^t(x, \mathcal{D})$ is always $1$ and uninformative. We thus use $\rho^t(x, \mathcal{D}\setminus \{x\})$ when examining potential of training points. The following theorem gives an analytic expression for it in terms of the regular magnitude weights so that it does not require recomputation of full magnitude weights each time.

\begin{restatable}{theorem}{mpWithinClass}[Leave One Out Magnitude Potential]\label{theo:mp_loo}
For any $x_r \in X$ and scale parameter $t \in \mathbb{R}_{+}$, by definition we have $\rho^t(x_r, X) = 1$. Furthermore, the magnitude potential of $x_r$ with respect to the set $X^{(x_r)}=X\setminus\{x_r\}$ is given by:
\begin{equation}
    \rho^t(x_r, X^{(x_r)}) = 1 - \frac{\magwt_X(x_r)}{[\zeta_X^{-1}]_{rr}}, \label{eq:mp_loo}
\end{equation}
where $\magwt_r$ is the magnitude weight of $x_r$ in $X$ and $[\zeta_X^{-1}]_{rr}$ is the $r$-th diagonal entry of $\zeta_X^{-1}$.
\end{restatable}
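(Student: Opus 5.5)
The plan is to use a block (Schur complement) decomposition of $\zeta_X$ with $x_r$ placed last. Permuting if necessary, write
\[
\zeta_X = \begin{pmatrix} A & b \\ b^\ast & 1 \end{pmatrix},
\]
where $A = \zeta_{X^{(x_r)}}$ is the similarity matrix of the leave-one-out set, $b_j = \zeta(x_j, x_r)$ for $x_j \neq x_r$, and the diagonal entry is $\zeta(x_r,x_r)=1$. Both $A$ and $\zeta_X$ are positive definite (the former as a principal submatrix), so all inverses below exist. In this notation the leave-one-out potential is simply
\[
\rho^t(x_r, X^{(x_r)}) = b^\ast \magwt_{X^{(x_r)}} = b^\ast A^{-1}\one.
\]
The goal is to express this quantity in terms of $\zeta_X^{-1}$.

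Let $s = 1 - b^\ast A^{-1} b > 0$ be the Schur complement of $A$. The standard block-inverse formula gives
\[
[\zeta_X^{-1}]_{rr} = 1/s, \qquad \text{last row of } \zeta_X^{-1} = \bigl(-b^\ast A^{-1}/s,\; 1/s\bigr).
\]
The magnitude weight of $x_r$ in $X$ is the last entry of $\zeta_X^{-1}\one$, namely
\[
\magwt_X(x_r) = \frac{1 - b^\ast A^{-1}\one}{s}.
\]
Dividing, $\magwt_X(x_r)/[\zeta_X^{-1}]_{rr} = 1 - b^\ast A^{-1}\one$. Rearranging gives exactly equation~(\ref{eq:mp_loo}).

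The first claim, $\rho^t(x_r,X)=1$, is immediate. By definition~(\ref{eq:def_mp}), $\rho^t(x_r,X)$ is the $r$-th entry of $\zeta_X \magwt_X = \zeta_X\zeta_X^{-1}\one = \one$.

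An alternative route, which I might present instead because it avoids block notation, works directly with $\zeta_X$. Define the vector $v = (\magwt_{X^{(x_r)}}, 0)$. Then $\zeta_X v$ equals $1$ on the entries of $X^{(x_r)}$ and equals $\rho^t(x_r, X^{(x_r)})$ at position $r$. Hence
\[
\zeta_X v = \one - (1-\rho^t(x_r,X^{(x_r)}))\, e_r,
\]
so
\[
v = \magwt_X - (1-\rho^t(x_r,X^{(x_r)}))\,\zeta_X^{-1}e_r.
\]
Taking the $r$-th coordinate, where $v_r=0$, gives $0 = \magwt_X(x_r) - (1-\rho)\,[\zeta_X^{-1}]_{rr}$, which is the claim.

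There is no real obstacle. The only points needing care are, first, that $[\zeta_X^{-1}]_{rr}>0$ (true by positive definiteness, as assumed in Section~\ref{sec:mp}), so the division is legitimate; and second, that the weighting on $X^{(x_r)}$ is the unique one $A^{-1}\one$. I would use the second, coordinate-free argument as the main proof.
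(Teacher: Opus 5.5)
Your proof is correct, and your main (second) route is genuinely different from the paper's. Your first route shares the paper's block partition but runs the algebra in the opposite direction. The paper writes $\zeta_X^{-1}=\begin{pmatrix} P & q\\ q^\top & s\end{pmatrix}$; note its $s$ is $[\zeta_X^{-1}]_{rr}$, the reciprocal of your Schur complement. From this it derives $\zeta_{X^{(r)}}^{-1}=P-qq^\top/s$ and sums rows to obtain the full leave-one-out weight vector $\magwt_{X^{(r)}}(x_i)=\magwt_X(x_i)-\frac{[\zeta_X^{-1}]_{ir}}{[\zeta_X^{-1}]_{rr}}\magwt_X(x_r)$. It then substitutes this into the potential and evaluates two auxiliary sums using $\zeta_X\magwt_X=\one$ and $\zeta_X\zeta_X^{-1}=I$. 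Your first route needs only the last row of $\zeta_X^{-1}$ from the standard block-inverse formula and finishes with one division. Your second route avoids block algebra entirely: it observes that the zero-padded weighting $v$ fails to be a weighting for $X$ only at coordinate $r$, and uses $v_r=0$ to pin down the defect. That argument is the shortest of the three and gives up nothing the paper actually uses. Reading off the coordinates $i\neq r$ of $v=\magwt_X-(1-\rho^t(x_r,X^{(x_r)}))\,\zeta_X^{-1}e_r$ and substituting $1-\rho^t(x_r,X^{(x_r)})=\magwt_X(x_r)/[\zeta_X^{-1}]_{rr}$ recovers exactly the paper's leave-one-out weight formula as a by-product. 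The only extra the paper's longer computation delivers is the entrywise formula for $\zeta_{X^{(r)}}^{-1}$, which the theorem does not need. You also correctly justify $[\zeta_X^{-1}]_{rr}>0$ via positive definiteness, which the paper relies on without comment when it divides by $s$.
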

Theorem~\ref{theo:mp_loo} is essential to the efficiency of the method, as it computes the magnitude potential using only the inverse of matrix $\zeta_X$, rather than requiring the inversion of $\zeta_{X^{(z)}}$ for each $z \in X$.

\subsection{Magnitude Potential Ratio (MP ratio)}

We introduce the magnitude potential ratio to quantify the representation of the point within class $c$ versus the entire dataset:

\begin{definition}[Magnitude Potential Ratio $\mathcal{R}$]\label{def:mp_ratio}
For a finite set $X\subset \mathbb{R}^D$, let $X_c \subset X$ be the subset of $X$, i.e., points belonging to class $c$. The magnitude potential of a given data point $z \in \mathbb{R}^D$ with scale parameter $t \in \mathbb{R}_+$ is defined as
\begin{equation} \label{eq:def_mp_ratio}
   \mathcal{R}^t_{X, X_c}(z) = \frac{\rho^t(z, X^{(z)}_c )}{\rho^t(z, X^{(z)})}.
\end{equation}
Therefore, when $z \notin X_c$, and $z \notin X$ consequently, we have $ \mathcal{R}^t_{X, X_c}(z) = \frac{\rho^t(z, X_c )}{\rho^t(z, X)}$.
\end{definition}

Intuitively, the MP ratio quantifies whether the point lies within a class region or mixes with the rest of the classes. %
In case of training data point $z \in \mathcal{X}$, the MP ratio is computed using the leave-one-out form as described in the Definition~\ref{def:mp_ratio}. As described previously, this is due to the fact that the MP for points belonging to that set equals one, i.e., $\rho^t(z, \mathcal{X}) = 1$ by Theorem~\ref{theo:mp_loo}.
However, for each validation point $z \in \mathcal{V}$, since we have $\mathcal{X}^{(z)} = \mathcal{X}$ and $\mathcal{X}_c^{(z)} = \mathcal{X}_c$, the MP ratio is computed as $\mathcal{R}^t_{\mathcal{X}, \mathcal{X}_c}(z) = \frac{\rho^t(z, \mathcal{X}_c )}{\rho^t(z, \mathcal{X})}$.

We have already seen in Figure~\ref{fig:mp_vs_kde} that while somewhat similar to KDE, magnitude potential automatically corrects for redundancy by down-weighting dense clusters and up-weighting geometrically isolated points. This property is the geometric intuition of our work.  More comparison with KDE and $k$-nearest-neighbour can be seen in Appendix~\ref{sec:mp-kde-figs}.

In the experimental analysis in the following sections, we compute the MP ratio in the model's latent space, particularly the pre-softmax logit activations, denoted $\phi(\mathcal{X})$ and $\phi(\mathcal{V})$  for the training and validation sets, respectively. 
All distances $d(\cdot,\cdot)$ in Definition~\ref{def:mp_ratio} are therefore computed in this representation space. In the following, we demonstrate that $\mathcal{R}$ captures representational structure at a per-sample level, can be applied independently of architecture, and is sensitive to changes that are not detected easily by aggregate predictive quantities.

\section{MP ratio and memorization scores} \label{sec:per-sample_mem}
\citet{feldman2020does} defines the memorization value of a training data point $x_i$ as the marginal effect that removing the point from the training dataset has on the probability of classifying it correctly. Formally, it is defined as $\mathrm{mem}(A, X, i) = \Pr_{h \sim A(X)}[h(x_i) = y_i] - \Pr_{h \sim A(X^{(i)})}[h(x_i) = y_i]$.
Estimating this score empirically requires an expensive training process: one must train thousands of models~\cite{feldman2020neural}, including and excluding each point. MP ratio provides a per-sample score at any single checkpoint, requires no retraining, and is applicable to both training and validation (unseen samples) datasets.
Feldman's score, similar to MP Ratio, is typically non-negative, but MP Ratio is not bounded above by one. Despite the differences, both methods measure related quantities. We validate this connection empirically on the CIFAR-100 dataset.

\textbf{Dataset and memorization scores.}
We use CIFAR-100 \citep{krizhevsky2009learning}, a 100-class benchmark of
50,000 training images ($32{\times}32$).
Memorization scores are taken directly from \citet{feldman2020neural}, who
provide a per-sample score $\mu_i \in [0,1]$ measuring how much a training point is individually memorized by models trained on the full dataset. The estimates are computed by training 4,000 ResNet-50 models, each on a random $70\%$ subset of the full CIFAR-100 training set.

\textbf{Model.}
We use ResNet-50 \citep{he2016deep}, trained on the full 50,000 training points with the standard cross-entropy loss and no data augmentation or weight decay.
The model reaches $100\%$ training accuracy and $74.3\%$ validation accuracy at epoch $160$, at which point we extract logits for all training samples.

The MP ratio is computed for all 50,000 training points using a random subsample of $200$ data points per class ($40\%$ of the per-class budget of 500) as the set $X$ for the closed-form leave-one-out by Theorem~\ref{theo:mp_loo}. The MP ratio for the remaining points (both training and validation) is computed relative to the entire sub-sample sets without requiring the Leave-One-Out method. 
Table~\ref{tab:feldman_spearman} reports Spearman correlation ($\rho$) between MP ratio and Feldman memorization scores on CIFAR-100. 
The correlation is negative: a high MP ratio indicates strong geometric coverage by the training set, whereas a high memorization score indicates a point that the model has memorized rather than generalized.
Highlighted rows indicate absolute correlations greater than $0.70$, with the highest at $t = 0.2$.
Smaller $t$ values are more sensitive to the most memorized points.

As the model generalizes well for most points, the data shows a large cluster of low memorization and high MP ratio scores. And increasingly more memorised points -- naturally fewer in number -- are found at decreasing MP ratio scores (Figure~\ref{fig:feldman_kde}, Appendix~\ref{app:per_sample_mem}). 
The choice of the scaling parameter $t$ affects the sensitivity of the MP ratio to local geometry, whereas a larger $t$ better captures the least memorized points. 
Averaging over 10 independent random samples reduces noise and consistently improves correlation, as confirmed in Table~\ref{tab:feldman_spearman}. We additionally compare against four single-checkpoint baselines (max softmax probability, logit margin, entropy, and cross-entropy loss), reporting Spearman correlation and the fraction of the 500 most-memorized points by Feldman flagged by each measure.

\begin{table}[h]
    \centering
    \begin{tabular}{lcc}
        \toprule
        Measure & Spearman $\rho$ & Top-500 overlap \\
        \midrule
        Max softmax probability & $-0.507$ & 3.8\% (19/500) \\
        Logit margin             & $-0.655$ & 3.6\% (18/500) \\
        Entropy                  & $0.661$  & 3.8\% (19/500) \\
        Cross-entropy loss       & $0.507$  & 3.8\% (19/500) \\
        \midrule
        MP ratio ($t{=}0.05$, 10-seed avg) & $-0.620$ & 12.0\% (60/500) \\
        MP ratio ($t{=}0.10$, 10-seed avg) & $-0.703$ & 17.0\% (85/500) \\
        MP ratio ($t{=}0.15$, 10-seed avg) & $\mathbf{-0.751}$ & \textbf{17.6\% (88/500)} \\
        MP ratio ($t{=}0.20$, 10-seed avg) & $\mathbf{-0.763}$ & \textbf{17.6\% (88/500)} \\
        MP ratio ($t{=}0.25$, 10-seed avg) & $-0.675$ & 10.0\% (50/500) \\
        \bottomrule
    \end{tabular}
    \vspace{5pt}
    \caption{
    Spearman correlation between MP ratio and Feldman memorization scores on CIFAR-100 across multiple scales.
    Highlighted rows indicate the strongest correlations. Averaging over random seeds consistently improves correlation. 
    }
    \label{tab:feldman_spearman}
\end{table}

\section{Training Dynamics} \label{sec:training_dynamics}
Beyond per-sample scoring, the MP ratio, aggregated over the data points, tracks the evolution of the representational geometry throughout training. We compute MP ratio for all training and validation data points, both relative to the training dataset.
\subsection{Fitting and Decision Boundary} \label{sec:fitting}
We use a controlled 2D setting for direct visualization of the relationship between $\mathcal{R}$ and the decision boundary. 
The goal is to show that $\mathcal{R}$ detects the geometric transition from simple linear boundaries to smooth class-specific curves and also overfitting.

\textbf{Dataset:}
Three isotropic Gaussian classes in $\mathbb{R}^2$ ($K=3$, $d=2$), with 80 samples per class (240 total), split 70/30 into train/validation.

\textbf{Model:}
A 2-hidden-layer MLP with widths $d \to 64 \to 32 \to K$, ReLU activations.
Trained with Adam ($\eta=10^{-3}$, $\lambda=0$) using cross-entropy loss for 3\,000 epochs.

Throughout the training, the validation accuracy remains near $100\%$, however, the MP ratio detects a non-trivial structural change in the decision boundary and the overfitting phase. 

The model initially maintains simple, approximately linear decision boundaries that achieve high accuracy due to the well-separated class geometry, while the standard deviation of validation MP ratio remains steady. Then this standard deviation drops, while the mean MP ratio stays high indicating that the model has begun assigning uniformly high MP ratios to validation samples.
This change aligns with the decision boundaries switching into smooth, class-specific curves that tightly conform to each class region (Figure~\ref{fig:toy_boundaries}).
At this stage (epochs 150 to 250), the standard deviation is low, indicating a stable representation status.
After this stable phase (epoch 250), while the MP ratio remains high, the standard deviation increases. This increase indicates that the model's representation space assigns increasingly non-uniform MP ratios to validation points: a geometric signal of overfitting. Conceptual Figure~\ref{fig:concept} illustrates the same MP ratio mean and standard deviation with scaling $t= 1.0$.

Further experiments in Appendix~\ref{app:toy_dataset} compare the magnitude potential with the accuracy, weighting norms, and relative flatness proposed by~\citet{petzka2021relative}.
The results show that the MP ratio provides early signs of phase transitions that other metrics fail to capture.
Heatmaps of the magnitude potential for the within-class and the entire dataset, as well as the MP ratio, are also provided.

\begin{figure}[t]
  \begin{center}
    \includegraphics[width=\linewidth]{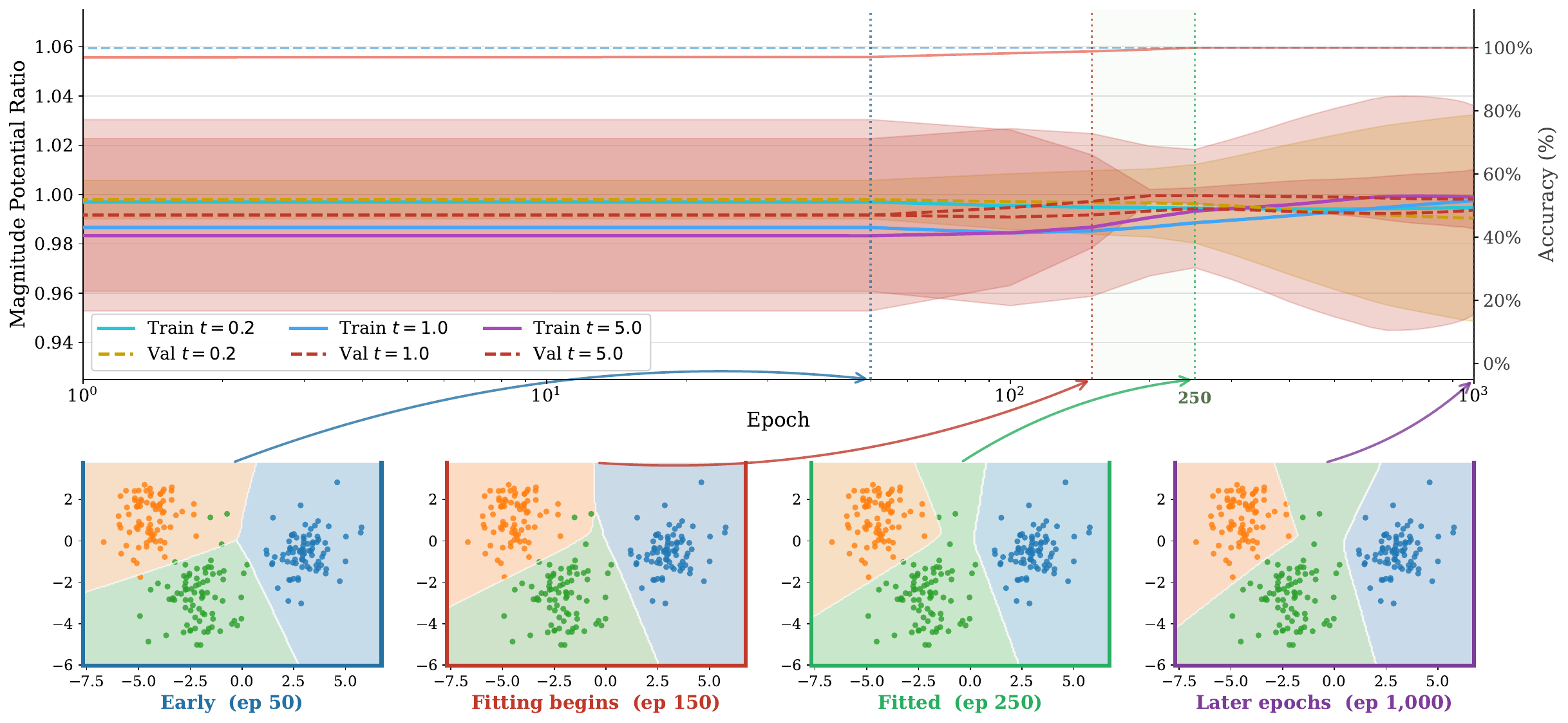}
  \end{center}
  \caption{Validation MP Ratio ($t \in \{0.2, 1.0, 5.0\}$) and decision boundaries for the 2D Toy Gaussian dataset.
  The shaded band around the MP Ratio curve represents the standard deviation of the ratio across validation samples at each epoch, matching the mean line color. 
  Despite near-perfect validation accuracy throughout training, the collapse of the validation MP ratio standard deviation (shaded band) indicates a structural transition in which the model shifts from simple planar boundaries to smooth class-specific curves. }
  \label{fig:toy_boundaries}
\end{figure}

\subsection{Grokking in Modular Addition Datasets}\label{sec:grokking}
Grokking is studied as a more challenging setting where generalization is delayed long after memorization. 
Detecting this phase transition is practically valuable: it indicates whether continued training is driving generalization, stabilizing, or overfitting, and so helps avoid both premature termination and unnecessary overtraining.
We demonstrate that the MP ratio signals the transition to higher accuracy and generalization, as well as the model stabilization phase, in which the weight norm and losses converge.
Unlike existing methods for studying grokking, the MP ratio does not require any assumption on the model architecture or loss function.
In the following, we present the results on a standard grokking benchmark, i.e., the modular addition task, for a 2-layer MLP with an MSE loss.
On the same modular addition task, results replicating the same dynamics for a 1-layer Transformer with cross-entropy loss
~\cite{nanda2023progress}, are provided in Appendix~\ref{app:modular_transformer}.

\textbf{Dataset:}
Modular addition over $\mathbb{Z}_{37}$: the dataset consists of $\frac{p(p+1)}{2} = 703$ ordered pairs $(a,b)$ with $0 \le a \le b < p=37$, labelled by $(a+b)\bmod p$.
Each input is encoded as $\mathbf{x} = \mathbf{e}_a + \mathbf{e}_b \in \mathbb{R}^{37}$ (sum of two one-hot vectors). The data is split 70/30 into train/test.

\textbf{Model:}
A 2-layer MLP  without bias terms: $\mathbb{R}^p \xrightarrow{E} \mathbb{R}^{512} \xrightarrow{\text{LeakyReLU}} \mathbb{R}^p$, with $E \sim \mathcal{N}(0, p^{-1/2})$. Trained with SGD ($\eta=1.0$, $\lambda=10^{-4}$, no momentum) using MSE loss for 210\,000 steps (full-batch).

Figure~\ref{fig:modular_mlp} shows distinct phases in the training, each captured differently by the MP ratio signals.

\textbf{Accuracy transition.}
When validation accuracy begins its sharp rise, both mean and standard deviation of the validation MP ratio ($\sigma(\mathcal{R}_{\mathrm{val}})$ increase as shown in the lower strip of Figure~\ref{subfig:val_mp_ratio}. The increase in the mean validation MP ratio indicates that validation points are more covered by their class training points. The increase in the standard deviation of the validation MP ratio signals the ongoing structural change.

\textbf{Generalization and Early Grokking Detection.}
The rise of the MP ratio for training ($\mathcal{R}_{\mathrm{train}}{\uparrow}$ in Figure~\ref{subfig:train_mp_ratio}) marks the epoch at which training samples begin to consolidate within their class distributions. 
At the same time, the standard deviation of the training MP ratio ($\sigma(\mathcal{R}_{\mathrm{train}})$) decreases as shown in the lower strip of Figure~\ref{subfig:train_mp_ratio}.
This indicates that during this phase, the model keeps the training data points within a class, distinguished from the rest, in the logit space. 
The weight norm provides a slightly earlier signal through its continued decrease. Training and validation losses continue to decrease gradually, with no discontinuities or other signs of change. 
Together, these geometric changes in logit-space representations detected by the MP ratio are consistent with signals from other leading indicators of models' generalization, preceding the transition in validation accuracy.

\textbf{Stabilization.}
As validation accuracy approaches $100\%$, all signals converge: the MP ratios standard deviation for both validation and training decrease (Figure~\ref{subfig:val_mp_ratio} and~\ref{subfig:train_mp_ratio}), both mean MP ratio values, loss functions, and the weight norm all stabilize simultaneously.
This convergence confirms that the model has reached a stable generalizing solution, 
a state that accuracy alone, having saturated almost instantaneously, cannot distinguish from earlier phases.

\begin{figure}[t]
     \centering
     \begin{subfigure}[t]{0.49\textwidth}
         \centering
         \includegraphics[trim={0 28pt 0 4pt}, clip, width=\linewidth]{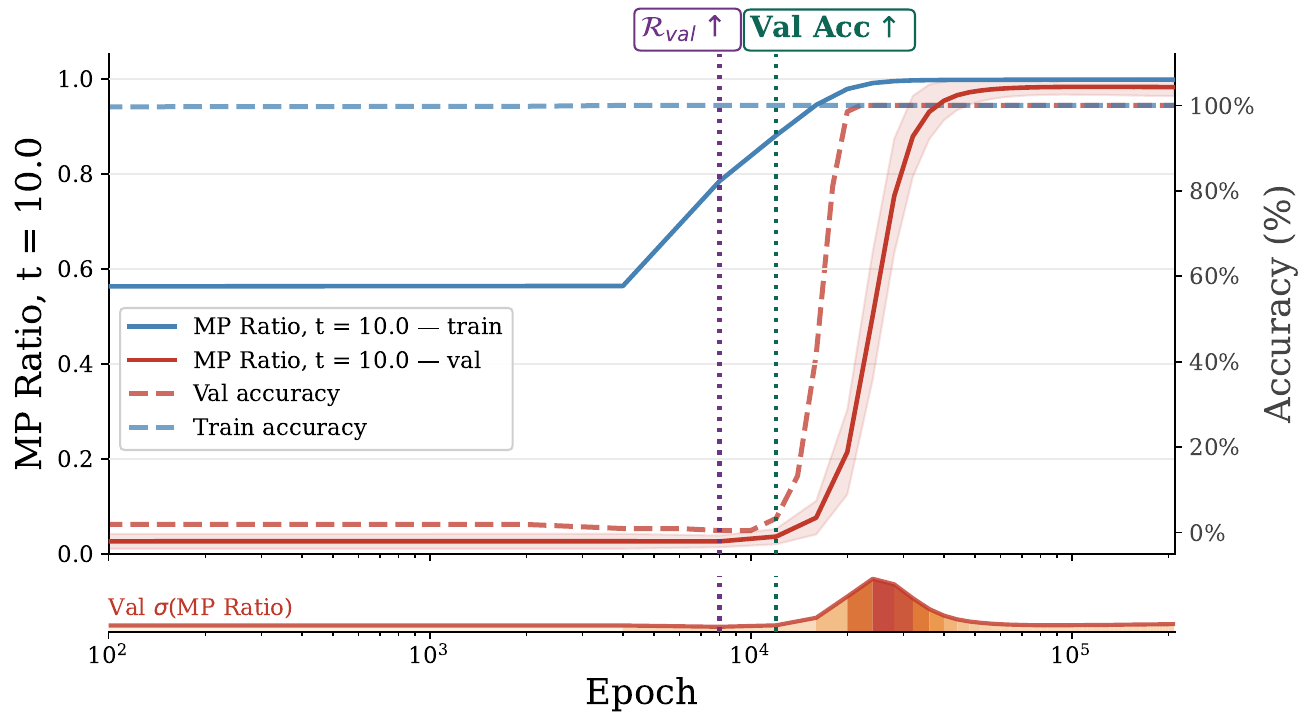}
         \caption{MP ratio vs accuracy functions.}
         \label{subfig:val_mp_ratio}
     \end{subfigure}
     \hfill
     \begin{subfigure}[t]{0.49\textwidth}
         \centering
         \includegraphics[trim={0 8pt 0 4pt}, clip, width=\linewidth]{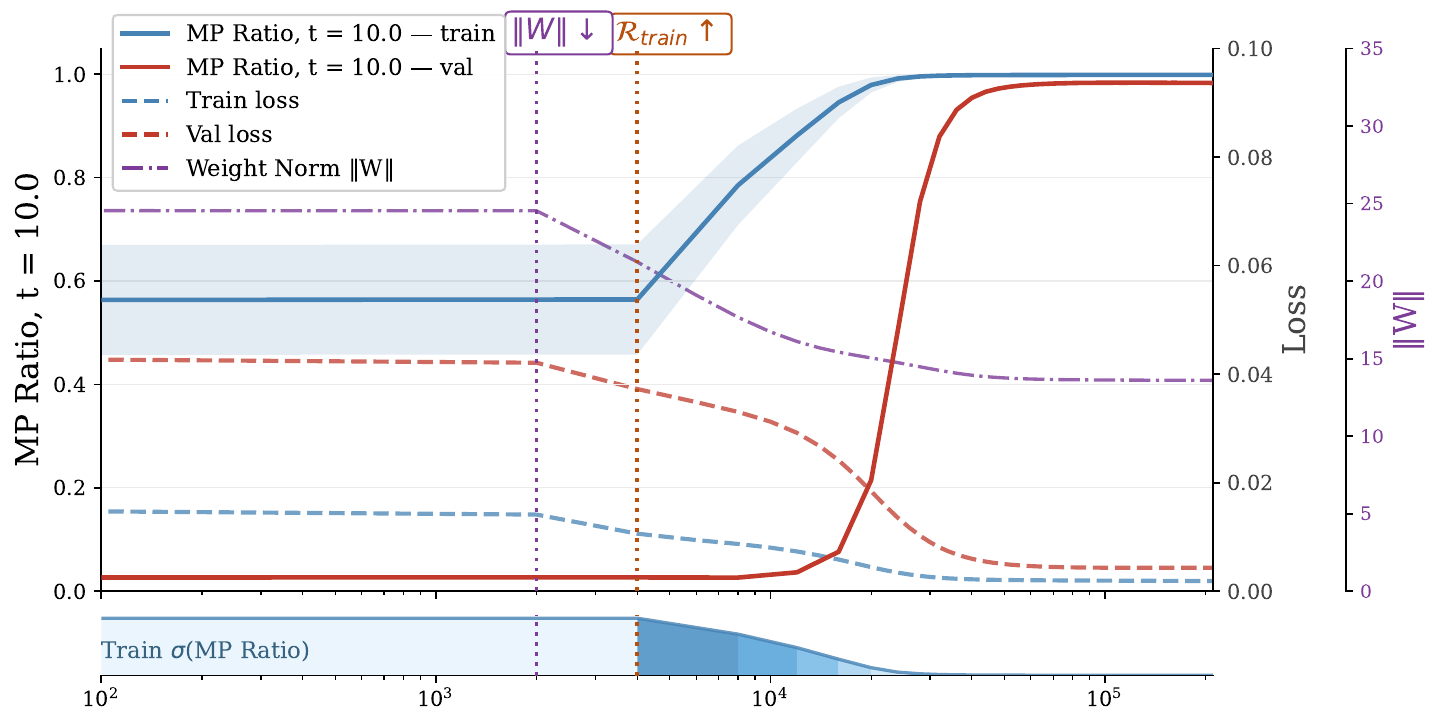} 
         \caption{MP ratio vs loss functions and weight norm.}
         \label{subfig:train_mp_ratio}
     \end{subfigure}
  \caption{Training dynamics over epochs for modular addition $\pmod{37}$ using MLP with MSE Loss.
  \textbf{(a)}~MP ratio, $t=10$) for the training (blue) and validation (red) sets, with accuracy curves overlaid (dashed). The shaded band denotes the standard deviation of the validation MP ratio, and the narrow strip below plots this standard deviation directly.
  Vertical markers: onset of validation MP ratio rise $\mathcal{R}_{\mathrm{val}}{\uparrow}$ (violet) and  validation accuracy rise (teal, $\uparrow$).
  \textbf{(b)}~Train/validation loss(dashed), and weight norm $\|W\|$ (purple).
  Vertical markers: onset of train MP ratio rise $\mathcal{R}_{\mathrm{train}}{\uparrow}$ (orange) and weight norm decreasing $\|W\| {\downarrow}$ (purple). All panels share the same log-scaled epoch axis.} 
  \label{fig:modular_mlp}
\end{figure}

\section{Robustness to Neural Collapse} \label{sec:robustness_nc}
The neural collapse (NC) is an empirical phenomenon where within-class variability of penultimate-layer activations collapses to zero, and class means converge to the vertices of a Simplex Equiangular Tight Frame (ETF)~\cite{papyan2020prevalence}.
This raises a natural question: Is the performance of the logit-layer MP ratio in training dynamics simply a consequence of NC happening, or does it happen even when NC is suppressed?
To test the dependence of these two concepts empirically, we compare a standard model and a model trained with a regulariser that suppresses NC~\cite{han2025flatness}.

\textbf{Dataset:}
CIFAR-10~\cite{krizhevsky2009learning}, a 10-class image classification benchmark consisting of 50,000 training and 10,000 test images of size $32{\times}32$.
For the \emph{Anti-NC} condition, the training set is split $40,000/10,000$ via stratified sampling, and the \emph{standard} condition uses the official CIFAR-10 settings.

\textbf{Models:}
Both conditions use a ResNet-18 trained with SGD (learning rate $0.01$, momentum $0.9$, batch size $64$).
The \emph{standard} model minimizes cross-entropy alone for 400 epochs (81 snapshots saved every 5 epochs).
The \emph{Anti-NC} model adds the regularizer of \citet{han2025flatness} with coefficient $\lambda=10^{-3}$, which explicitly penalizes neural collapse by pushing class means apart and disrupting within-class feature concentration; it is trained for 250 epochs (51 snapshots).

At each snapshot, MP ratio is computed on a random subsample of 500 samples per class (5,000 total) using the closed-form Leave-One-Out (Theorem~\ref{theo:mp_loo}).
Figure~\ref{fig:cifar_antinc} shows nearly identical MP Ratio trajectories throughout training of both models. The slight difference between trajectories is expected due to random subsampling in the MP ratio computation. These results indicate that the MP ratio captures generalization structure independently of whether neural collapse occurs.

\section{Limitations}\label{sec:limitations}

\textbf{Tuning parameter $t$:}
Parameter $t$ scales the absolute magnitude and MP values. In our experiments, values of $t$ were determined mainly through trial and error. 
Our objective was to see if the magnitude potential, with suitable calibration, can reflect training dynamics. 
This is made less challenging by the fact that, while the numerator and denominator individually scale with $t$, the within-class to global ratio remains relatively stable.
We report empirical analysis of this by experimenting over different values of $t$. Across the tested values, Spearman correlation stays within the range $-0.62$ to $-0.76$ and the conclusion (strong negative correlation with memorization) does not flip or vanish at any tested value (Table~\ref{tab:feldman_spearman} and; the neural-collapse experiments similarly show qualitatively similar trajectories across all tested $t$ (Fig~\ref{fig:feldman_kde}- \ref{fig:feldman_scatter_errbar}). 
Nonetheless, it will be good to have a principled analytic method for determining reasonable $t$ in various circumstances, but this kind of result is difficult to derive. Additionally, the correct scale depends on the problem and on the measurement objectives. 

\textbf{Grokking setups:}
Grokking and related experiments are expensive due to the long training run; as such, they are usually carried out on smaller setups than full-scale ML applications. We followed the standard experimental setups in this area. It would be useful to verify in large, overparameterized models, but that is unfortunately impractical.

\textbf{Computational cost:}
Theorem~\ref{theo:mp_loo} avoids inverting a separate Leave-One-Out matrix per point. Instead, it requires only one inversion for each class and one for the entire dataset similarity matrix.
However, the computation of the magnitude still requires matrix inversion, which is computationally expensive for large training sets. This is a one-time cost for any particular point cloud, but it can still be a problem in a large training set. Fortunately, random samples from the training set perform reasonably well in experiments. Sampling-based approximations, as described in~\cite{andreeva2025approximating}, are likely to perform well with smaller samples, but this remains to be studied.

\begin{figure}[t]
    \centering
    \begin{subfigure}[b]{0.32 \textwidth}
        \includegraphics[width=\textwidth]{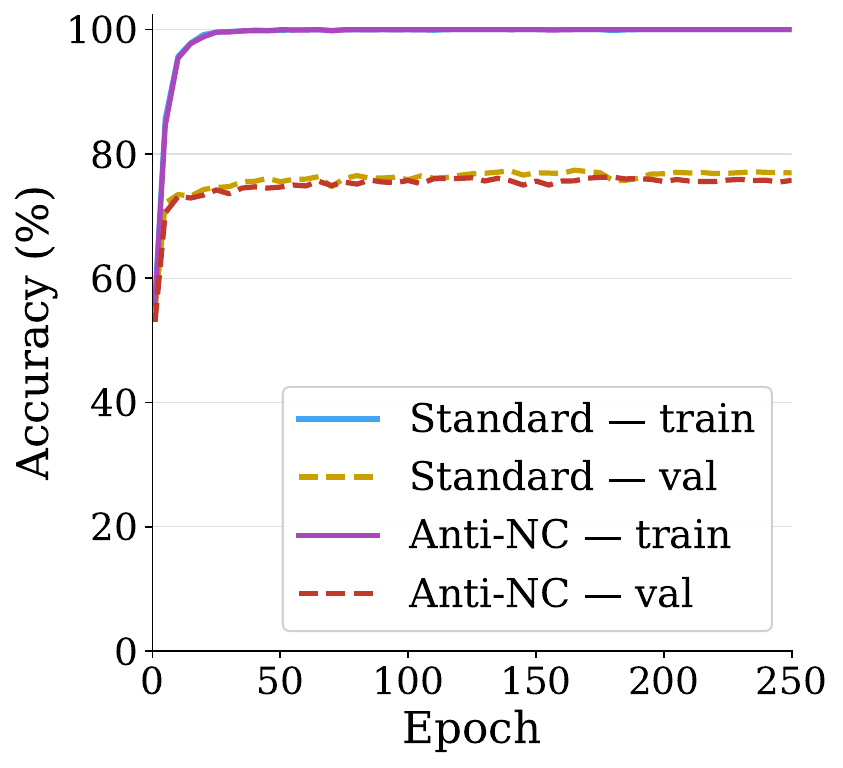}
        \caption{Accuracy}
        \label{fig:cifar_acc}
    \end{subfigure}
    \hfill
    \begin{subfigure}[b]{0.32 \textwidth}
        \includegraphics[width=\textwidth]{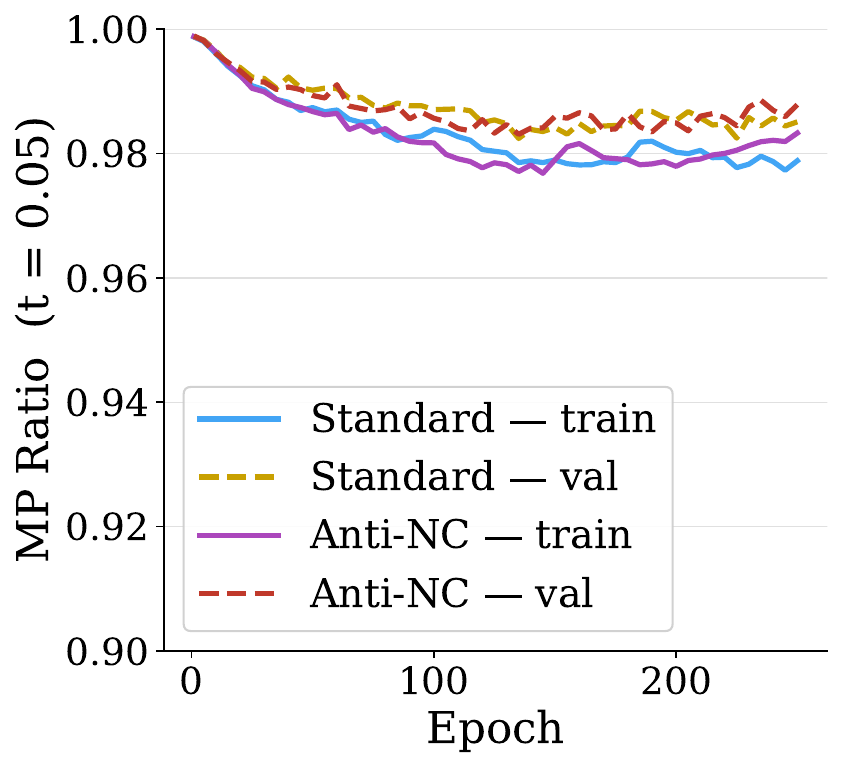}
        \caption{MP Ratio ($t=0.05$)}
        \label{fig:cifar_mp}
    \end{subfigure}
    \hfill
    \begin{subfigure}[b]{0.32 \textwidth}
        \includegraphics[width=\textwidth]{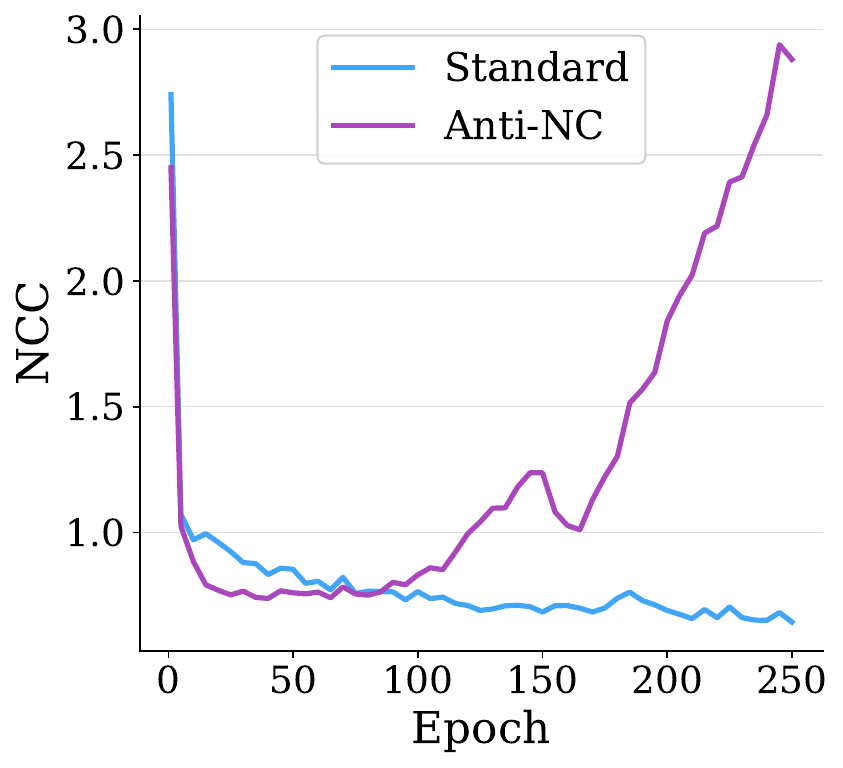}
        \caption{Nearest Class Centre}
        \label{fig:cifar_ncc}
    \end{subfigure}
    \caption{MP Ratio in Absence of Neural Collapse in CIFAR-10.
        Standard training converges to neural collapse while Anti-NC regularization actively suppresses it. Yet both models exhibit nearly identical MP Ratio trajectories throughout training.
        }
    \label{fig:cifar_antinc}
\end{figure}

\section{Related Work}

It was shown by \citet{zhang2017understanding} that large CNNs can memorize entire random training data, and by \citet{belkin2019reconciling} that increasing network size undergoes double descent, where overparameterized networks show good generalization. These experiments prove that the generalization behavior of neural networks is more complex than what is suggested by traditional learning theory. Since then, memorization~\cite{feldman2020neural}, neural collapse~\cite{papyan2020prevalence}, grokking~\cite{power2022grokking}, and similar phenomena have been studied extensively, and observed in many domains~\cite{power2022grokking, liu2023omnigrok, humayun2024deep, zhu2021geometric,zhou2022all}. 

The complexity of the overall training behavior of large networks prevents any one approach or tool from dominating the analysis. Weight decay of networks, such as that from regularization, is found to be closely associated with. grokking~\cite{liu2022towards}, while algorithms-style techniques have been used to argue theoretically about the necessity of memorization~\cite{feldman2020does}. A recent approach based on statistical physics equates memorization and then grokking to a liquid being cooled toward a glass-like state~\cite{zhang2026is}. Various geometric techniques have been applied to study embedding representations, such as in neural collapse~\cite{papyan2020prevalence} and in similar examinations of embedding across layers~\cite{doimo2020hierarchical}. A recent work compares the necessity of neural collapse to that of flatness of loss landscapes in generalization~\cite{han2025flatness}. 

Metric magnitude has also been used to propose a new method for evaluating the generalization of neural networks based on magnitude and the effective number of models~\cite{andreeva2023metric}.
Magnitude dimension, theoretically connected with the persistent homology dimension, proves a new upper bound for the generalization error. Therefore, this links the generalization error to a magnitude-based characteristic of the training trajectories as a whole, in contrast to magnitude potential, which is defined for individual data points, training, and validation, in the network's learned representation.

The current paper belongs to this class of geometric techniques in the study of neural training dynamics. We introduced magnitude potential as a general measure of how well represented a point is by a set (e.g., a training set) and showed that it parallels the other approaches in several respects. This technique should be useful in a wider study of neural networks and machine learning behavior.

\section{Conclusion}

In this work, we propose the Magnitude Potential ratio (MP ratio), a geometric measure grounded in the theory of metric magnitude, that quantifies how well a point is covered by its own class relative to the entire dataset. 
Applied to the model's representation space, the MP ratio captures the geometric relations between unseen points and the training dataset, as well as the geometry within the training dataset.
We prove fundamental properties of the magnitude potential, including Lipschitz continuity, variance collapse under within-class collapse, and a closed-form leave-one-out version.
At the per-sample level, MP ratio strongly correlates with memorization score, without requiring extensive retraining.
Aggregating MP ratio across samples in the training process, detects structural changes in decision boundaries invisible to accuracy, and provides a leading geometric indicator of grokking without requiring any assumptions on the model architecture and loss function.
MP ratio captures generalization structure even when neural collapse is explicitly suppressed.

\begin{ack}
This project was supported by the Royal Academy of Engineering under the Research Fellowship programme.
\end{ack}

\bibliographystyle{plainnat}
\bibliography{ref}

@article{doimo2020hierarchical,
  title={Hierarchical nucleation in deep neural networks},
  author={Doimo, Diego and Glielmo, Aldo and Ansuini, Alessio and Laio, Alessandro},
  journal={Advances in Neural Information Processing Systems},
  volume={33},
  pages={7526--7536},
  year={2020}
}

@article{leinster2013magnitude,
  title={The magnitude of metric spaces},
  author={Leinster, Tom},
  journal={Documenta Mathematica},
  volume={18},
  pages={857--905},
  year={2013}
}

@book{leinster2021entropy,
  title={Entropy and diversity: the axiomatic approach},
  author={Leinster, Tom},
  year={2021},
  publisher={Cambridge university press}
}

@inproceedings{feldman2020does,
  title={Does learning require memorization? a short tale about a long tail},
  author={Feldman, Vitaly},
  booktitle={Proceedings of the 52nd annual ACM SIGACT symposium on theory of computing},
  pages={954--959},
  year={2020}
}

@inproceedings{andreeva2025approximating,
  title={Approximating metric magnitude of point sets},
  author={Andreeva, Rayna and Ward, James and Skraba, Primoz and Gao, Jie and Sarkar, Rik},
  booktitle={Proceedings of the AAAI Conference on Artificial Intelligence},
  volume={39},
  number={15},
  pages={15374--15381},
  year={2025}
}

@inproceedings{andreeva2023metric,
  title={Metric space magnitude and generalisation in neural networks},
  author={Andreeva, Rayna and Limbeck, Katharina and Rieck, Bastian and Sarkar, Rik},
  booktitle={Topological, Algebraic and Geometric Learning Workshops 2023},
  pages={242--253},
  year={2023},
  organization={PMLR}
}

@inproceedings{
torkamani2026magnitude,
title={Magnitude Distance: A Geometric Measure of Dataset Similarity},
author={Sahel Torkamani and Henry Gouk and Rik Sarkar},
booktitle={Forty-third International Conference on Machine Learning},
year={2026},
url={https://openreview.net/forum?id=9ylPoHEKed}
}

@article{meckes2013positive,
  title={Positive definite metric spaces},
  author={Meckes, Mark W},
  journal={Positivity},
  volume={17},
  number={3},
  pages={733--757},
  year={2013},
  publisher={Springer}
}

@inproceedings{
han2025flatness,
title={Flatness is Necessary, Neural Collapse is Not: Rethinking Generalization via Grokking},
author={Ting Han and Linara Adilova and Henning Petzka and Jens Kleesiek and Michael Kamp},
booktitle={The Thirty-ninth Annual Conference on Neural Information Processing Systems},
year={2025},
url={https://openreview.net/forum?id=lbtOctHDQ3}
}

@article{power2022grokking,
  title={Grokking: Generalization beyond overfitting on small algorithmic datasets},
  author={Power, Alethea and Burda, Yuri and Edwards, Harri and Babuschkin, Igor and Misra, Vedant},
  journal={arXiv preprint arXiv:2201.02177},
  year={2022}
}

@inproceedings{
nanda2023progress,
title={Progress measures for grokking via mechanistic interpretability},
author={Neel Nanda and Lawrence Chan and Tom Lieberum and Jess Smith and Jacob Steinhardt},
booktitle={The Eleventh International Conference on Learning Representations },
year={2023},
url={https://openreview.net/forum?id=9XFSbDPmdW}
}

@article{petzka2021relative,
  title={Relative flatness and generalization},
  author={Petzka, Henning and Kamp, Michael and Adilova, Linara and Sminchisescu, Cristian and Boley, Mario},
  journal={Advances in neural information processing systems},
  volume={34},
  pages={18420--18432},
  year={2021}
}

@article{walter2024uncanny,
  title={The uncanny valley: Exploring adversarial robustness from a flatness perspective},
  author={Walter, Nils Philipp and Adilova, Linara and Vreeken, Jilles and Kamp, Michael},
  journal={arXiv preprint arXiv:2405.16918},
  year={2024}
}

@inproceedings{he2016deep,
  title={Deep residual learning for image recognition},
  author={He, Kaiming and Zhang, Xiangyu and Ren, Shaoqing and Sun, Jian},
  booktitle={Proceedings of the IEEE conference on computer vision and pattern recognition},
  pages={770--778},
  year={2016}
}

@article{feldman2020neural,
  title={What neural networks memorize and why: Discovering the long tail via influence estimation},
  author={Feldman, Vitaly and Zhang, Chiyuan},
  journal={Advances in neural information processing systems},
  volume={33},
  pages={2881--2891},
  year={2020}
}

@article{krizhevsky2009learning,
  title={Learning multiple layers of features from tiny images},
  author={Krizhevsky, Alex and Hinton, Geoffrey and others},
  year={2009},
  publisher={Toronto, ON, Canada}
}

@article{meckes2015magnitude,
  title={Magnitude, diversity, capacities, and dimensions of metric spaces},
  author={Meckes, Mark W},
  journal={Potential Analysis},
  volume={42},
  number={2},
  pages={549--572},
  year={2015},
  publisher={Springer}
}

@article{aronszajn1950theory,
  title={Theory of reproducing kernels},
  author={Aronszajn, Nachman},
  journal={Transactions of the American mathematical society},
  volume={68},
  number={3},
  pages={337--404},
  year={1950}
}

@article{leinster2016magnitude,
  title={The magnitude of a metric space: from category theory to geometric measure theory},
  author={Leinster, Tom and Meckes, Mark W},
  journal={arXiv preprint arXiv:1606.00095},
  year={2016}
}

@article{andreeva2024topological,
  title={Topological generalization bounds for discrete-time stochastic optimization algorithms},
  author={Andreeva, Rayna and Dupuis, Benjamin and Sarkar, Rik and Birdal, Tolga and {\c{S}}im{\c{s}}ekli, Umut},
  journal={Advances in Neural Information Processing Systems},
  volume={37},
  pages={4765--4818},
  year={2024}
}

@article{limbeck2024metric,
  title={Metric space magnitude for evaluating the diversity of latent representations},
  author={Limbeck, Katharina and Andreeva, Rayna and Sarkar, Rik and Rieck, Bastian},
  journal={Advances in Neural Information Processing Systems},
  volume={37},
  pages={123911--123953},
  year={2024}
}

@inproceedings{
liu2023omnigrok,
title={Omnigrok: Grokking Beyond Algorithmic Data},
author={Ziming Liu and Eric J Michaud and Max Tegmark},
booktitle={The Eleventh International Conference on Learning Representations },
year={2023},
url={https://openreview.net/forum?id=zDiHoIWa0q1}
}

@inproceedings{
zhang2017understanding,
title={Understanding deep learning requires rethinking generalization},
author={Chiyuan Zhang and Samy Bengio and Moritz Hardt and Benjamin Recht and Oriol Vinyals},
booktitle={International Conference on Learning Representations},
year={2017},
url={https://openreview.net/forum?id=Sy8gdB9xx}
}

@article{
gu2025on,
title={On Memorization in Diffusion Models},
author={Xiangming Gu and Chao Du and Tianyu Pang and Chongxuan Li and Min Lin and Ye Wang},
journal={Transactions on Machine Learning Research},
issn={2835-8856},
year={2025},
url={https://openreview.net/forum?id=D3DBqvSDbj},
note={}
}

@inproceedings{
bonnaire2025why,
title={Why Diffusion Models Don{\textquoteright}t Memorize:  The Role of Implicit Dynamical Regularization in Training},
author={Tony Bonnaire and Rapha{\"e}l Urfin and Giulio Biroli and Marc Mezard},
booktitle={The Thirty-ninth Annual Conference on Neural Information Processing Systems},
year={2025},
url={https://openreview.net/forum?id=BSZqpqgqM0}
}

@inproceedings{
humayun2024deep,
title={Deep Networks Always Grok and Here is Why},
author={Ahmed Imtiaz Humayun and Randall Balestriero and Richard Baraniuk},
booktitle={High-dimensional Learning Dynamics 2024: The Emergence of Structure and Reasoning},
year={2024},
url={https://openreview.net/forum?id=NpufNsg1FP}
}

@article{belkin2019reconciling,
  title={Reconciling modern machine-learning practice and the classical bias--variance trade-off},
  author={Belkin, Mikhail and Hsu, Daniel and Ma, Siyuan and Mandal, Soumik},
  journal={Proceedings of the National Academy of Sciences},
  volume={116},
  number={32},
  pages={15849--15854},
  year={2019},
  publisher={National Academy of Sciences}
}

@article{yu2024generalizablity,
  title={Generalizablity of memorization neural network},
  author={Yu, Lijia and Gao, Xiao-Shan and Zhang, Lijun and Miao, Yibo},
  journal={Advances in Neural Information Processing Systems},
  volume={37},
  pages={113311--113359},
  year={2024}
}

@article{papyan2020prevalence,
  title={Prevalence of neural collapse during the terminal phase of deep learning training},
  author={Papyan, Vardan and Han, XY and Donoho, David L},
  journal={Proceedings of the National Academy of Sciences},
  volume={117},
  number={40},
  pages={24652--24663},
  year={2020},
  publisher={National Academy of Sciences}
}

@inproceedings{
liu2022towards,
title={Towards Understanding Grokking: An Effective Theory of Representation Learning},
author={Ziming Liu and Ouail Kitouni and Niklas Nolte and Eric J Michaud and Max Tegmark and Mike Williams},
booktitle={Advances in Neural Information Processing Systems},
editor={Alice H. Oh and Alekh Agarwal and Danielle Belgrave and Kyunghyun Cho},
year={2022},
url={https://openreview.net/forum?id=6at6rB3IZm}
}

@article{zhu2021geometric,
  title={A geometric analysis of neural collapse with unconstrained features},
  author={Zhu, Zhihui and Ding, Tianyu and Zhou, Jinxin and Li, Xiao and You, Chong and Sulam, Jeremias and Qu, Qing},
  journal={Advances in Neural Information Processing Systems},
  volume={34},
  pages={29820--29834},
  year={2021}
}

@article{zhou2022all,
  title={Are all losses created equal: A neural collapse perspective},
  author={Zhou, Jinxin and You, Chong and Li, Xiao and Liu, Kangning and Liu, Sheng and Qu, Qing and Zhu, Zhihui},
  journal={Advances in Neural Information Processing Systems},
  volume={35},
  pages={31697--31710},
  year={2022}
}

@article{leinster2008euler,
  title={The Euler characteristic of a category},
  author={Leinster, Tom},
  journal={Documenta Mathematica},
  volume={13},
  pages={21--49},
  year={2008}
}

@article{paszke2019pytorch,
  title={Pytorch: An imperative style, high-performance deep learning library},
  author={Paszke, Adam and Gross, Sam and Massa, Francisco and Lerer, Adam and Bradbury, James and Chanan, Gregory and Killeen, Trevor and Lin, Zeming and Gimelshein, Natalia and Antiga, Luca and others},
  journal={Advances in neural information processing systems},
  volume={32},
  year={2019}
}

@inproceedings{
zhang2026is,
title={Is Grokking a Computational Glass Relaxation?},
author={Xiaotian Zhang and Yue Shang and Entao Yang and Ge Zhang},
booktitle={The Thirty-ninth Annual Conference on Neural Information Processing Systems},
year={2026},
url={https://openreview.net/forum?id=Tk5nQnTGmP}
}

@article{pedregosa2011scikit,
  title={Scikit-learn: Machine learning in Python},
  author={Pedregosa, Fabian and Varoquaux, Ga{\"e}l and Gramfort, Alexandre and Michel, Vincent and Thirion, Bertrand and Grisel, Olivier and Blondel, Mathieu and Prettenhofer, Peter and Weiss, Ron and Dubourg, Vincent and others},
  journal={the Journal of machine Learning research},
  volume={12},
  pages={2825--2830},
  year={2011},
  publisher={JMLR. org}
}

\newpage

\appendix
\section{Assumptions} \label{app:assumptions}

It is observed that the magnitude of a finite metric space may be undefined, or smaller than the magnitude of one of its subspaces, or even negative. 
However, positive definite metric spaces have well-defined and unique magnitude weightings as the similarity matrix is invertible. Every subspace of it is also positive definite~\cite{leinster2013magnitude}.

\begin{assumption}[Positive definite Metric Space]
\label{ass:pd}
A finite metric space $(X, d)$ is \textbf{positive definite} 
if $\zeta_X \succ 0$, i.e.\ the similarity matrix is positive 
definite \cite{meckes2013positive, leinster2021entropy}. 
\end{assumption}
The sufficient conditions under which Assumption~\ref{ass:pd} holds, such as the fact that every finite Euclidean Space is positive definite, are explained in the Appendix~\ref{app:assumptions}.

The following are known sufficient conditions under which 
Assumption~\ref{ass:pd} holds~\cite{leinster2013magnitude, leinster2021entropy}:
\begin{enumerate}[label=(\roman*)]
    \item \textbf{Euclidean space:} Every finite subset 
    of $(\mathbb{R}^D, \|\cdot\|_2)$ is positive definite 
    for all $t > 0$. This covers the standard ML setting where data is in $\mathbb{R}^D$.
    \item \textbf{$\ell^1$ metric:} Every finite subset 
    of $(\mathbb{R}^D, \|\cdot\|_1)$ is positive definite 
    for all $t > 0$. 
    \item \textbf{Ultrametric spaces:} Every finite ultrametric space is positive definite with non-negative weights for all $t > 0$.
\end{enumerate}
  
Throughout all proofs, we make use of the following standing result.
In the standard literature, a metric space is understood to be a \emph{set} of distinct points, i.e., without duplicates. \citet{torkamani2026magnitude} extends the magnitude to a finite \emph{collection} of points that may contain duplicates, proving that magnitude is insensitive to redundancy. The weight of each distinct point is distributed among its duplicates. Their result on positive definiteness (proved for Euclidean Space but only used positive definiteness) holds throughout under Assumption~\ref{ass:pd}.
In particular, this ensures the magnitude weights and potential remain 
well-defined and stable, under the collapse of dataset variance or scale parameter limiting to zero.

\section{Proofs of Theorems}\label{app:proofs}
\mpWithinClass*
\begin{proof}[Proof of Theorem~\ref{theo:mp_loo}]
The equation $\rho^t(x_r, X) = 1$ is a direct consequence of the metric magnitude definition. Now, we will prove~\eqref{eq:mp_loo}.
Let $x_r \in X$ and write $X^{(r)} = X \setminus \{x_r\}$ for simplicity. By definition:
\begin{equation}
    \rho^t(x_r, X^{(r)}) 
    = \sum_{i \neq r} \zeta(x_i, x_r)\, \magwt_{X^{(r)}}(x_i)
    \label{eq:loo_pot_def}
\end{equation}
where $\magwt_{X^{(r)}} = \zeta_{X^{(r)}}^{-1}\mathbf{1}$ are the magnitude weights on the reduced set. 
Using the Schur complement identity for the submatrix inverses, we can write the $\zeta_{X^{(r)}}^{-1}$ in terms of $\zeta_X^{-1}$. We partition $\zeta_X$ by isolating index $r$, placed last without loss of generality:
\begin{equation}
    \zeta_X = 
    \begin{pmatrix} A & b \\ b^\top & 1 \end{pmatrix}
\end{equation}
where $A = \zeta_{X^{(r)}} \in \mathbb{R}^{(n-1)
\times(n-1)}$, $b_i = \zeta(x_i, x_r)$ for $i \neq r$, 
and $[\zeta_X]_{rr} = \exp(d(x_r, x_r))=1$. 
Writing $\zeta_X^{-1}$ in the same block structure, we have
\begin{equation}
    \zeta_X^{-1} = \begin{pmatrix} P & q \\ 
    q^\top & s \end{pmatrix}
\end{equation}
where $P \in \mathbb{R}^{(n-1)\times(n-1)}$ is the top-left block of $\zeta_X^{-1}$.
From $\zeta_X \zeta_X^{-1} = I$, the top-left and top-right block equations give respectively:
\begin{align}
    AP + bq^\top &= I \label{eq:block_inversion1}\\
    Aq + bs &= 0 \label{eq:block_inversion2}
\end{align}
From ~\eqref{eq:block_inversion2} we have $A^{-1}b = -q/s$ which by substituting into~\eqref{eq:block_inversion1} we have:
\begin{equation}
    P = A^{-1}(I - bq^\top) = A^{-1} - 
    A^{-1}bq^\top = A^{-1} + \frac{qq^\top}{s}
\end{equation}
Rearranging for $A^{-1} = \zeta_{X^{(r)}}^{-1}$ the 
$(i,j)$-th entry is
satisfies:
\begin{equation}
    [\zeta_{X^{(r)}}^{-1}]_{ij} 
    = [\zeta_X^{-1}]_{ij} 
    - \frac{[\zeta_X^{-1}]_{ir}[\zeta_X^{-1}]_{rj}}
    {[\zeta_X^{-1}]_{rr}}
    \quad \forall\, i,j \neq r.
    \label{eq:schur}
\end{equation}
The weights within $X^{(r)}$ are $\magwt_{X^{(r)}}(x_i) = \sum_{j \neq r}[\zeta_{X^{(r)}}^{-1}]_{ij}$. 
Summing~\eqref{eq:schur} over $j \neq r$ we have:
\begin{align}
    \magwt_{X^{(r)}}(x_i) 
    &= \sum_{j \neq r}[\zeta_X^{-1}]_{ij} 
    - \frac{[\zeta_X^{-1}]_{ir}}{[\zeta_X^{-1}]_{rr}}
    \sum_{j \neq r}[\zeta_X^{-1}]_{rj}.
    \label{eq:loo_weights}
\end{align}
We simplify each sum using $\magwt_X = \zeta_X^{-1}\mathbf{1}$:
\begin{align}
    \sum_{j \neq r}[\zeta_X^{-1}]_{ij} 
    &= \magwt_X(x_i) - [\zeta_X^{-1}]_{ir}, 
    \label{eq:sum1}\\
    \sum_{j \neq r}[\zeta_X^{-1}]_{rj} 
    &= \magwt_X(x_r) - [\zeta_X^{-1}]_{rr}.
    \label{eq:sum2}
\end{align}
Then, substituting~\eqref{eq:sum1} and~\eqref{eq:sum2} 
into~\eqref{eq:loo_weights}:
\begin{align}
    \magwt_{X^{(r)}}(x_i) 
    &= \magwt_X(x_i) - [\zeta_X^{-1}]_{ir} 
    - \frac{[\zeta_X^{-1}]_{ir}( \magwt_X(x_r) - [\zeta_X^{-1}]_{rr})}
    {[\zeta_X^{-1}]_{rr}} \nonumber\\
    &= \magwt_X(x_i) - [\zeta_X^{-1}]_{ir} 
    - \frac{[\zeta_X^{-1}]_{ir} \magwt_X(x_r)}{[\zeta_X^{-1}]_{rr}} 
    + [\zeta_X^{-1}]_{ir} \nonumber\\
    &= \magwt_X(x_i) 
    - \frac{[\zeta_X^{-1}]_{ir}}{[\zeta_X^{-1}]_{rr}} \magwt_X(x_r).
    \label{eq:loo_weight_clean}
\end{align}
Now substitute~\eqref{eq:loo_weight_clean} 
into~\eqref{eq:loo_pot_def}:
\begin{align}
    \rho^t(x_r, X^{(r)}) 
    &= \sum_{i \neq r} \zeta(x_i, x_r) 
    \left( \magwt_X(x_i) 
    - \frac{[\zeta_X^{-1}]_{ir}}{[\zeta_X^{-1}]_{rr}} 
     \magwt_X(x_r) \right) \nonumber\\
    &= \sum_{i \neq r} \zeta(x_i, x_r) \magwt_X(x_i) 
    - \frac{ \magwt_X(x_r) }{[\zeta_X^{-1}]_{rr}} 
    \sum_{i \neq r} \zeta(x_i, x_r)[\zeta_X^{-1}]_{ir}.
    \label{eq:two_sums}
\end{align}
We evaluate each sum separately. First, since 
$\zeta_X \magwt_X = \mathbf{1}$ gives $\sum_i \zeta(x_i, x_r) 
\magwt_X(x_i) = 1$ and $\zeta(x_r, x_r) = 1$:
\begin{equation}
    \sum_{i \neq r} \zeta(x_i, x_r) \magwt_X(x_i) 
    = 1 - \magwt_X(x_r).
    \label{eq:first_sum}
\end{equation}
Also, since $\zeta_X \zeta_X^{-1} = I$ gives 
$\sum_i \zeta(x_r, x_i)[\zeta_X^{-1}]_{ir} 
= [I]_{rr} = 1$ and $\zeta(x_r,x_r)[\zeta_X^{-1}]_{rr} 
= [\zeta_X^{-1}]_{rr}$:
\begin{equation}
    \sum_{i \neq r} \zeta(x_i, x_r)[\zeta_X^{-1}]_{ir} 
    = 1 - [\zeta_X^{-1}]_{rr}.
    \label{eq:second_sum}
\end{equation}
Finally, substituting~\eqref{eq:first_sum} 
and~\eqref{eq:second_sum} into~\eqref{eq:two_sums}, we have:
\begin{align}
    \rho^t(x_r, X^{(r)}) 
    &= (1 -  \magwt_X(x_r)) 
    - \frac{ \magwt_X(x_r) }{[\zeta_X^{-1}]_{rr}}
    (1 - [\zeta_X^{-1}]_{rr}) \nonumber\\
    &= 1 -  \magwt_X(x_r)  
    - \frac{ \magwt_X(x_r) }{[\zeta_X^{-1}]_{rr}} +  \magwt_X(x_r)  
    = 1 - \frac{ \magwt_X(x_r) }{[\zeta_X^{-1}]_{rr}}.
\end{align}
\end{proof}

\Continuity*
\begin{proof}[Proof of Theorem~\ref{theo:continuity}]
By the definition of magnitude potential, for any $z_1, z_2 \in \mathbb{R}^D$ and scale parameter $t \in \mathbb{R}_{+}$, since the weights $\magwt_X(x)$ depend only on $X$, not on $z$, we have:
\begin{align}
    |\rho^t(z_1,X) - \rho^t(z_2,X)| &= | \sum_{x \in X} \big(  \zeta(x,z_1) - \zeta(x,z_2) \big) \magwt_X(x)|
\end{align}
By triangle inequality:
\begin{align}
    |\rho^t(z_1,X) - \rho^t(z_2,X)| &\leq \sum_{x \in X}  | \big(  \zeta(x,z_1) - \zeta(x,z_2) \big) \magwt_X(x)| \nonumber\\
    &= \sum_{x \in X}  | \zeta(x,z_1) - \zeta(x,z_2)| \cdot |\magwt_X(x)| 
\end{align}
Using the Lipschitzness of the exponential function, i.e. $|e^{-ta} - e^{-tb}| \leq t|a-b|$ and the triangle inequality $|d(x,z_1) - d(x,z_2)| \leq d(z_1,z_2)$:
\begin{align}
    |\rho^t(z_1,X) - \rho^t(z_2,X)|  
    &\leq \sum_{x \in X}|\zeta(x,z_1) - \zeta(x,z_2)|\cdot|\magwt_X(x)| \nonumber\\
    &\leq t\cdot d(z_1,z_2)\cdot  \|\magwt_X\|_1.
\end{align}
Finally, note that the theorem holds for all $z_1, z_2 \in \mathbb{R}^D$, whether in 
$X$ or not, In particular, since  $\lim_{z \to x_r}\rho^t(z,X) = 
\sum_x \zeta(x,x_r)\magwt_X(x) = [\zeta_X\magwt_X]_r = 1 = \rho^t(x_r,X)$. 
\end{proof}

\ScaleLimit*
\begin{proof}[Proof of Theorem~\ref{theo:scale_limit}]

\textbf{Part (i): Small scale, $t \to 0$.}

It is similar to proof of \eqref{eq:within_class_variance} of Theorem~\ref{theo:potential_collapse} as $t \to 0$, every kernel value $\zeta(x, z) = e^{-td(x,z)} \to 1$ for all $x \in X$, and $ z \in \mathbb{R}^D$.

\textbf{Part (ii): Large scale, $t \to \infty$, 
$z \notin X$.}
Let $x^* = \arg\min_{x \in X} d(x,z)$ be the nearest neighbour of $z$ in $X$, with $d^* = d(x^*,z) > 0$ as $z \notin X$. 
Then, for any non-nearest neighbour data point $x \neq x^*$, let $\delta_x = d(x,z) - d^* > 0$. Then for each $x \neq x^*$ as $t \to \infty$ we have:
\begin{equation}
    \frac{\zeta(x,z)}{\zeta(x^*,z)} = 
    \frac{e^{-td(x,z)}}{e^{-td^*}} = 
    e^{-t\delta_x} \to 0.
\end{equation}
Therefore, the magnitude potential can be written as:
\begin{align}
    \rho^t(z, X) &= \zeta(x^*,z)\,
    \magwt_X(x^*) + \sum_{x \neq x^*} 
    \zeta(x,z)\,\magwt_X(x) \nonumber\\
    &= \zeta(x^*,z)\left[\magwt_X(x^*) + 
    \sum_{x \neq x^*} e^{-t\delta_x}\,
    \magwt_X(x)\right].
\end{align}
For the weights, as $t \to \infty$ all 
off-diagonal entries of $\zeta_X$ satisfy 
$e^{-td(x_i,x_j)} \to 0$ for distinct 
$x_i \neq x_j$, so $\zeta_X \to I$, 
$\zeta_X^{-1} \to I$, and therefore:
\begin{equation}
    \magwt_X(x) = 
    [\zeta_X^{-1}\mathbf{1}]_x \to 1 
    \quad \forall\, x \in X.
\end{equation}
As $t \to \infty$, all off-diagonal entries of $\zeta_X$, $e^{-td(x_i,x_j)} \to 0$ for $x_i \neq x_j$, so $\zeta_X$ approaches the identity $I$ and $\magwt_X(x) \to 1$ for all $x \in X$.
Therefore:
\begin{equation}
    \frac{\rho^t(z, X)}{\zeta(x^*, z)} \to 1.
\end{equation}
\end{proof}

\PotentialUnderCollapse*
\begin{proof}[Proof of Corollary~\ref{theo:potential_collapse}]
As $V_{X} \to 0$, we have all pairwise distances $d(x_i, x_j)$ converge to zero and every term must vanish, i.e. $d(x, \mu) \to 0$ for all $x \in X$. By the triangle inequality, for any $x_1, x_2 \in X$:
\begin{equation}\label{eq:pairwise_dist_collapse}
    d(x_1, x_2) \leq d(x_1, \mu) + d(x_2, \mu) \to 0.
\end{equation}
For any fixed $z \in \mathbb{R}^D$ and any $x_1, x_2 \in X$, by triangle inequality we have:
\begin{equation}
    d(x_1,z) \leq d(x_1,x_2) + d(x_2,z), 
    \qquad
    d(x_2,z) \leq d(x_1,x_2) + d(x_1,z).
    \label{eq:dist_squeeze}
\end{equation}
As $d(x_1,x_2) \to 0$ by \eqref{eq:pairwise_dist_collapse}, both 
distances and consequently all $d(x,z)$, converge to the same limit 
$d_z := d(\mu,z)$, since $x \to \mu$ for all $x \in X$.
However, as explained, we can ignore duplicates and consider only the set of distinct points, and the weight of each distinct point is distributed among its duplicates. 
\begin{align}
    \rho^t(z, X^{(z)}) 
    &= \sum_{x \in X^{(z)}} 
    \zeta(x,z)\,\magwt_{X^{(z)}}(x) 
    \nonumber\\
    &\to \exp(-td_z)\,\sum_{x \in X^{(z)}} 
    \magwt_{X^{(z)}}(x) 
    \nonumber\\
    &= \exp(-td_z)\, \text{Mag}_{X^{(z)}}(t).
\end{align}

\textbf{Case 1: $z \in X$.}
Since $z \in X$ implies $z \to \mu$ as $V_X \to 0$, we have $d_z = d(\mu,z) \to 0$. Therefore:
\begin{align}
    \rho^t(z, X^{(z)}) 
    &\to \sum_{x \in X^{(z)}} 
    \magwt_{X^{(z)}}(x) 
    = \text{Mag}_{X^{(z)}}(t).
\end{align}
As all points of $X^{(z)}$ collapse to a 
single location, $ \text{Mag}_{X^{(z)}}(t)\to 1$, by definition 
\citep{leinster2013magnitude}. Therefore:
\begin{equation}
    \rho^t(z, X^{(z)}) \to 1 \quad 
    \forall\, z \in X.
\end{equation}
Since all $z \in X$ converge to a same limit:
\begin{equation}
    \mathrm{Var}_{z \in X}
    [\rho^t(z, X^{(z)})] \to 0.
\end{equation}

\textbf{Case 2: $z \notin X$.}
Similar to Case 1, as all points of $X^{(z)}$ collapse to a  single location, $ \text{Mag}_{X}(t)\to 1$. Therefore,
\begin{equation}
    \rho^t(z, ) \to \exp(-td_z), \quad 
    \forall\, z \in X.
\end{equation}
Since all $z \in X$ converge to a same limit:
\begin{equation}
    \mathrm{Var}_{z \in X}
    [\rho^t(z, X)] \to 0.
\end{equation}

\end{proof}

\newpage

\section{Experimental Setup}
\label{app:experimental_setup}

All experiments are implemented in PyTorch~\citep{paszke2019pytorch} with torchvision, NumPy, and scikit-learn~\citep{pedregosa2011scikit}, and run on a single NVIDIA A40 GPU using 4 CPU cores per job. 
System RAM allocations range from 16\,GB for lightweight grokking runs up to 48\,GB for CIFAR-scale training and MP evaluation jobs.

\textbf{Per Sample Memorization Evaluation(Section~\ref{sec:per-sample_mem})}
Each run for a single value of $t$ takes approximately 74 minutes, and a full 10-seed run completes in 12.2 hours in total on a single NVIDIA A40 GPU.

\textbf{Robustness to Neural Collapse (Section~\ref{sec:robustness_nc})}
Processing all snapshots for a single $t$ value takes approximately 7 minutes (Anti-NC, 51 snapshots) and 11 minutes (standard, 81 snapshots) on an NVIDIA A40 GPU.

\section{Magnitude Potential Comparison}\label{sec:mp-kde-figs}

\begin{figure}[h]
    \centering
    \begin{subfigure}[t]{0.26\textwidth}
        \includegraphics[width=\linewidth]{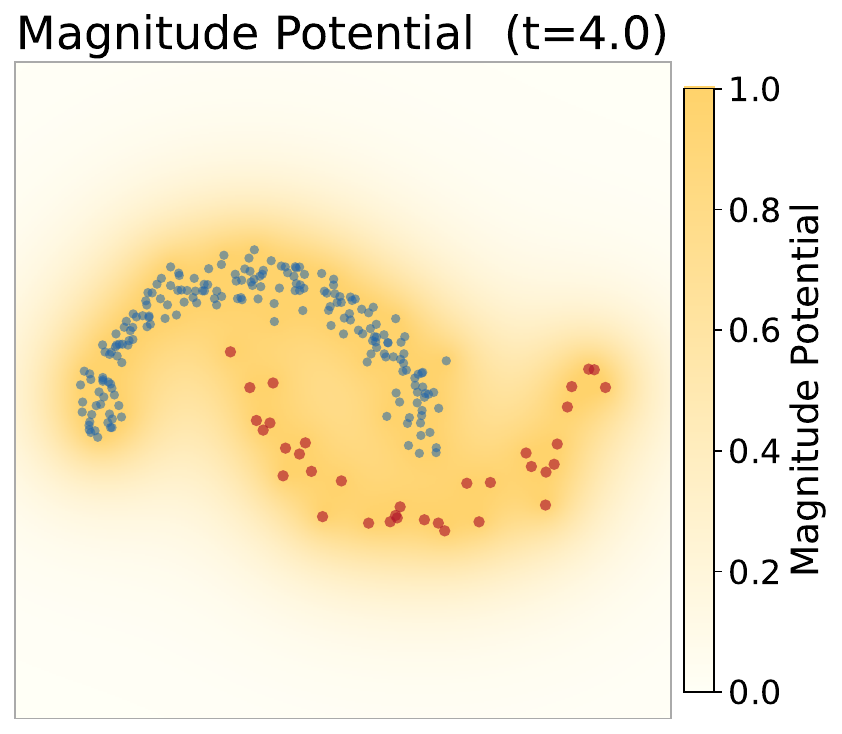}
    \end{subfigure}
    \hfill
    \begin{subfigure}[t]{0.26\textwidth}
        \includegraphics[width=\linewidth]{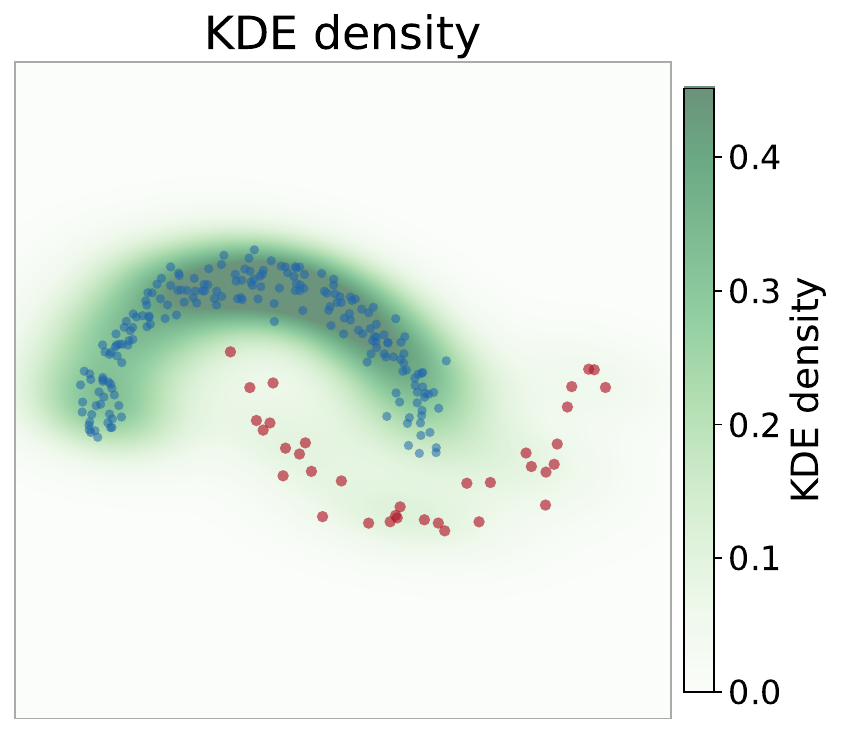}
    \end{subfigure}
    \hfill
    \begin{subfigure}[t]{0.26\textwidth}
        \includegraphics[width=\linewidth]{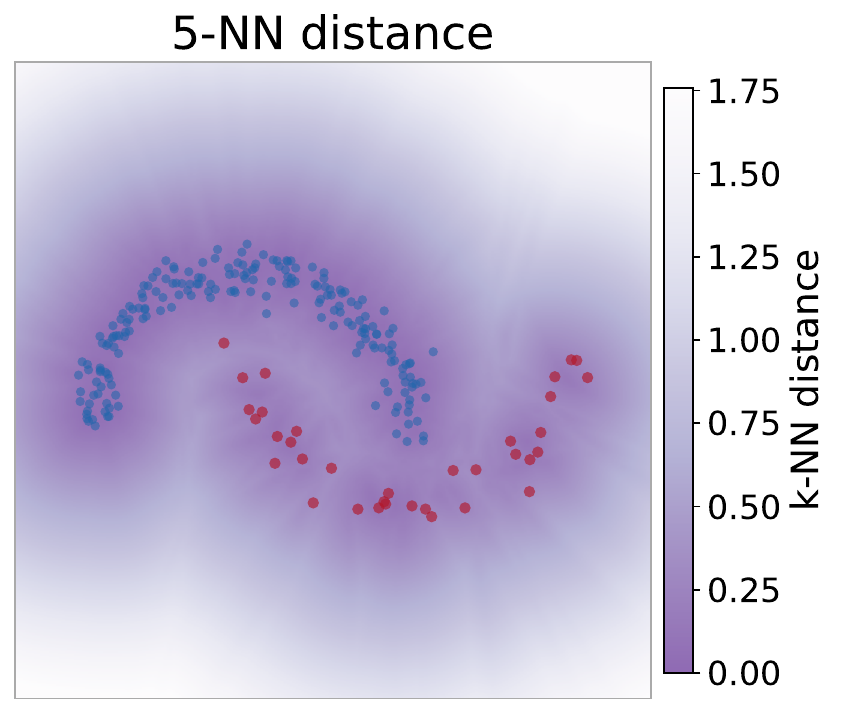}
    \end{subfigure}

    \vspace{0.5em}

    \begin{subfigure}[t]{0.26\textwidth}
        \includegraphics[width=\linewidth]{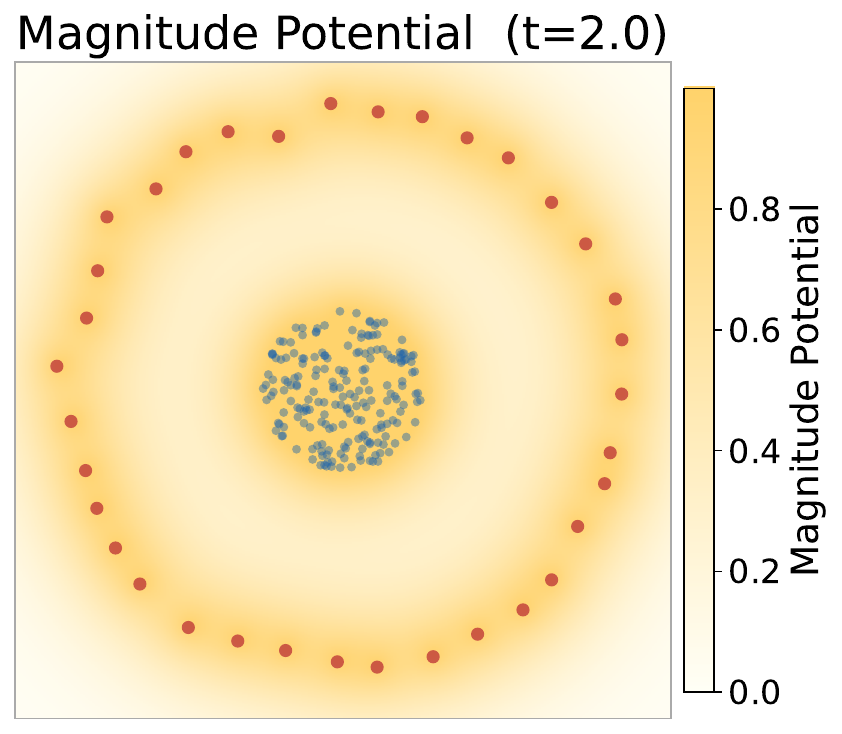}
    \end{subfigure}
    \hfill
    \begin{subfigure}[t]{0.26\textwidth}
        \includegraphics[width=\linewidth]{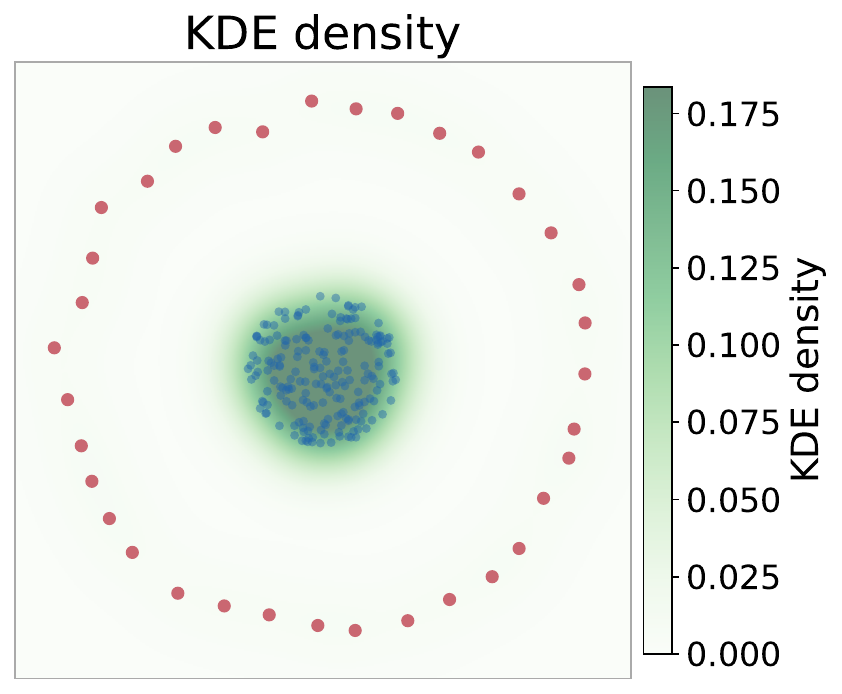}
    \end{subfigure}
    \hfill
    \begin{subfigure}[t]{0.26\textwidth}
        \includegraphics[width=\linewidth]{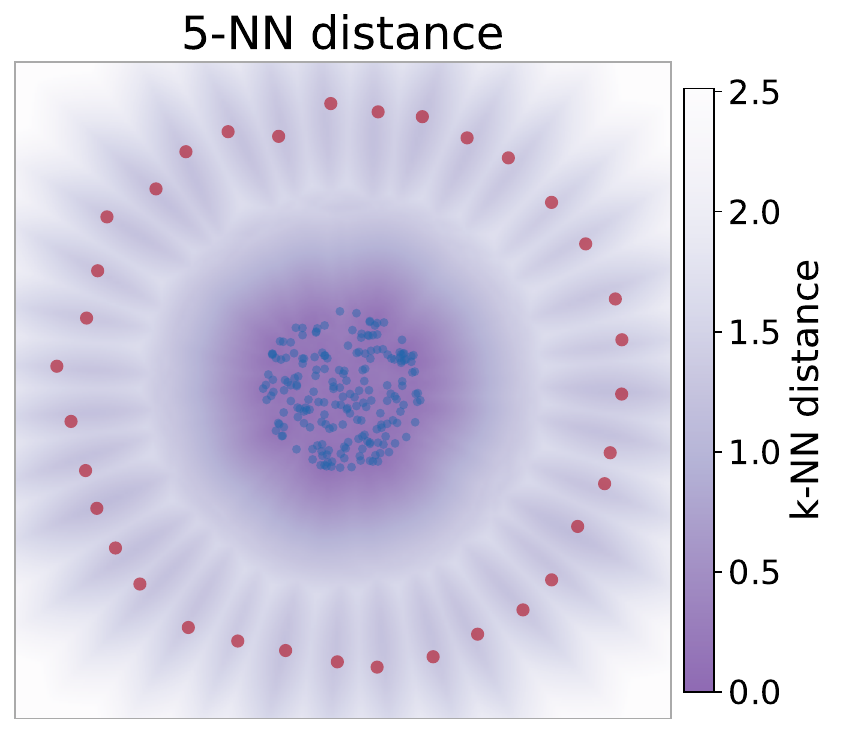}
    \end{subfigure}

    \caption{
        Comparison of Magnitude Potential, KDE, and $k$-NN distance computed on all points in both toy datasets. \emph{Top row}: asymmetric two-moons. \emph{Bottom row}: dense disk and sparse ring. Blue points are dense; red points are sparse. MP assigns high values uniformly across both clusters, regardless of local density, while KDE suppresses the sparse cluster and $k$-NN distance fails to capture the underlying geometry smoothly.
    } 
    \label{fig:toy-comparison}
\end{figure}

\begin{figure}[h]
    \centering
    \begin{subfigure}[t]{0.26\textwidth}
        \includegraphics[width=\linewidth]{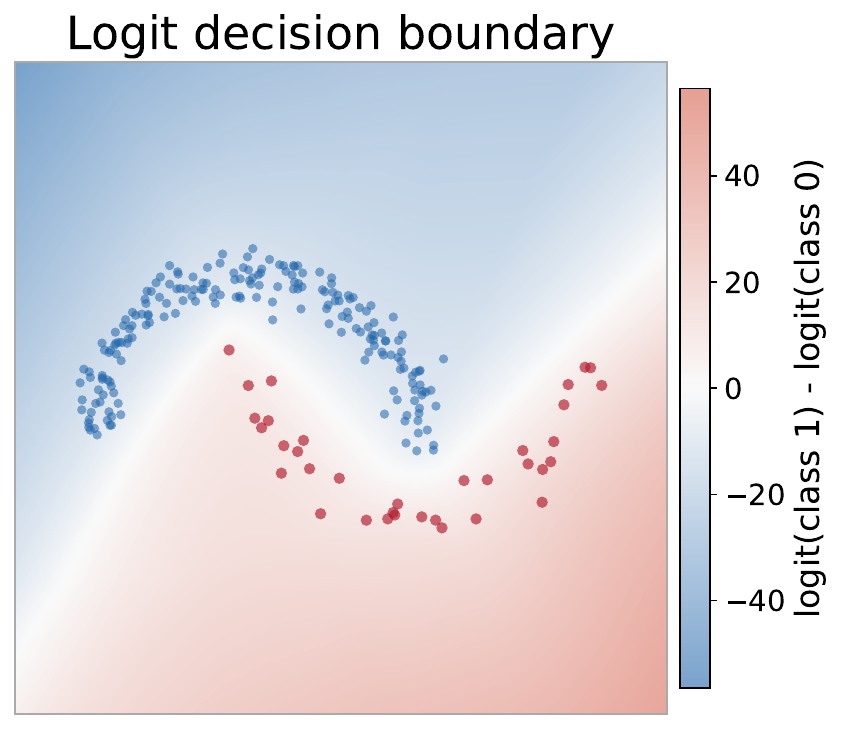}
    \end{subfigure}
    \hfill
    \begin{subfigure}[t]{0.26\textwidth}
        \includegraphics[width=\linewidth]{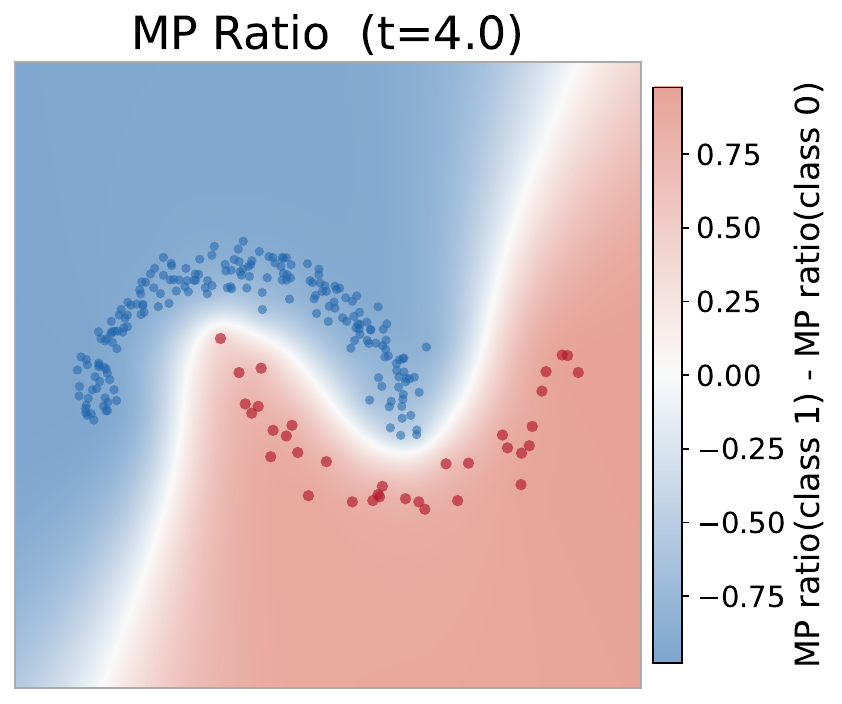}
    \end{subfigure}
    \hfill
    \begin{subfigure}[t]{0.26\textwidth}
        \includegraphics[width=\linewidth]{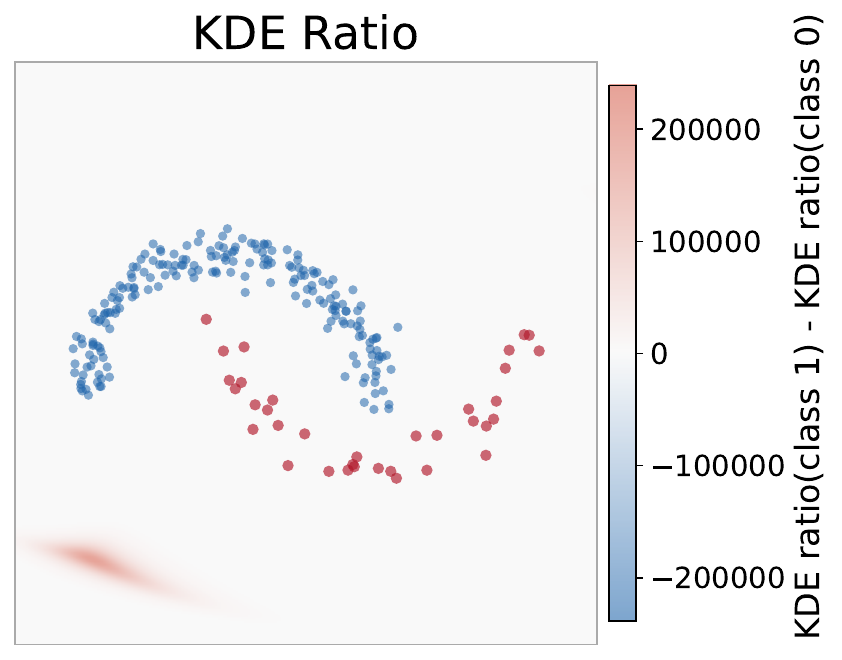}
    \end{subfigure}

    \vspace{0.5em}

    \begin{subfigure}[t]{0.26\textwidth}
        \includegraphics[width=\linewidth]{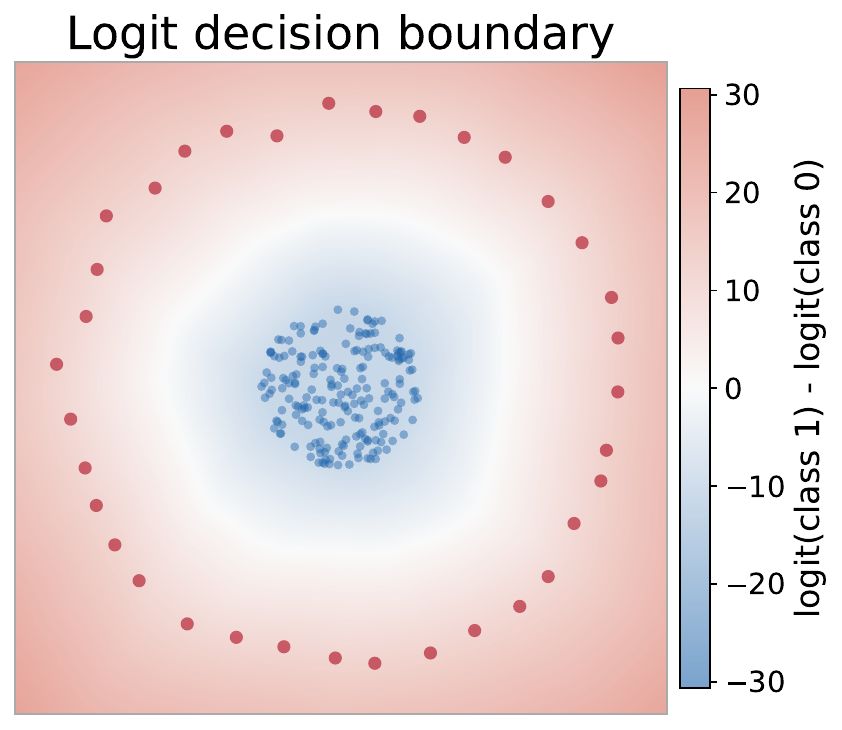}
    \end{subfigure}
    \hfill
    \begin{subfigure}[t]{0.26\textwidth}
        \includegraphics[width=\linewidth]{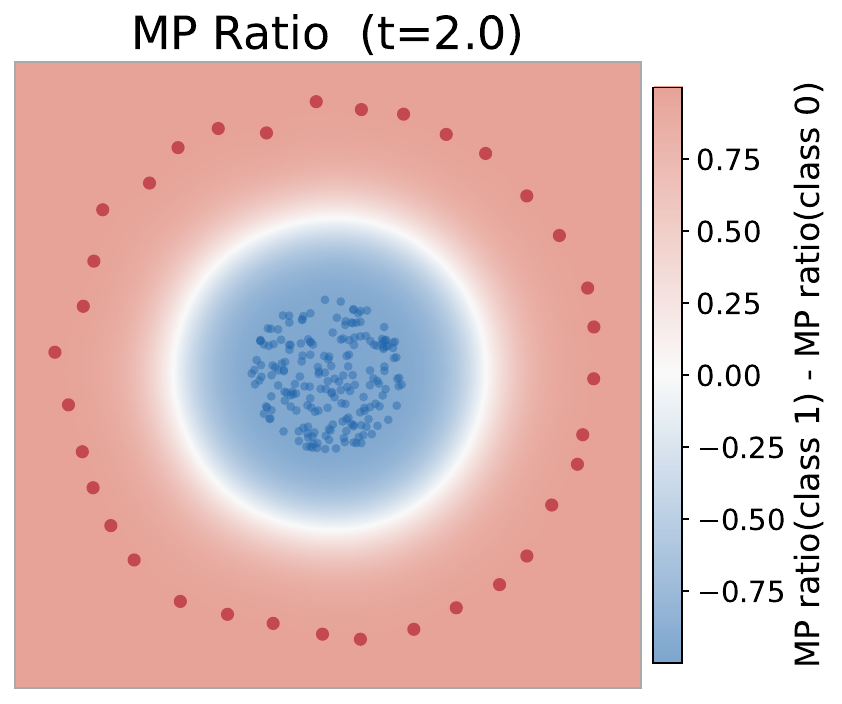}
    \end{subfigure}
    \hfill
    \begin{subfigure}[t]{0.26\textwidth}
        \includegraphics[width=\linewidth]{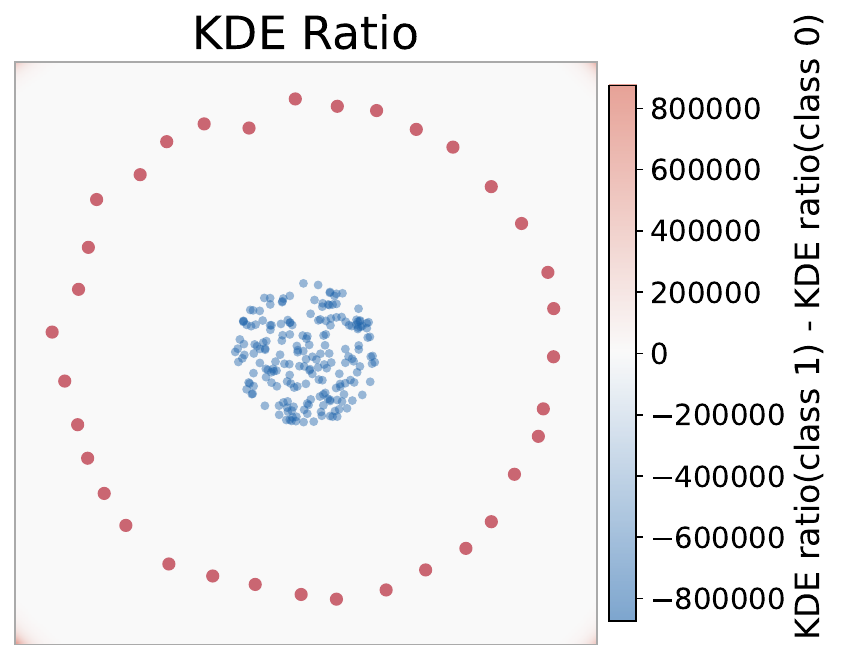}
    \end{subfigure}

    \caption{
        Comparison of logit decision boundary, MP ratio, and KDE ratio on two toy datasets.  \emph{Top row}: asymmetric two-moons. \emph{Bottom row}: dense disk and sparse ring. Blue points are dense; red points are sparse.
        \textbf{Logit} shows the pre-softmax decision boundary learned by a trained classifier.
        \textbf{MP ratio} at every point is computed as defined in Section~\ref{sec:mp}.
        \textbf{KDE ratio} has a similar structure to the MP ratio, but for KDE: the within-class KDE divided by the entire-dataset KDE.
        \textbf{Model:} For each dataset, a two-layer ReLU MLP with 64 hidden units is trained with Adam ($\text{lr}=10^{-3}$, cross-entropy loss) for 2000 epochs until full training accuracy is reached. 
    }
    \label{fig:toy-comparison_ratio}
\end{figure}

\newpage

\section{Fitting and Decision Boundaries -- 2D Toy Dataset}\label{app:toy_dataset}
For comparison, we adopt the \emph{relative flatness} metric proposed by~\citet{petzka2021relative}, which considers the Hessian trace normalized by the weight norm, a quantity theoretically and empirically shown to align with generalization. To make this computable in large state-of-the-art neural networks, we employ the alternative closed-form upper bound introduced by~\citet{walter2024uncanny}, which is valid in the penultimate layer under cross-entropy loss.

While these transitions can be captured by the magnitude potential ratio, other methods can only approve a subset of these observations: generalization is supported by increases in validation accuracy and decreases in validation loss.
The decrease in both losses and relative flatness and an increase in weight norm are all signs of model fitting, but detecting the overfitting early can not be done solely by them. Overfitting can only be observed after a longer time, with these behaviors continuing to occur.

\begin{figure}[htbp]
  \begin{center}
    \includegraphics[width=0.95\linewidth]{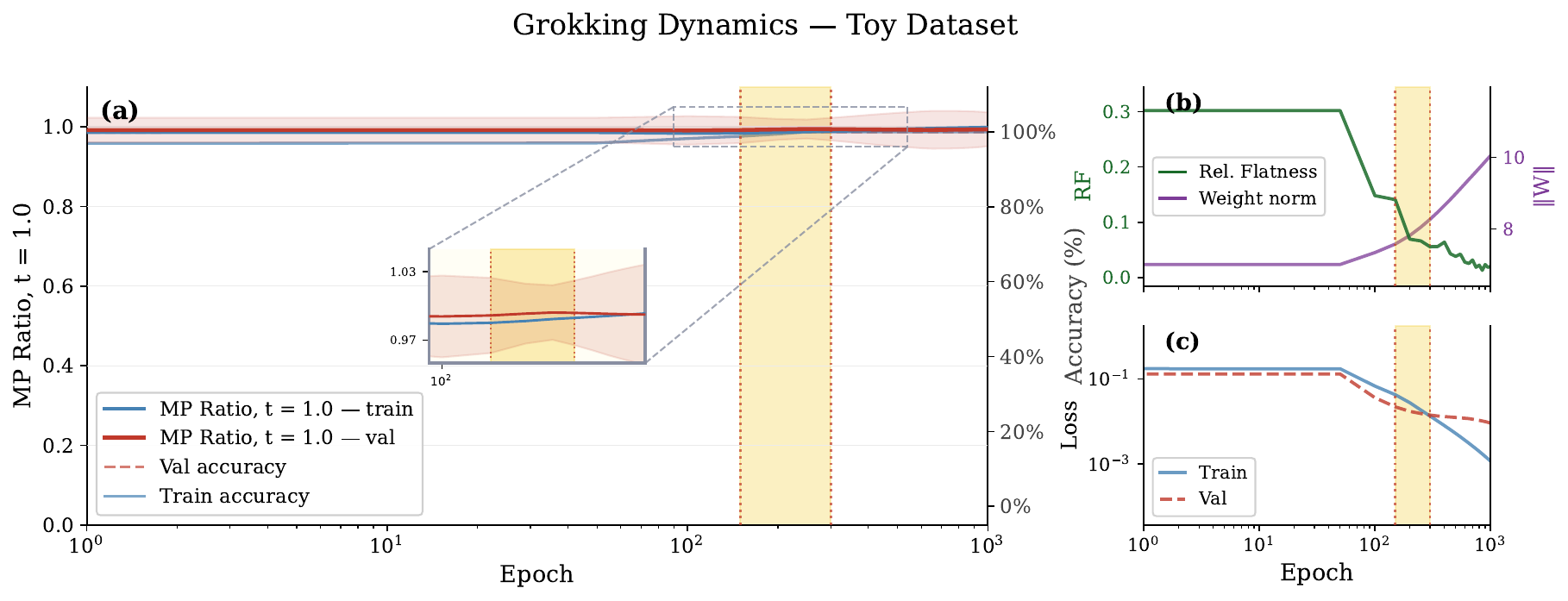}
  \end{center}
  \caption{Validation MP Ratio ($t = 1.0$) for Toy Dataset. The shaded band around the MP Ratio curve represents the standard deviation of the ratio across validation samples at each epoch.
  When the Validation decreases, it is the exact phase that the model changes the decisions on boundary structure completely.
  }
  \label{fig:compact_toy}
\end{figure}

\newpage
\begin{figure}[htbp]
  \begin{center}
    \includegraphics[width=0.90\linewidth]{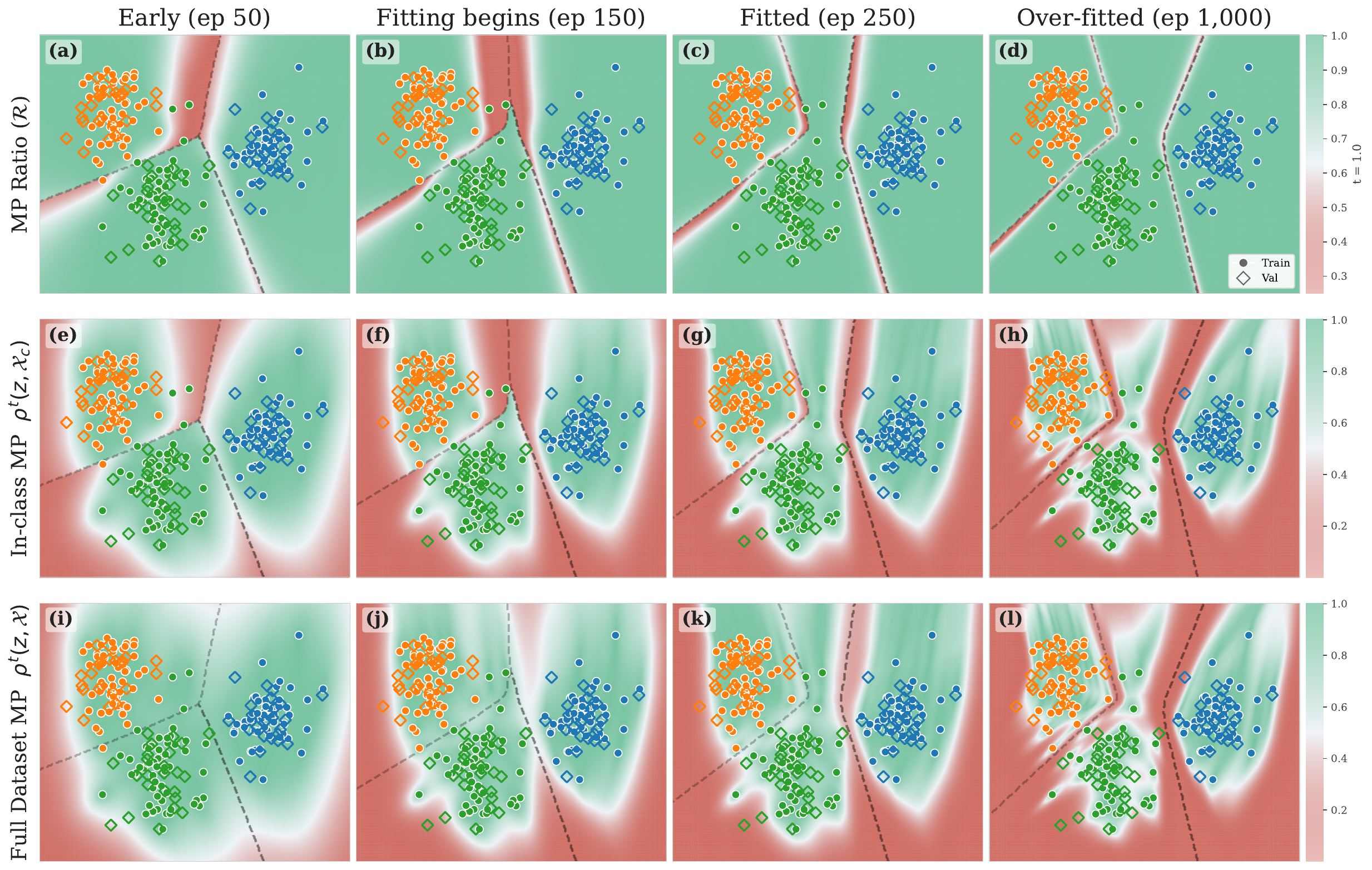}
  \end{center}
  \caption{
  The figure shows the role of in-class and the entire dataset magnitude potential ($t = 1.0$) in the ratio.
  Magnitude potential decomposition on the 2D toy (Gaussian) dataset at four key epochs (columns), aligned with the epochs marked in the dynamics figure. 
  Row 1 is the MP ratio, the magnitude potential within the class to the entire dataset in the logit-space.
  Row 2 is the in-class potential for each point in the boundary region of a corresponding class. 
  Row 3 is the entire dataset potential. 
  Filled circles are training points; open diamonds are validation points; white contours show decision boundaries.
  }
  \label{fig:mp_heatmap}
\end{figure}

\newpage

\section{Grokking in Modular Addition -- MLP with MSE}
\label{app:modular_mse}
We provide further information on this experiment.
\begin{figure}[htbp]
  \begin{center}
    \includegraphics[width=1.0\linewidth]{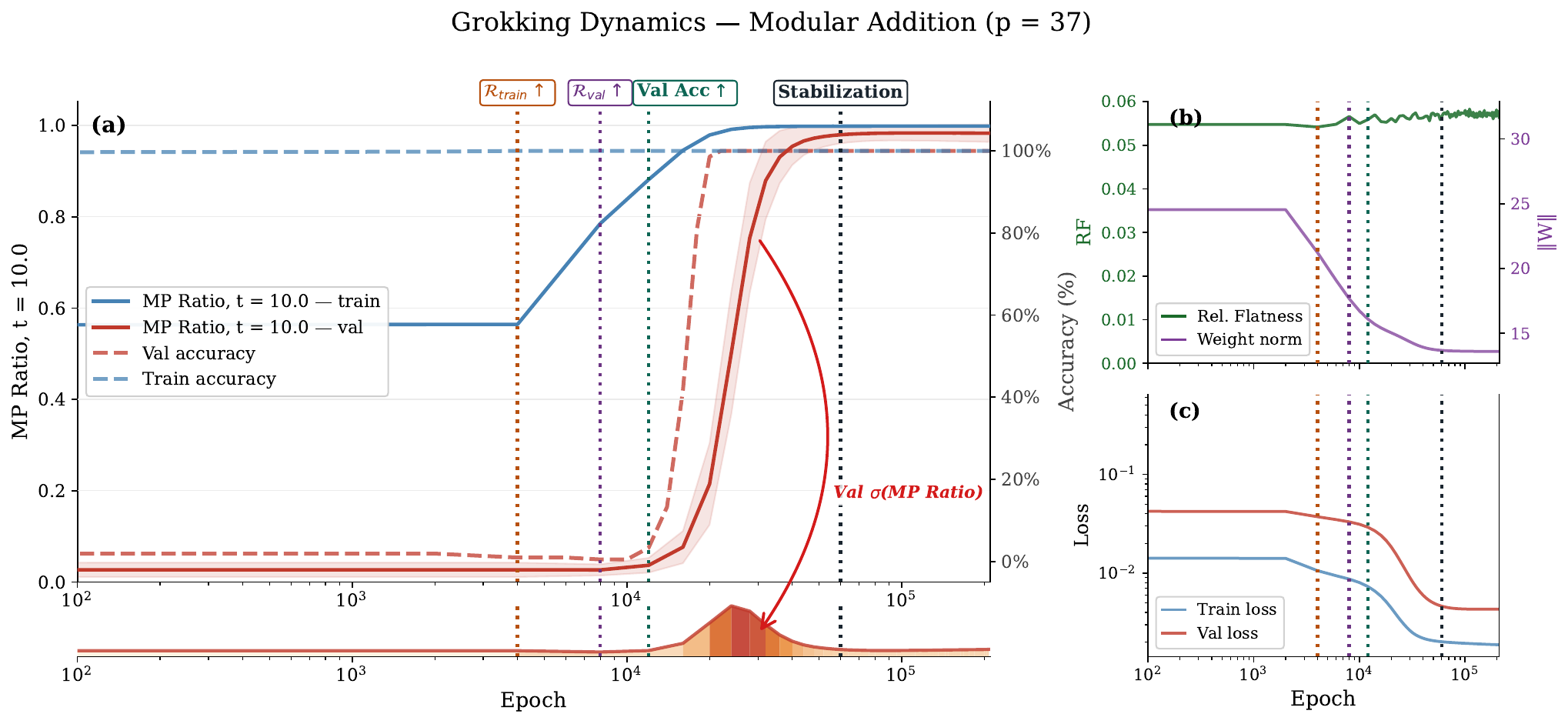}
  \end{center}
  \caption{Grokking dynamics — modular addition ($ \pmod{37}$). \textbf{(a)}~MP ratio, $t=10$) for the training (blue) and validation (red) sets, with accuracy curves overlaid (dashed). The shaded band denotes $\pm 1$ standard deviation of the validation MP ratio and the narrow strip below plots this standard deviation directly.
  \textbf{(b)}~Relative flatness $\kappa$ \citep{petzka2021relative} (green) and weight norm $\|W\|$ (purple).
  \textbf{(c)}~Train and validation loss (log scale). Vertical markers indicate: onset of train MP ratio rise $\mathcal{R}_{\mathrm{tr}}{\uparrow}$ (orange), onset of validation MP ratio rise $\mathcal{R}_{\mathrm{val}}{\uparrow}$ (violet), validation accuracy increasing (teal, $\uparrow$), validation accuracy reaching near-perfect generalization (green, $\sim$), and a stabilization point (black). All panels share the same log-scaled epoch axis, markers are aligned across panels.}
  \label{fig:modular_grokking}
\end{figure}

\section{Grokking in Modular Addition -- Transformer with Cross-Entropy, Sparse Training (\citet{nanda2023progress})}
\label{app:modular_transformer}
\textbf{Dataset:}
Modular addition over $\mathbb{Z}_{113}$: all $p^2 = 12{,}769$ ordered pairs $(a,b)$ with $0 \le a,b < p=113$, labelled by $(a+b)\bmod p$. Each input is the token-ID sequence $[a,\, b,\, {=}]$ (integer IDs), following
\citet{nanda2023progress}. The data is split 30/70 into train/validation (sparse-data regime).

\textbf{Model:}
A 1-layer Transformer with $d_{\text{model}}=128$, 4 attention heads, MLP hidden dimension $d_{\text{mlp}}=512$, ReLU activations, and no LayerNorm, with learned token and positional embeddings over a vocabulary of size $p+1$. Trained with AdamW ($\eta=10^{-3}$, $\lambda=1.0$, full-batch) using cross-entropy loss for 40\,000 epochs.

\textbf{Computation:}
Training completed in $3.7$ minutes on an NVIDIA L4 GPU (23 GB), and the entire experiment  MP ratio trajectories are computed at $51$ snapshots across $9$ temperature values $t \in {0.01, 0.05, 0.2, 0.5, 1.0, 3.0, 5.0, 10.0, 50.0}$, taking $6.7$ hours. Meaning that on average it takes about $44$ minutes to compute the MP ratios through the entire training process with $51$ snapshots, and each snapshot's computation is, on average, less than a minute.

\begin{figure}[htbp]
     \centering
     \begin{subfigure}[t]{0.47\textwidth}
         \centering
         \includegraphics[width=\linewidth]{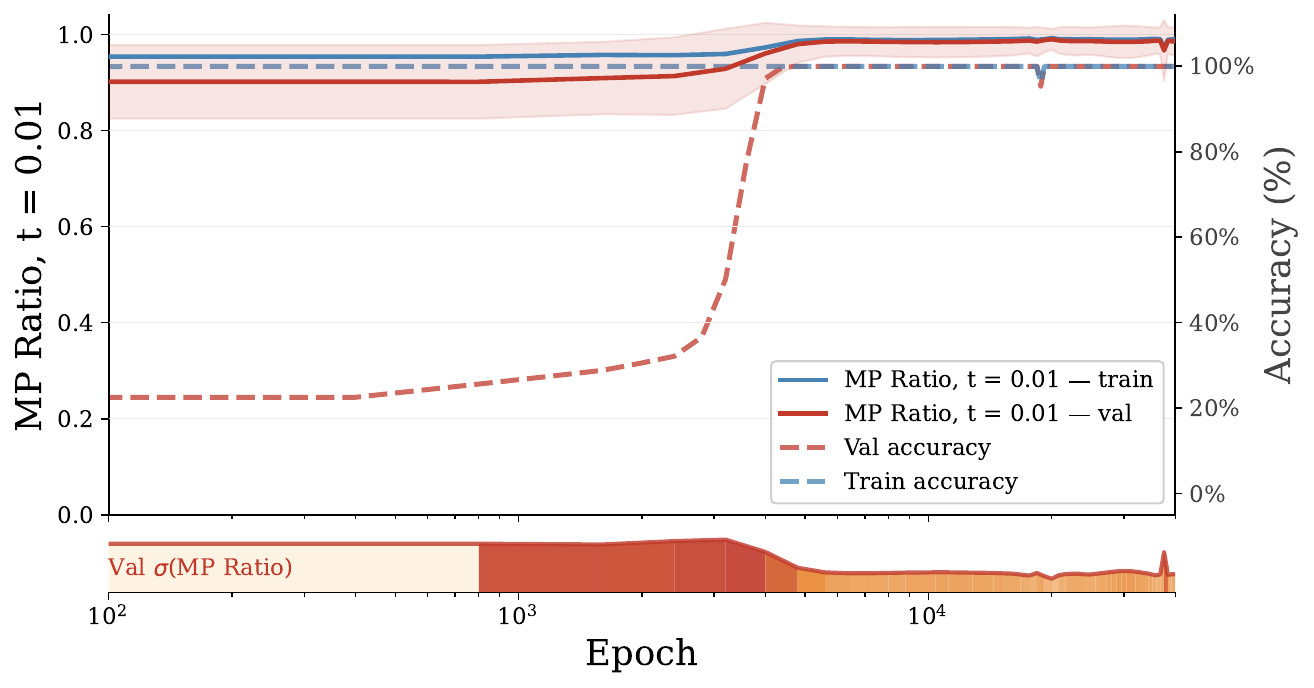}
         \caption{MP ratio vs accuracy functions.}
         \label{subfig:mp_ratio}
     \end{subfigure}
     \hfill
     \begin{subfigure}[t]{0.49\textwidth}
         \centering
         \includegraphics[width=\linewidth]{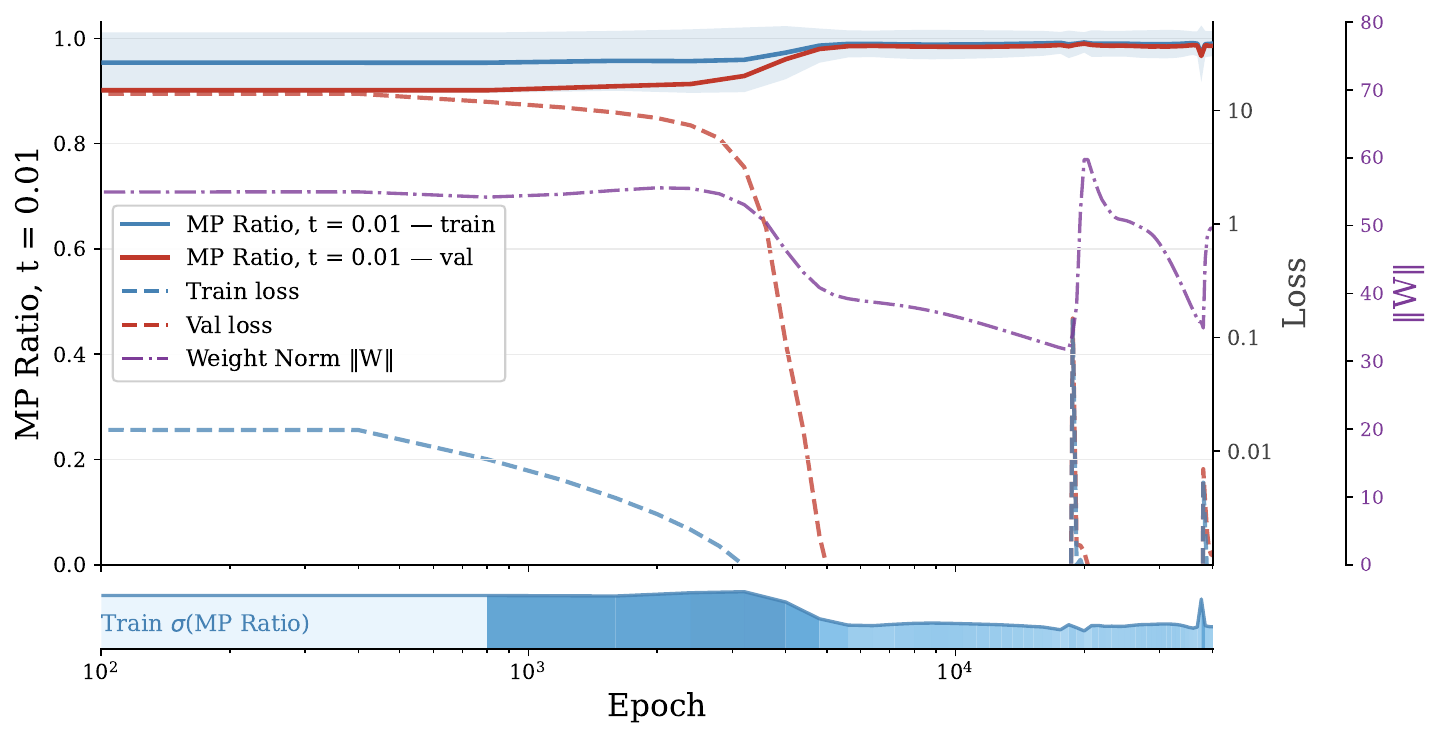} 
         \caption{MP ratio vs loss functions and weight norm.}
         \label{subfig:complexity}
     \end{subfigure}
  \caption{ \textbf{(a)}~MP ratio, $t=0.01$ for the training (blue) and validation (red) sets, with accuracy curves overlaid (dashed). The shaded band denotes the standard deviation of the validation MP ratio, and the narrow strip below plots this standard deviation directly.
  \textbf{(b)}~Train/validation loss(dashed), and weight norm $\|W\|$ (purple).
  }
  \label{fig:modular_b}
\end{figure}

\newpage
\section{Robustness to Neural Collapse -- CIFAR-10}
\begin{figure}[h]
    \centering
    \begin{subfigure}[b]{0.34\textwidth}
        \includegraphics[width=\textwidth]{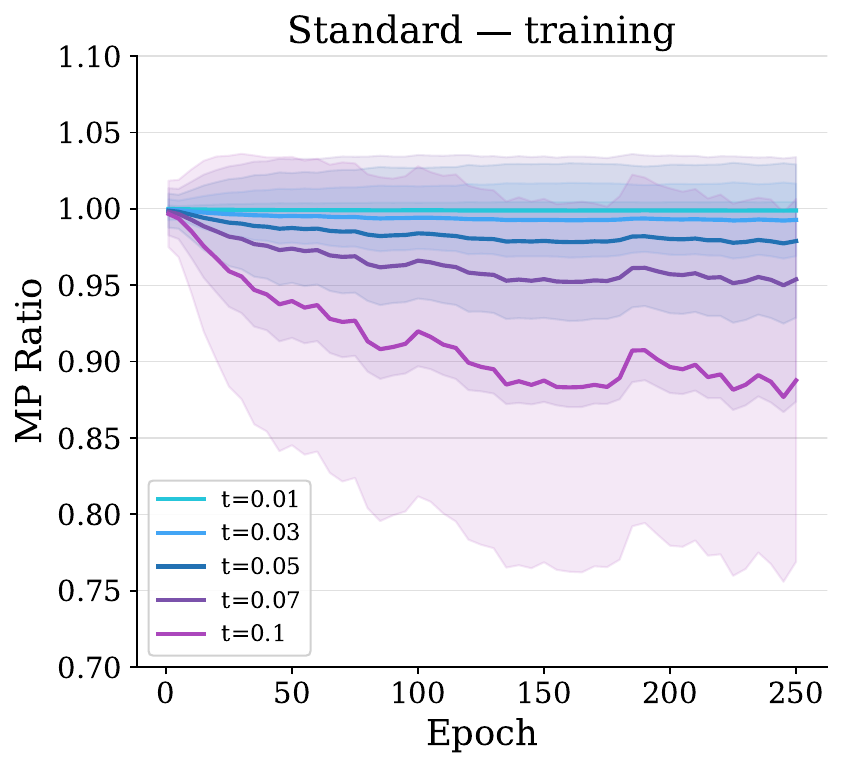}
        \caption{Standard — training}
    \end{subfigure}
    \hspace{40pt}
    \begin{subfigure}[b]{0.34\textwidth}
        \includegraphics[width=\textwidth]{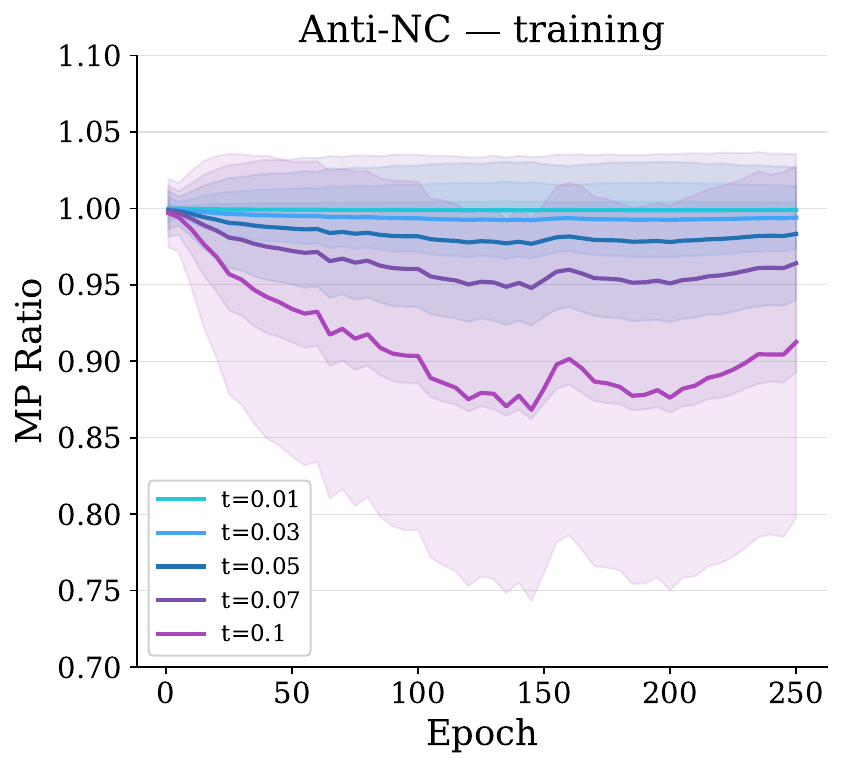}
        \caption{Anti-NC — training}
    \end{subfigure}

    \vspace{4pt}

    \begin{subfigure}[b]{0.34\textwidth}
        \includegraphics[width=\textwidth]{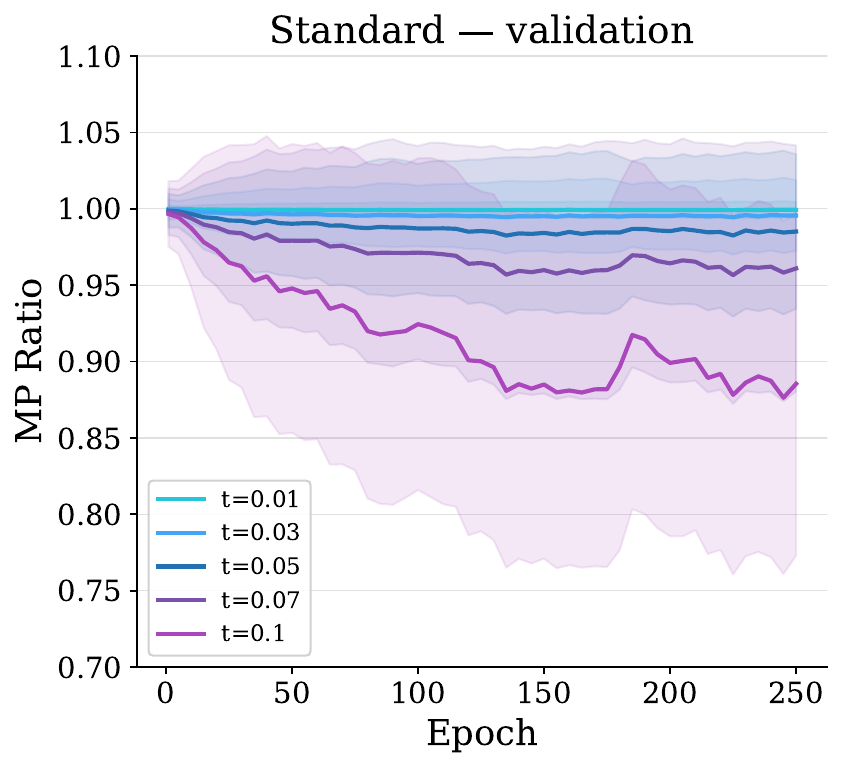}
        \caption{Standard — validation}
    \end{subfigure}
    \hspace{40pt}
    \begin{subfigure}[b]{0.34\textwidth}
        \includegraphics[width=\textwidth]{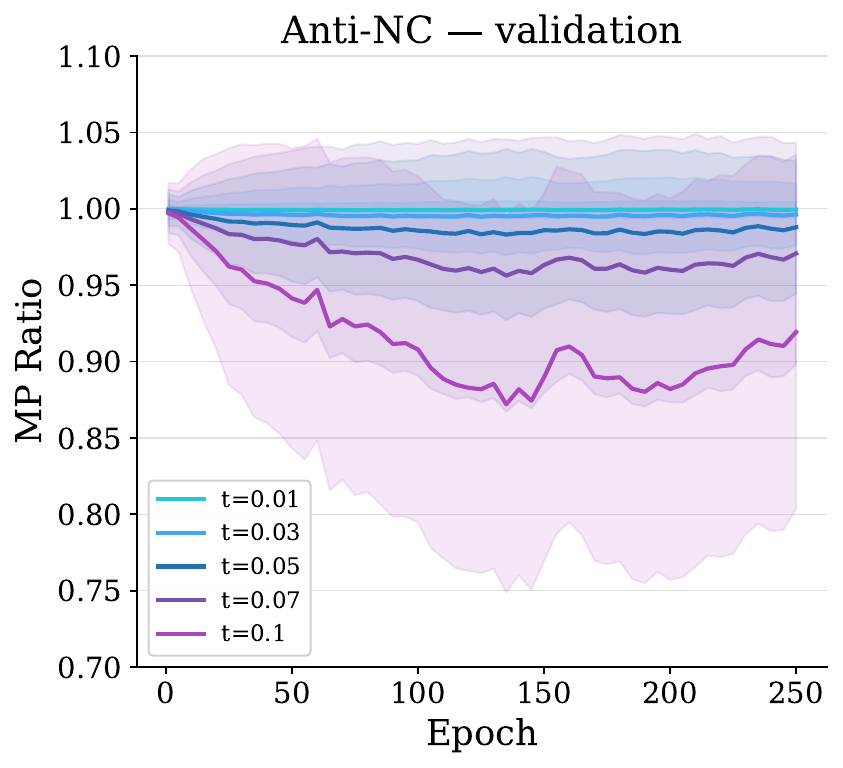}
        \caption{Anti-NC — validation}
    \end{subfigure}

    \caption{MP ratio over training epochs for multiple values of $t$ (mean $\pm$ std).
    Top row: training set. Bottom row: validation set.
    Left column: standard cross-entropy model. Right column: Anti-NC regularized model.
    The trajectory remains qualitatively similar, indicating that MP ratio is robust to both the choice of scale parameter and whether neural collapse is encouraged or suppressed.}
    \label{fig:multi_t}
\end{figure}

\newpage
\section{Per Sample Analysis: Memorization Detection on CIFAR-100}\label{app:per_sample_mem}
\begin{figure}[h]
        \centering
    \begin{subfigure}[b]{0.32\textwidth}
        \includegraphics[width=\textwidth]{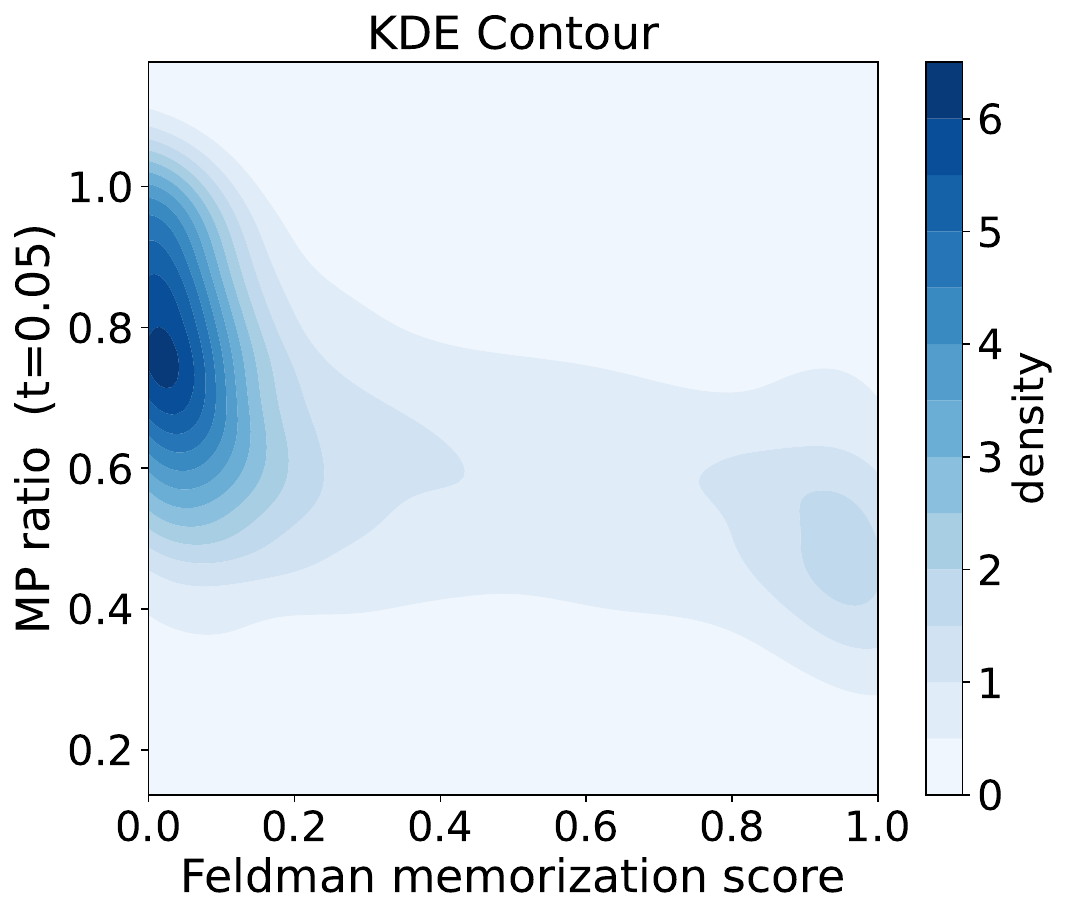}
        \caption{Single, $t=0.05$}
    \end{subfigure}
    \hfill
    \begin{subfigure}[b]{0.32\textwidth}
        \includegraphics[width=\textwidth]{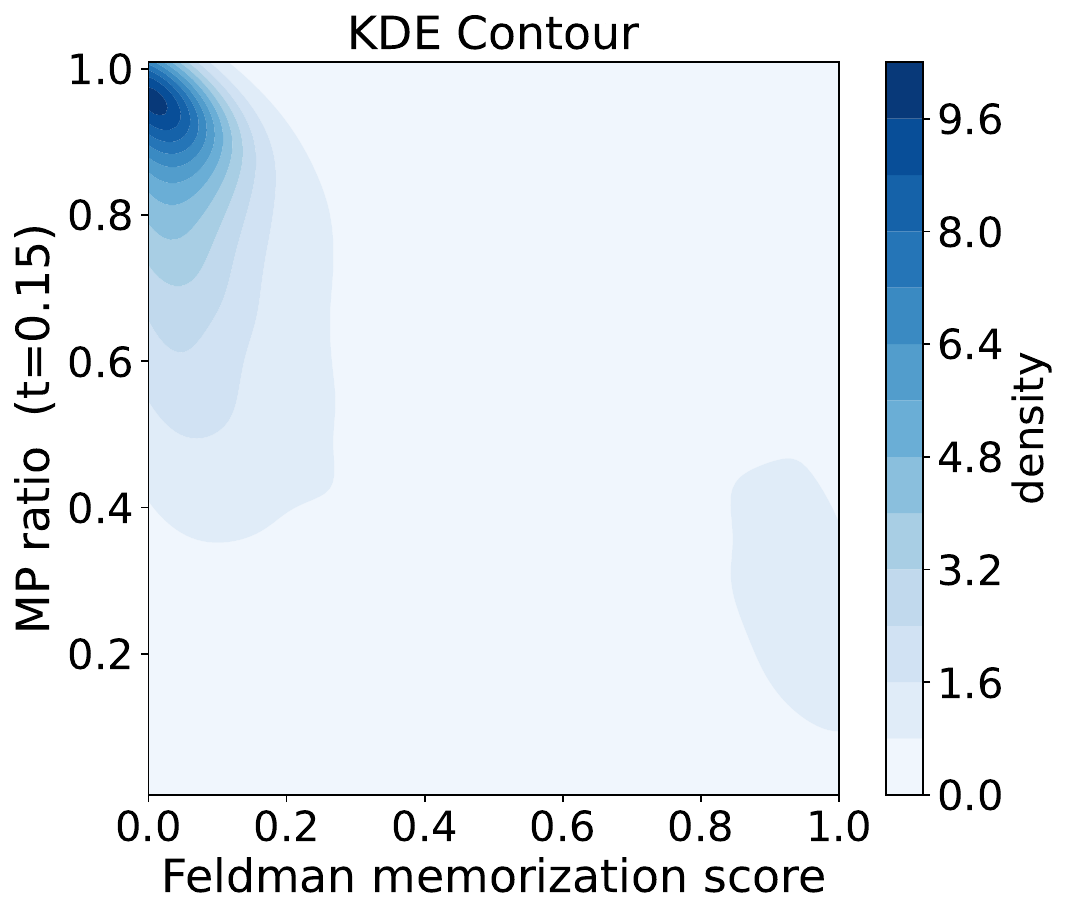}
        \caption{Single, $t=0.15$}
    \end{subfigure}
    \hfill
    \begin{subfigure}[b]{0.32\textwidth}
        \includegraphics[width=\textwidth]{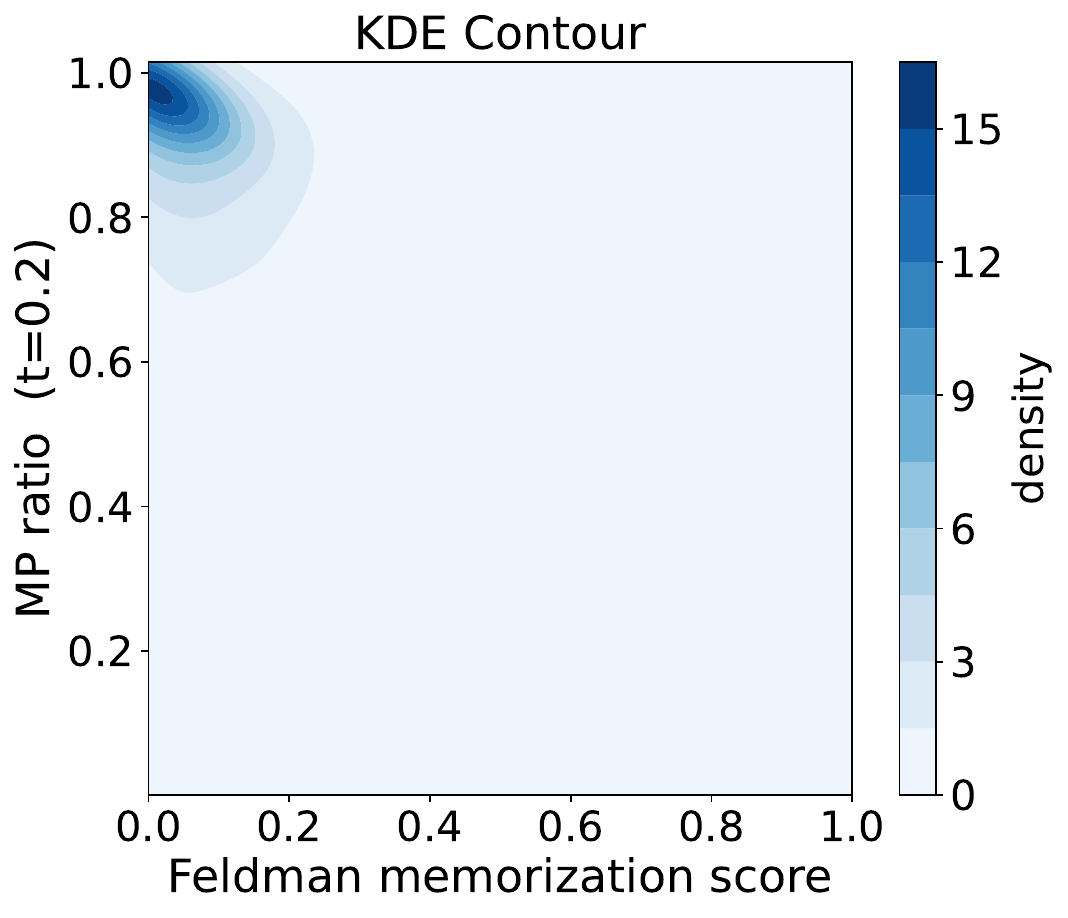}
        \caption{Single, $t=0.20$}
    \end{subfigure}

    \vspace{6pt}

    \begin{subfigure}[b]{0.32\textwidth}
        \includegraphics[width=\textwidth]{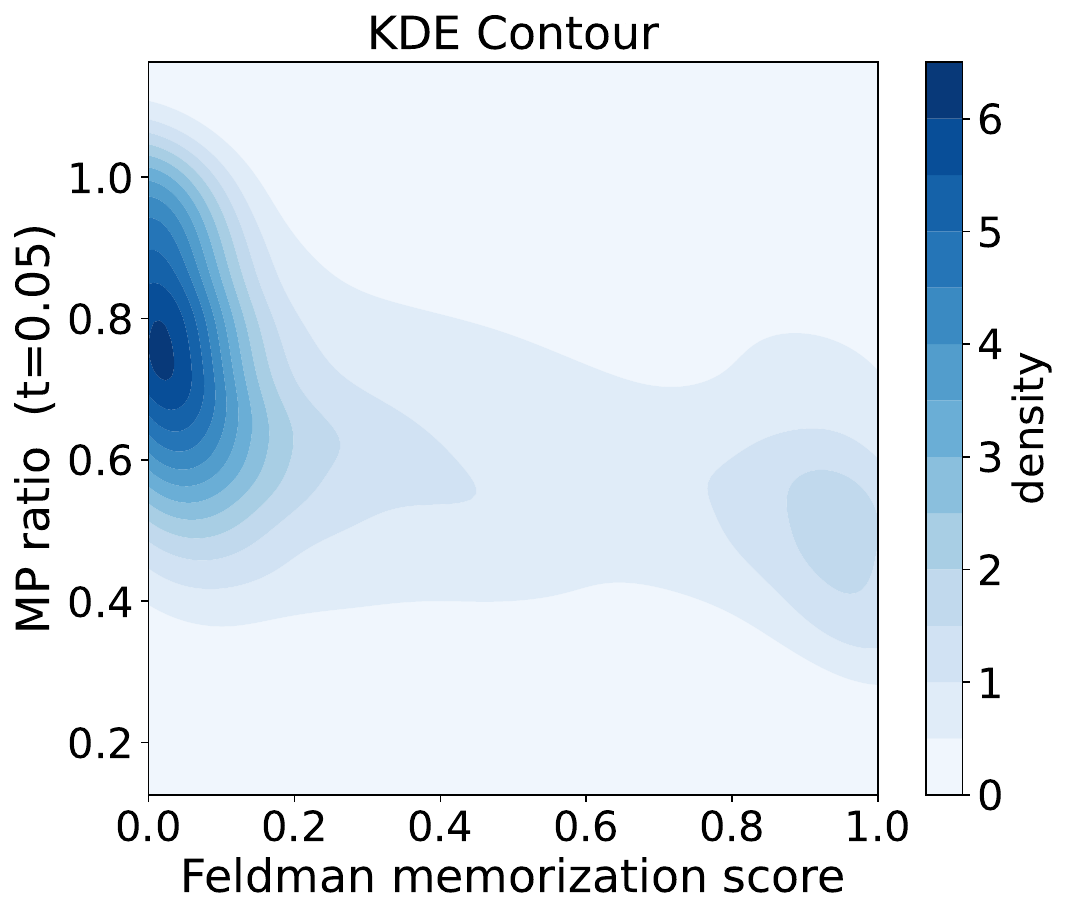}
        \caption{10-seed avg, $t=0.05$}
    \end{subfigure}
    \hfill
    \begin{subfigure}[b]{0.32\textwidth}
        \includegraphics[width=\textwidth]{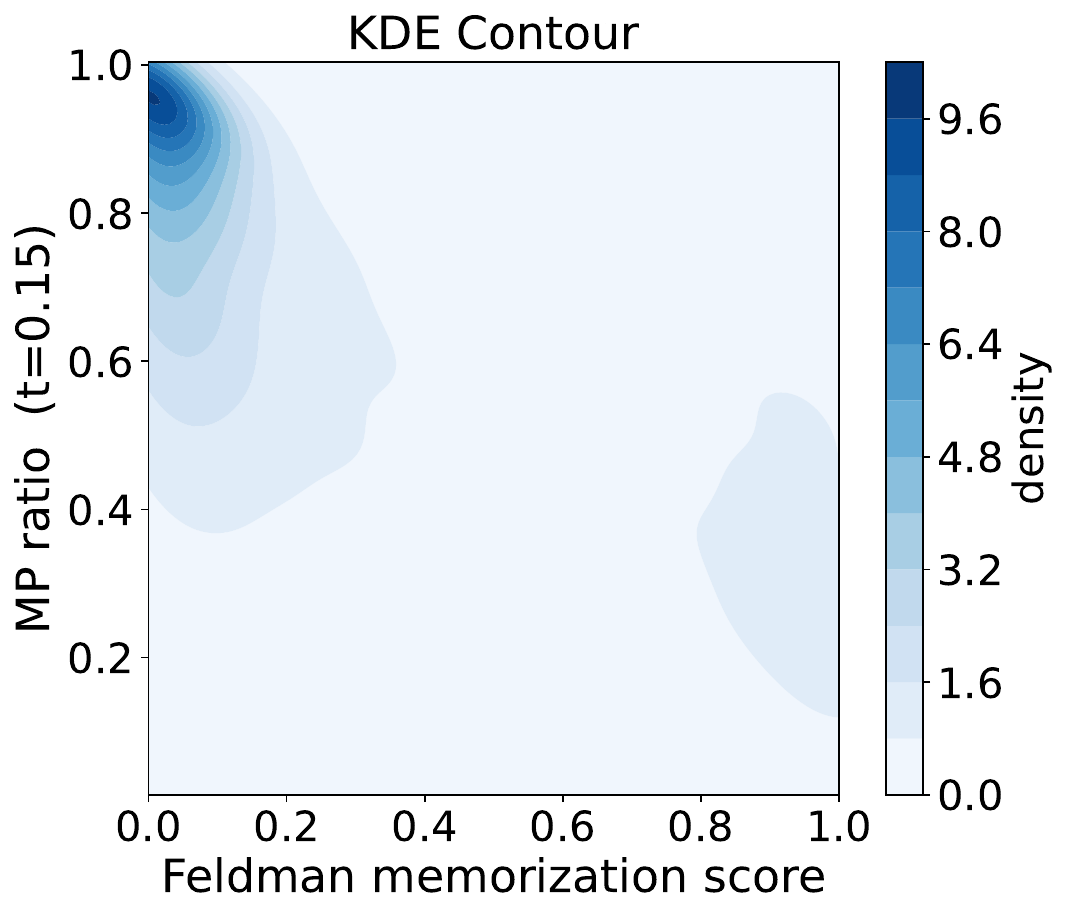}
        \caption{10-seed avg, $t=0.15$}
    \end{subfigure}
    \hfill
    \begin{subfigure}[b]{0.32\textwidth}
        \includegraphics[width=\textwidth]{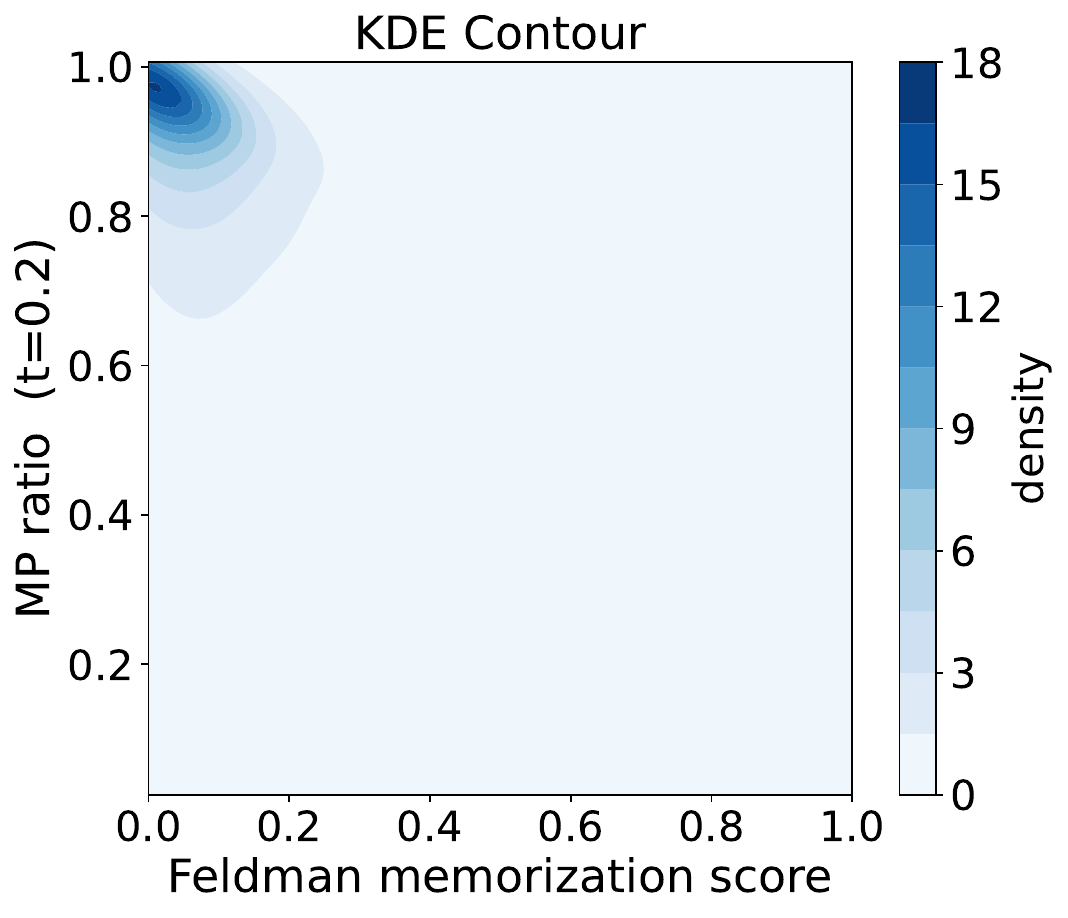}
        \caption{10-seed avg, $t=0.20$}
    \end{subfigure}

    \caption{Joint density of MP ratio versus Feldman memorization score on CIFAR-100 (KDE filled contours), with subsampling uniformly $200$ points per class). In-cluster points with low memorisation scores form a large cluster with higher MP ratio, while memorised points with low MP ratio iis found more toward bottom right. Top row: single random subsample. Bottom row: average over 10 runs of independent subsamplings. %
    The three columns show how the sensitivity of the MP ratio to local geometry varies with the scaling parameter~$t$. For larger $t$ up to 0.20, the differences in MP ratio between memorised and non memorised points are larger. 
    }
    \label{fig:feldman_kde}
\end{figure}

\newpage
Per-sample memorization values exhibit greater MP ratio variability, rather than the mean, as we demonstrate in Figure~\ref{fig:feldman_spaghetti} and Figure~\ref{fig:feldman_scatter_errbar}.

\begin{figure}[h]
    \centering
    \includegraphics[width=0.32\linewidth]{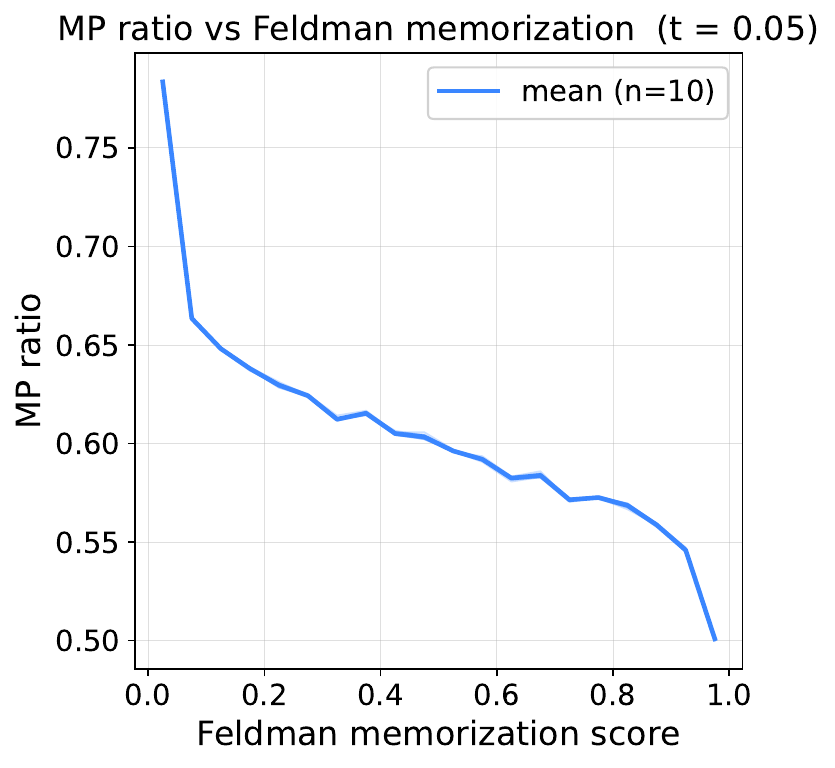}
    \hfill
    \includegraphics[width=0.32\linewidth]{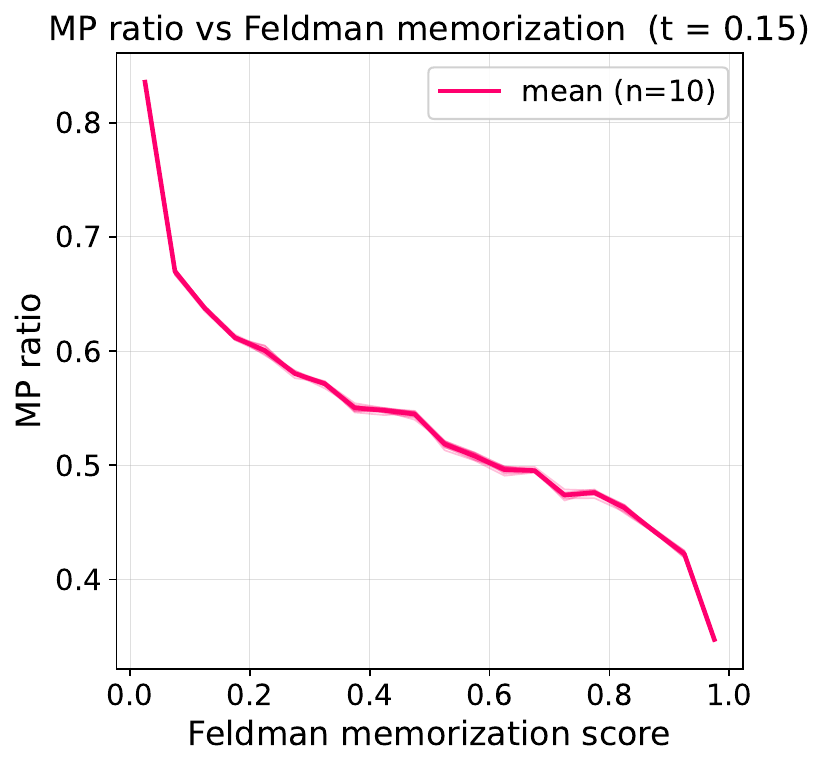}
    \hfill
    \includegraphics[width=0.32\linewidth]{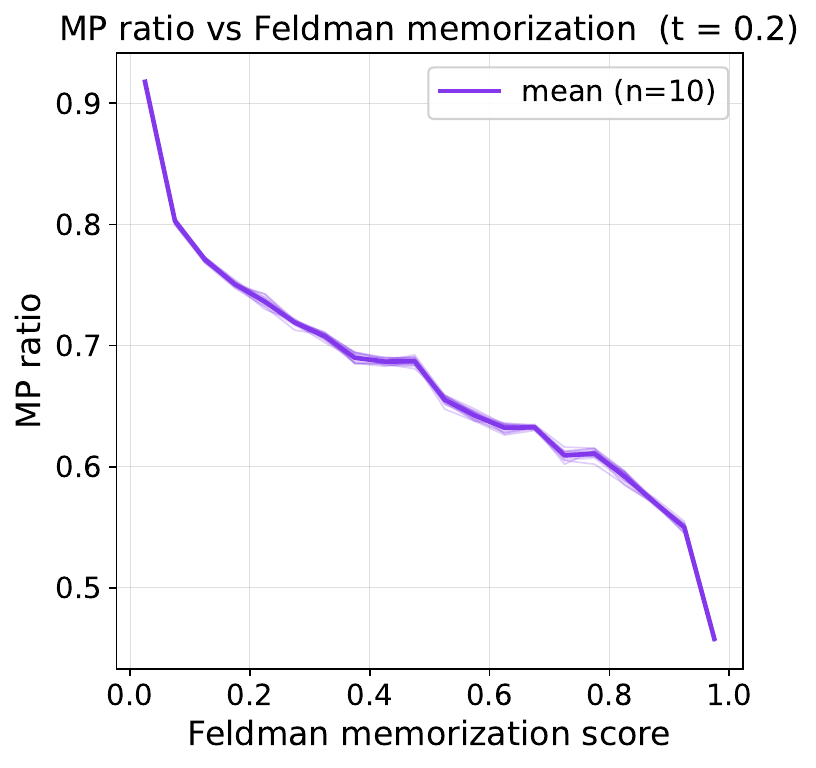}
    \caption{Binned mean MP ratio vs.\ Feldman memorization score across 10 random subsampling seeds ($t \in \{0.05, 0.15, 0.20\}$). Each thin line is one seed; the bold curve is the mean; the shaded band is $\pm$1 std. The tight bundle confirms that MP ratio is stable across different training subsets.}
    \label{fig:feldman_spaghetti}
\end{figure}

\begin{figure}[h]
    \centering
    \includegraphics[width=0.32\linewidth]{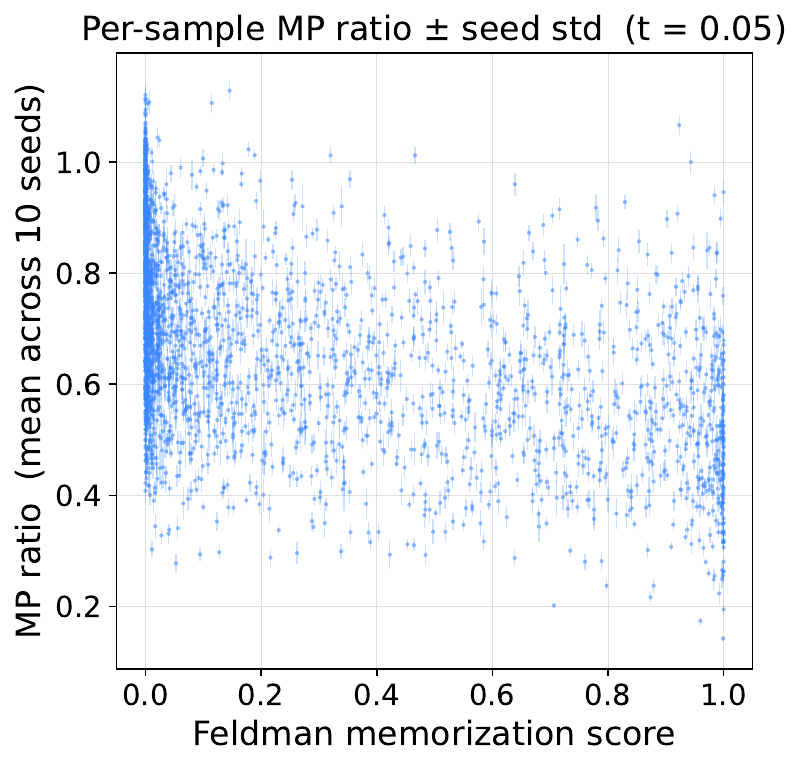}
    \hfill
    \includegraphics[width=0.32\linewidth]{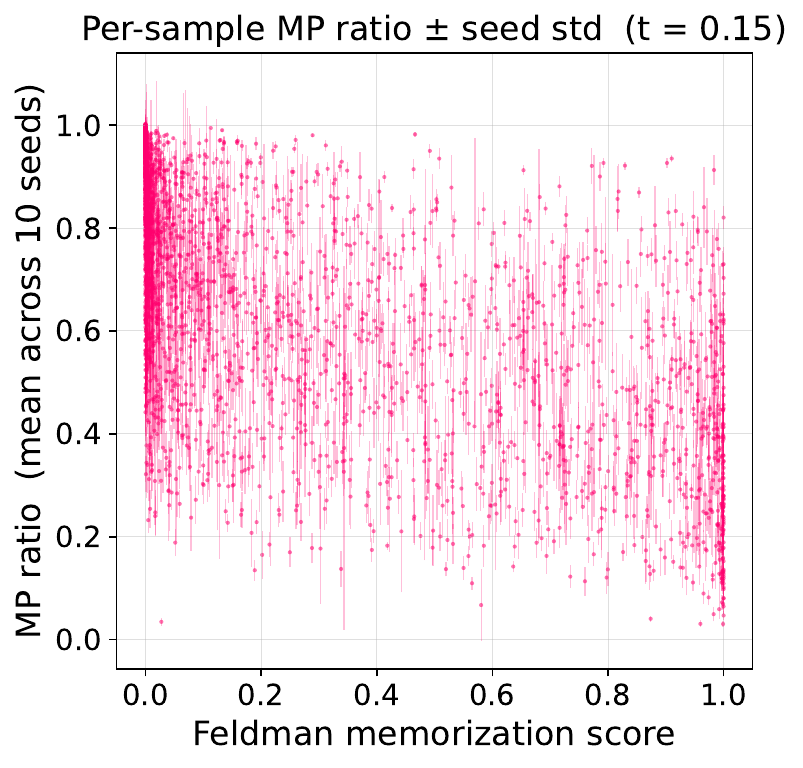}
    \hfill
    \includegraphics[width=0.32\linewidth]{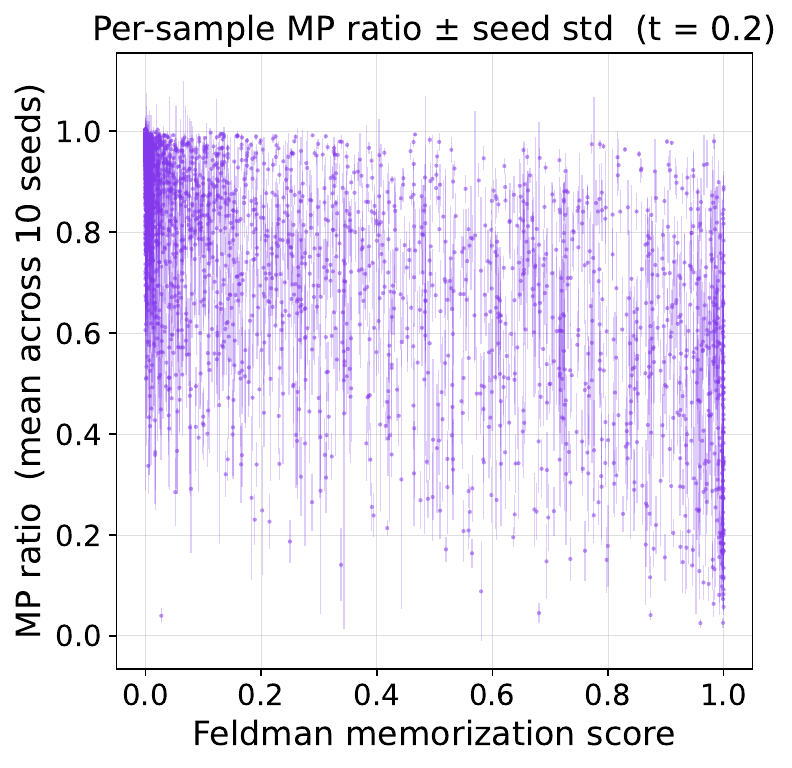}
    \caption{Per-sample MP ratio (mean across 10 seeds, $n=3000$ subsampled points) vs.\ Feldman memorization score. Vertical bars show per-sample standard deviation across seeds.}
    \label{fig:feldman_scatter_errbar}
\end{figure}

\newpage

\begin{figure}[h]
    \centering
    \includegraphics[width=\textwidth]{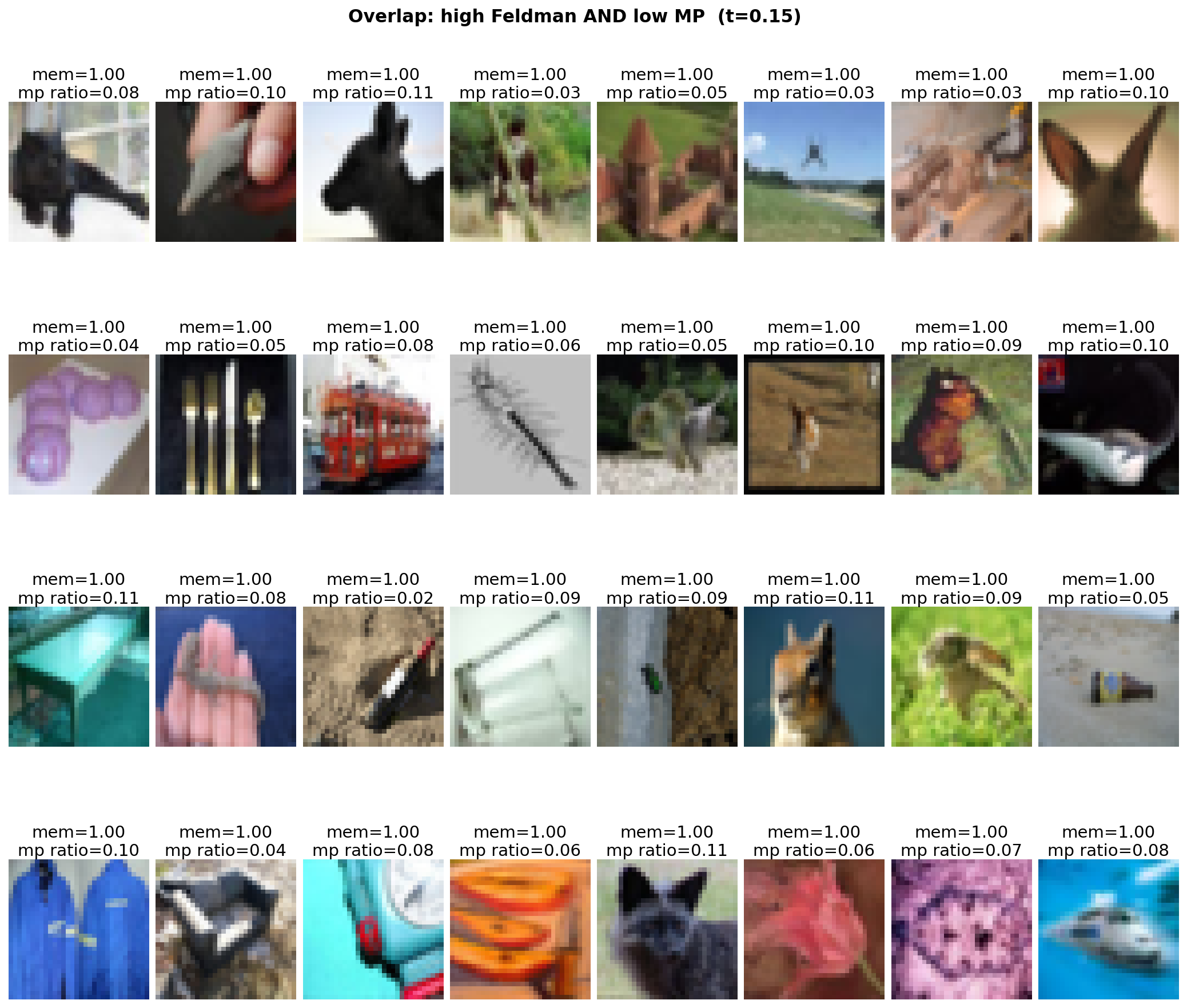}
    \caption{Training samples identified as highly memorized by \emph{both} methods:
    Feldman memorization score $= 1.00$ and MP ratio $< 0.12$ ($t{=}0.15$,
    10-seed average).
    Every sample shown has the maximum Feldman score, indicating it is individually memorized by the model, while its near-zero MP ratio reveals that its logit neighbourhood is geometrically inconsistent with the rest of its class. 
    These two signals are derived from entirely independent and different procedures yet they both find the same subset, indicating that MP ratio captures a meaningful notion of sample-level memorization.}
    \label{fig:feldman_overlap}
\end{figure}

\newpage

\begin{figure}[h]
    \centering
    \includegraphics[width=\textwidth]{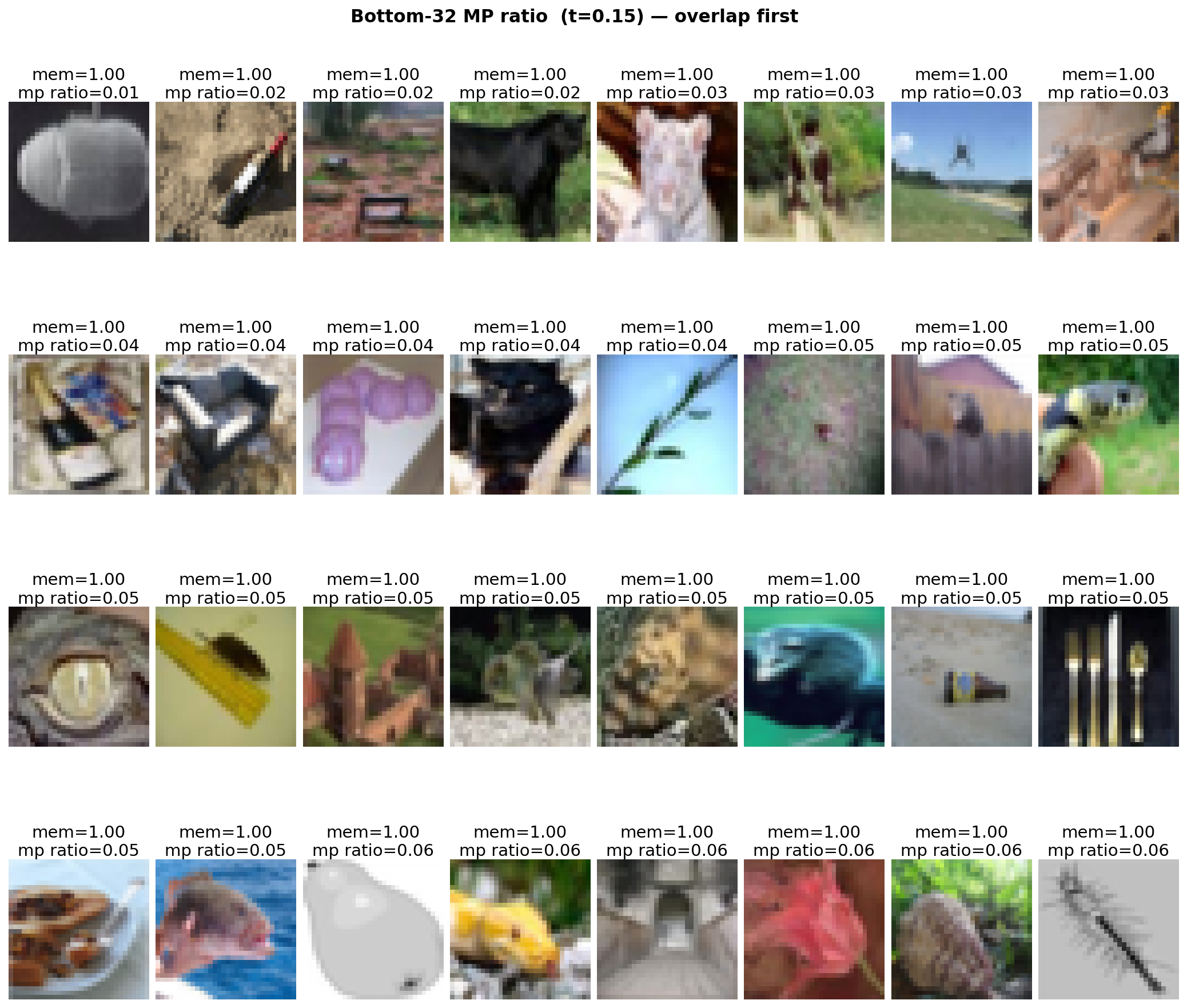}
    \caption{The 32 training samples with the lowest MP ratio ($t{=}0.15$,
    10-seed average), sorted ascending and with overlap samples shown first.
    Nearly all carry Feldman memorization score $= 1.00$, indicating that
    the samples the magnitude potential identifies as geometrically atypical
    are similar to the ones that require individual memorization
    to be correctly classified.}
    \label{fig:feldman_low_mp}
\end{figure}

\newpage

\section*{NeurIPS Paper Checklist}

\begin{enumerate}

\item {\bf Claims}
    \item[] Question: Do the main claims made in the abstract and introduction accurately reflect the paper's contributions and scope?
    \item[] Answer: \answerYes{} %
    \item[] Justification: The abstract and introduction clearly state that we introduce Magnitude Potential (MP) and Magnitude Potential ratio (MP ratio), and use it to study neural network training -- memorization, grokking, decision boundaries and relation to neural collapse. 
    \item[] Guidelines:
    \begin{itemize}
        \item The answer \answerNA{} means that the abstract and introduction do not include the claims made in the paper.
        \item The abstract and/or introduction should clearly state the claims made, including the contributions made in the paper and important assumptions and limitations. A \answerNo{} or \answerNA{} answer to this question will not be perceived well by the reviewers. 
        \item The claims made should match theoretical and experimental results, and reflect how much the results can be expected to generalize to other settings. 
        \item It is fine to include aspirational goals as motivation as long as it is clear that these goals are not attained by the paper. 
    \end{itemize}

\item {\bf Limitations}
    \item[] Question: Does the paper discuss the limitations of the work performed by the authors?
    \item[] Answer: \answerYes{} %
    \item[] Justification: We address limitations of the proposed method in a separate section of \textit{Limitations} (Section~\ref{sec:limitations}).
    \item[] Guidelines:
    \begin{itemize}
        \item The answer \answerNA{} means that the paper has no limitation while the answer \answerNo{} means that the paper has limitations, but those are not discussed in the paper. 
        \item The authors are encouraged to create a separate ``Limitations'' section in their paper.
        \item The paper should point out any strong assumptions and how robust the results are to violations of these assumptions (e.g., independence assumptions, noiseless settings, model well-specification, asymptotic approximations only holding locally). The authors should reflect on how these assumptions might be violated in practice and what the implications would be.
        \item The authors should reflect on the scope of the claims made, e.g., if the approach was only tested on a few datasets or with a few runs. In general, empirical results often depend on implicit assumptions, which should be articulated.
        \item The authors should reflect on the factors that influence the performance of the approach. For example, a facial recognition algorithm may perform poorly when image resolution is low or images are taken in low lighting. Or a speech-to-text system might not be used reliably to provide closed captions for online lectures because it fails to handle technical jargon.
        \item The authors should discuss the computational efficiency of the proposed algorithms and how they scale with dataset size.
        \item If applicable, the authors should discuss possible limitations of their approach to address problems of privacy and fairness.
        \item While the authors might fear that complete honesty about limitations might be used by reviewers as grounds for rejection, a worse outcome might be that reviewers discover limitations that aren't acknowledged in the paper. The authors should use their best judgment and recognize that individual actions in favor of transparency play an important role in developing norms that preserve the integrity of the community. Reviewers will be specifically instructed to not penalize honesty concerning limitations.
    \end{itemize}

\item {\bf Theory assumptions and proofs}
    \item[] Question: For each theoretical result, does the paper provide the full set of assumptions and a complete (and correct) proof?
    \item[] Answer: \answerYes{} %
    \item[] Justification: We clarify the assumption of our framework in the beginning of Section~\ref{sec:mp}. Further details and common cases for the assumptions to be held are discussed in Appendix~\ref{app:assumptions}.
    The full proofs are provided in the Appendix~\ref{app:proofs}.
    \item[] Guidelines:
    \begin{itemize}
        \item The answer \answerNA{} means that the paper does not include theoretical results. 
        \item All the theorems, formulas, and proofs in the paper should be numbered and cross-referenced.
        \item All assumptions should be clearly stated or referenced in the statement of any theorems.
        \item The proofs can either appear in the main paper or the supplemental material, but if they appear in the supplemental material, the authors are encouraged to provide a short proof sketch to provide intuition. 
        \item Inversely, any informal proof provided in the core of the paper should be complemented by formal proofs provided in appendix or supplemental material.
        \item Theorems and Lemmas that the proof relies upon should be properly referenced. 
    \end{itemize}

    \item {\bf Experimental result reproducibility}
    \item[] Question: Does the paper fully disclose all the information needed to reproduce the main experimental results of the paper to the extent that it affects the main claims and/or conclusions of the paper (regardless of whether the code and data are provided or not)?
    \item[] Answer: \answerYes{} %
    \item[] Justification: All required information including main hyperparameters, architectures, optimizers, data splits for each experiment is reported in the corresponding section. An anonymized implementation of the Magnitude Potential ratio, documented via a \texttt{README} covering the required reproduction instructions is provided in the supplemental material. 
    \item[] Guidelines:
    \begin{itemize}
        \item The answer \answerNA{} means that the paper does not include experiments.
        \item If the paper includes experiments, a \answerNo{} answer to this question will not be perceived well by the reviewers: Making the paper reproducible is important, regardless of whether the code and data are provided or not.
        \item If the contribution is a dataset and\slash or model, the authors should describe the steps taken to make their results reproducible or verifiable. 
        \item Depending on the contribution, reproducibility can be accomplished in various ways. For example, if the contribution is a novel architecture, describing the architecture fully might suffice, or if the contribution is a specific model and empirical evaluation, it may be necessary to either make it possible for others to replicate the model with the same dataset, or provide access to the model. In general. releasing code and data is often one good way to accomplish this, but reproducibility can also be provided via detailed instructions for how to replicate the results, access to a hosted model (e.g., in the case of a large language model), releasing of a model checkpoint, or other means that are appropriate to the research performed.
        \item While NeurIPS does not require releasing code, the conference does require all submissions to provide some reasonable avenue for reproducibility, which may depend on the nature of the contribution. For example
        \begin{enumerate}
            \item If the contribution is primarily a new algorithm, the paper should make it clear how to reproduce that algorithm.
            \item If the contribution is primarily a new model architecture, the paper should describe the architecture clearly and fully.
            \item If the contribution is a new model (e.g., a large language model), then there should either be a way to access this model for reproducing the results or a way to reproduce the model (e.g., with an open-source dataset or instructions for how to construct the dataset).
            \item We recognize that reproducibility may be tricky in some cases, in which case authors are welcome to describe the particular way they provide for reproducibility. In the case of closed-source models, it may be that access to the model is limited in some way (e.g., to registered users), but it should be possible for other researchers to have some path to reproducing or verifying the results.
        \end{enumerate}
    \end{itemize}

\item {\bf Open access to data and code}
    \item[] Question: Does the paper provide open access to the data and code, with sufficient instructions to faithfully reproduce the main experimental results, as described in supplemental material?
    \item[] Answer: \answerYes{} %
    \item[] Justification: An anonymized code repository is included in the supplemental material. All datasets used (CIFAR-10, CIFAR-100, modular-arithmetic splits, Feldman memorization scores) are publicly available and download instructions are included in the repository README.
    \item[] Guidelines:
    \begin{itemize}
        \item The answer \answerNA{} means that paper does not include experiments requiring code.
        \item Please see the NeurIPS code and data submission guidelines (\url{https://neurips.cc/public/guides/CodeSubmissionPolicy}) for more details.
        \item While we encourage the release of code and data, we understand that this might not be possible, so \answerNo{} is an acceptable answer. Papers cannot be rejected simply for not including code, unless this is central to the contribution (e.g., for a new open-source benchmark).
        \item The instructions should contain the exact command and environment needed to run to reproduce the results. See the NeurIPS code and data submission guidelines (\url{https://neurips.cc/public/guides/CodeSubmissionPolicy}) for more details.
        \item The authors should provide instructions on data access and preparation, including how to access the raw data, preprocessed data, intermediate data, and generated data, etc.
        \item The authors should provide scripts to reproduce all experimental results for the new proposed method and baselines. If only a subset of experiments are reproducible, they should state which ones are omitted from the script and why.
        \item At submission time, to preserve anonymity, the authors should release anonymized versions (if applicable).
        \item Providing as much information as possible in supplemental material (appended to the paper) is recommended, but including URLs to data and code is permitted.
    \end{itemize}

\item {\bf Experimental setting/details}
    \item[] Question: Does the paper specify all the training and test details (e.g., data splits, hyperparameters, how they were chosen, type of optimizer) necessary to understand the results?
    \item[] Answer: \answerYes{} %
    \item[] Justification: At each section with empirical results, all required information and training configurations are reported for both \textbf{Data} and \textbf{Model}. Further details are provided in the Appendix~\ref{app:experimental_setup}. Code is provided in the supplemental material. 
    \item[] Guidelines:
    \begin{itemize}
        \item The answer \answerNA{} means that the paper does not include experiments.
        \item The experimental setting should be presented in the core of the paper to a level of detail that is necessary to appreciate the results and make sense of them.
        \item The full details can be provided either with the code, in appendix, or as supplemental material.
    \end{itemize}

\item {\bf Experiment statistical significance}
    \item[] Question: Does the paper report error bars suitably and correctly defined or other appropriate information about the statistical significance of the experiments?
    \item[] Answer: \answerYes{} %
    \item[] Justification:  Most experiments are deterministic given a fixed dataset and training run, therefore error bars are not required. However, we provide results across multiple values of the scaling parameter $t$ in the Appendix corresponding to each experiment for completeness.
    For the Feldman memorization comparison, which involves random subsampling of training examples, we report per-sample standard deviation across 10 independent seeds in Figure~\ref{fig:feldman_scatter_errbar} and binned mean $\pm$1\,std curves in Figure~\ref{fig:feldman_spaghetti} (Appendix~\ref{app:per_sample_mem}).
    \item[] Guidelines:
    \begin{itemize}
        \item The answer \answerNA{} means that the paper does not include experiments.
        \item The authors should answer \answerYes{} if the results are accompanied by error bars, confidence intervals, or statistical significance tests, at least for the experiments that support the main claims of the paper.
        \item The factors of variability that the error bars are capturing should be clearly stated (for example, train/test split, initialization, random drawing of some parameter, or overall run with given experimental conditions).
        \item The method for calculating the error bars should be explained (closed form formula, call to a library function, bootstrap, etc.)
        \item The assumptions made should be given (e.g., Normally distributed errors).
        \item It should be clear whether the error bar is the standard deviation or the standard error of the mean.
        \item It is OK to report 1-sigma error bars, but one should state it. The authors should preferably report a 2-sigma error bar than state that they have a 96\% CI, if the hypothesis of Normality of errors is not verified.
        \item For asymmetric distributions, the authors should be careful not to show in tables or figures symmetric error bars that would yield results that are out of range (e.g., negative error rates).
        \item If error bars are reported in tables or plots, the authors should explain in the text how they were calculated and reference the corresponding figures or tables in the text.
    \end{itemize}

\item {\bf Experiments compute resources}
    \item[] Question: For each experiment, does the paper provide sufficient information on the computer resources (type of compute workers, memory, time of execution) needed to reproduce the experiments?
    \item[] Answer: \answerYes{}{} %
    \item[] Justification: GPU type, approximate running time, and other requirements are reported in the Appendix~\ref{app:experimental_setup}.
    \item[] Guidelines:
    \begin{itemize}
        \item The answer \answerNA{} means that the paper does not include experiments.
        \item The paper should indicate the type of compute workers CPU or GPU, internal cluster, or cloud provider, including relevant memory and storage.
        \item The paper should provide the amount of compute required for each of the individual experimental runs as well as estimate the total compute. 
        \item The paper should disclose whether the full research project required more compute than the experiments reported in the paper (e.g., preliminary or failed experiments that didn't make it into the paper). 
    \end{itemize}
    
\item {\bf Code of ethics}
    \item[] Question: Does the research conducted in the paper conform, in every respect, with the NeurIPS Code of Ethics \url{https://neurips.cc/public/EthicsGuidelines}?
    \item[] Answer: \answerYes{} %
    \item[] Justification: The research involves no human subjects, sensitive data, or dual-use risks and it fully conforms with the NeurIPS Code of Ethics.
    \item[] Guidelines:
    \begin{itemize}
        \item The answer \answerNA{} means that the authors have not reviewed the NeurIPS Code of Ethics.
        \item If the authors answer \answerNo, they should explain the special circumstances that require a deviation from the Code of Ethics.
        \item The authors should make sure to preserve anonymity (e.g., if there is a special consideration due to laws or regulations in their jurisdiction).
    \end{itemize}

\item {\bf Broader impacts}
    \item[] Question: Does the paper discuss both potential positive societal impacts and negative societal impacts of the work performed?
    \item[] Answer: \answerNA{} %
    \item[] Justification: This work is foundational research in nature. It does not involve generative models, sensitive personal data, or any direct path to negative applications.
    \item[] Guidelines:
    \begin{itemize}
        \item The answer \answerNA{} means that there is no societal impact of the work performed.
        \item If the authors answer \answerNA{} or \answerNo, they should explain why their work has no societal impact or why the paper does not address societal impact.
        \item Examples of negative societal impacts include potential malicious or unintended uses (e.g., disinformation, generating fake profiles, surveillance), fairness considerations (e.g., deployment of technologies that could make decisions that unfairly impact specific groups), privacy considerations, and security considerations.
        \item The conference expects that many papers will be foundational research and not tied to particular applications, let alone deployments. However, if there is a direct path to any negative applications, the authors should point it out. For example, it is legitimate to point out that an improvement in the quality of generative models could be used to generate Deepfakes for disinformation. On the other hand, it is not needed to point out that a generic algorithm for optimizing neural networks could enable people to train models that generate Deepfakes faster.
        \item The authors should consider possible harms that could arise when the technology is being used as intended and functioning correctly, harms that could arise when the technology is being used as intended but gives incorrect results, and harms following from (intentional or unintentional) misuse of the technology.
        \item If there are negative societal impacts, the authors could also discuss possible mitigation strategies (e.g., gated release of models, providing defenses in addition to attacks, mechanisms for monitoring misuse, mechanisms to monitor how a system learns from feedback over time, improving the efficiency and accessibility of ML).
    \end{itemize}
    
\item {\bf Safeguards}
    \item[] Question: Does the paper describe safeguards that have been put in place for responsible release of data or models that have a high risk for misuse (e.g., pre-trained language models, image generators, or scraped datasets)?
    \item[] Answer: \answerNA{} %
    \item[] Justification: No high-risk models, generative systems, or scraped datasets are released. The method is an analytical measure applied to existing, publicly available benchmarks (CIFAR-10 an d CIFAR-100) and 2D Gaussian datasets.
    \item[] Guidelines:
    \begin{itemize}
        \item The answer \answerNA{} means that the paper poses no such risks.
        \item Released models that have a high risk for misuse or dual-use should be released with necessary safeguards to allow for controlled use of the model, for example by requiring that users adhere to usage guidelines or restrictions to access the model or implementing safety filters. 
        \item Datasets that have been scraped from the Internet could pose safety risks. The authors should describe how they avoided releasing unsafe images.
        \item We recognize that providing effective safeguards is challenging, and many papers do not require this, but we encourage authors to take this into account and make a best faith effort.
    \end{itemize}

\item {\bf Licenses for existing assets}
    \item[] Question: Are the creators or original owners of assets (e.g., code, data, models), used in the paper, properly credited and are the license and terms of use explicitly mentioned and properly respected?
    \item[] Answer: \answerYes{} %
    \item[] Justification: Datasets, CIFAR-10 and CIFAR-100 are released under the MIT licence. The Feldman memorization scores~\citep{feldman2020neural} are released under CC-BY~4.0 and are used solely for comparison. All of them are properly cited and further details on assets including common libraries are their licences are listed in \texttt{LICENSES.md} in the supplemental code repository. 
    \item[] Guidelines:
    \begin{itemize}
        \item The answer \answerNA{} means that the paper does not use existing assets.
        \item The authors should cite the original paper that produced the code package or dataset.
        \item The authors should state which version of the asset is used and, if possible, include a URL.
        \item The name of the license (e.g., CC-BY 4.0) should be included for each asset.
        \item For scraped data from a particular source (e.g., website), the copyright and terms of service of that source should be provided.
        \item If assets are released, the license, copyright information, and terms of use in the package should be provided. For popular datasets, \url{paperswithcode.com/datasets} has curated licenses for some datasets. Their licensing guide can help determine the license of a dataset.
        \item For existing datasets that are re-packaged, both the original license and the license of the derived asset (if it has changed) should be provided.
        \item If this information is not available online, the authors are encouraged to reach out to the asset's creators.
    \end{itemize}

\item {\bf New assets}
    \item[] Question: Are new assets introduced in the paper well documented and is the documentation provided alongside the assets?
    \item[] Answer: \answerYes{} %
    \item[] Justification: We include an anonymized implementation of Magnitude Potential ratio as supplemental material, documented via a \texttt{README} covering the required reproduction instructions. No new datasets are introduced. The license has been omitted from the supplemental material to preserve anonymity, and the full MIT-licensed repository will be made public upon acceptance.
    \item[] Guidelines:
    \begin{itemize}
        \item The answer \answerNA{} means that the paper does not release new assets.
        \item Researchers should communicate the details of the dataset\slash code\slash model as part of their submissions via structured templates. This includes details about training, license, limitations, etc. 
        \item The paper should discuss whether and how consent was obtained from people whose asset is used.
        \item At submission time, remember to anonymize your assets (if applicable). You can either create an anonymized URL or include an anonymized zip file.
    \end{itemize}

\item {\bf Crowdsourcing and research with human subjects}
    \item[] Question: For crowdsourcing experiments and research with human subjects, does the paper include the full text of instructions given to participants and screenshots, if applicable, as well as details about compensation (if any)? 
    \item[] Answer: \answerNA{} %
    \item[] Justification: The paper does not involve crowdsourcing nor research with human subjects.
    \item[] Guidelines:
    \begin{itemize}
        \item The answer \answerNA{} means that the paper does not involve crowdsourcing nor research with human subjects.
        \item Including this information in the supplemental material is fine, but if the main contribution of the paper involves human subjects, then as much detail as possible should be included in the main paper. 
        \item According to the NeurIPS Code of Ethics, workers involved in data collection, curation, or other labor should be paid at least the minimum wage in the country of the data collector. 
    \end{itemize}

\item {\bf Institutional review board (IRB) approvals or equivalent for research with human subjects}
    \item[] Question: Does the paper describe potential risks incurred by study participants, whether such risks were disclosed to the subjects, and whether Institutional Review Board (IRB) approvals (or an equivalent approval/review based on the requirements of your country or institution) were obtained?
    \item[] Answer: \answerNA{} %
    \item[] Justification: The paper does not involve crowdsourcing nor research with human subjects.
    \item[] Guidelines:
    \begin{itemize}
        \item The answer \answerNA{} means that the paper does not involve crowdsourcing nor research with human subjects.
        \item Depending on the country in which research is conducted, IRB approval (or equivalent) may be required for any human subjects research. If you obtained IRB approval, you should clearly state this in the paper. 
        \item We recognize that the procedures for this may vary significantly between institutions and locations, and we expect authors to adhere to the NeurIPS Code of Ethics and the guidelines for their institution. 
        \item For initial submissions, do not include any information that would break anonymity (if applicable), such as the institution conducting the review.
    \end{itemize}

\item {\bf Declaration of LLM usage}
    \item[] Question: Does the paper describe the usage of LLMs if it is an important, original, or non-standard component of the core methods in this research? Note that if the LLM is used only for writing, editing, or formatting purposes and does \emph{not} impact the core methodology, scientific rigor, or originality of the research, declaration is not required.
    \item[] Answer: \answerNA{} %
    \item[] Justification: LLMs were used solely for writing and editing assistance and do not form any part of the core contributions of this work.
    \item[] Guidelines:
    \begin{itemize}
        \item The answer \answerNA{} means that the core method development in this research does not involve LLMs as any important, original, or non-standard components.
        \item Please refer to our LLM policy in the NeurIPS handbook for what should or should not be described.
    \end{itemize}

\end{enumerate}

\end{document}